\documentclass{article} % For LaTeX2e
\usepackage{iclr2026_conference,times}

\usepackage{amsmath,amsfonts,bm}

\def\eqref#1{equation~\ref{#1}}
\def\1{\bm{1}}

\DeclareMathAlphabet{\mathsfit}{\encodingdefault}{\sfdefault}{m}{sl}
\SetMathAlphabet{\mathsfit}{bold}{\encodingdefault}{\sfdefault}{bx}{n}

\declaretheorem[name=Theorem]{thm}
\declaretheorem[name=Lemma]{lem}

\declaretheorem[name=Assumption]{ass}

\usepackage{hyperref}
\usepackage{url}

\newcommand{\n}[1]{{\color{black}#1}}
\newcommand{\ch}[1]{{\color{black}#1}}

\title{MIRA: Memory-Integrated Reinforcement Learning Agent with Limited LLM Guidance}

\author{  
Narjes Nourzad \\
  University of Southern California \\
  Electrical and Computer Engineering  \\
  Los Angeles, CA 90007, USA \\
  \texttt{nourzad@usc.edu} \\
\And
  Carlee Joe-Wong \\
  Carnegie Mellon University \\
  Electrical and Computer Engineering  \\ 
  Pittsburgh, PA 15213, USA \\
  \texttt{cjoewong@andrew.cmu.edu} \\
}

\iclrfinalcopy % Uncomment for camera-ready version, but NOT for submission.

\definecolor{mypurple}{HTML}{BE9FD5}
\definecolor{myblue}{HTML}{ABC5F2}
\definecolor{mygreen}{HTML}{D0DBC4}
\definecolor{mygreen3}{HTML}{dfe8e0} 
\definecolor{mypurple2}{HTML}{d5bddd} 
\definecolor{mygreen2}{HTML}{8EAD91}
\definecolor{myblue2}{HTML}{cfe4ff}
\definecolor{Q}{HTML}{472c78}

\definecolor{mpink}{HTML}{E2B3C3}      
\definecolor{mypink}{HTML}{E2B3C3}
\definecolor{myellow}{HTML}{e2d66f}   % Muted yellow
\definecolor{mgray}{HTML}{7a7a7a}     % Deeper gray
\definecolor{mpurple}{HTML}{BE9FD5}   % Muted purple
\definecolor{mgreen}{HTML}{80A970}    % Muted green
\definecolor{mblue}{HTML}{86addc}     % Muted blue
\definecolor{mgblue}{HTML}{AAADBA}

\hypersetup{
  colorlinks=true,     % use colored text instead of boxes
  linkcolor=black,     % section links
  citecolor=mypurple, % citation links
  urlcolor=black       % urls
}

\begin{document}

\maketitle

\begin{abstract}
Reinforcement learning (RL) agents often face high sample complexity in sparse or delayed reward settings, due to limited prior knowledge.
Conversely, large language models (LLMs) can provide subgoal structures, plausible trajectories, and abstract priors that support early learning.
Yet heavy reliance on LLMs introduces scalability issues and risks dependence on unreliable signals, motivating ongoing efforts to integrate LLM guidance without compromising RL’s autonomy.
We propose MIRA (\underline{M}emory-\underline{I}ntegrated \underline{R}einforcement Learning \underline{A}gent), \ch{which incorporates a structured, evolving \textit{memory graph} to guide early learning.}
This graph stores decision-relevant information, such as trajectory segments and subgoal decompositions, and is co-constructed from the agent’s high-return experiences and LLM outputs, \ch{amortizing LLM queries into a persistent memory instead of relying on continuous real-time supervision.}
From this structure, we derive a \textit{utility} signal that \ch{softly adjusts} advantage estimation to refine policy updates without altering the underlying reward function.
As training progresses, the agent’s policy surpasses the initial LLM-derived priors, and the utility term decays, leaving long-term convergence guarantees intact.
\ch{We show theoretically that this utility-based shaping improves early-stage learning in sparse-reward settings.}
Empirically, MIRA outperforms RL baselines and \ch{reaches returns comparable to methods that rely on frequent LLM supervision,} while requiring substantially fewer online LLM queries\footnote{Project webpage : \url{https://narjesno.github.io/MIRA/}}.

% Reinforcement learning (RL) agents often face high sample complexity in sparse or delayed reward settings due to limited prior knowledge. Conversely, large language models (LLMs) can provide subgoal structures, plausible trajectories, and abstract priors that support early learning. Yet heavy reliance on LLMs introduces scalability issues and risks dependence on unreliable signals, motivating ongoing efforts to integrate LLM guidance without compromising RL autonomy.
% We propose *MIRA* (Memory-Integrated Reinforcement Learning Agent), which incorporates a structured, evolving memory graph to guide early learning. This graph stores decision-relevant information, such as trajectory segments and subgoal decompositions, and is co-constructed from the agent’s high-return experiences and LLM outputs, amortizing LLM queries across a persistent memory rather than relying on continuous, real-time supervision. From this structure, we derive a utility signal that softly adjusts advantage estimation to refine policy updates without altering the underlying reward function. As training progresses, the agent’s policy surpasses the initial LLM-derived priors, and the utility term decays, leaving long-term convergence guarantees intact. We show theoretically that this utility-based shaping improves early-stage learning in sparse-reward settings. Empirically, MIRA outperforms RL baselines and achieves returns comparable to those of methods that rely on frequent LLM supervision, while requiring substantially fewer online LLM queries.

\end{abstract}

\section{Introduction}

Reinforcement learning (RL) models sequential decision-making as interactions with an environment and learns behavior from reward-driven feedback.
RL has achieved strong results in domains including robotic manipulation, dynamic scheduling, and autonomous planning~\citep{nourzadactor, liu2024integrating, luo2024precise}.  
However, these advances often rely on environments with dense, readily accessible rewards. 
In many tasks, rewards are sparse or delayed, appearing only when specific goals are reached or several steps after the action unfolds.
These weak or infrequent rewards obscure which past actions influenced the outcome, making it difficult to credit the eventual reward appropriately~\citep{velu2023hindsight}.
This uncertainty weakens the gradient signal, leaving policy updates underinformed. Thus, agents become highly data-hungry and require large numbers of interactions to learn useful behaviors~\citep{devidze2022exploration}.
These challenges intensify under partial observability, as agents must generalize from limited state information and often struggle early in training~\citep{hausknecht2015deep, kurniawati2022partially}.
In such settings, random exploration rarely uncovers informative trajectories, leading to slow convergence and high variance in returns.

Large language models (LLMs) provide a complementary source of prior knowledge, especially in environments where rewards are sparse, feedback is delayed, and observations are partial.
They have demonstrated capabilities in reasoning over abstract goals, interpreting high-level intent, and leveraging broad prior knowledge~\citep{jimenez2023swe, xu2024theagentcompany}. These properties make them natural candidates for providing structured guidance for RL agents~\citep{schoepp2025evolving}. A growing body of work has explored how pretrained LLMs can support RL to improve sample efficiency. One line of research positions the LLM as an implicit or explicit reward model, either estimating reward signals from environment descriptions or generating code to define reward functions~\citep{ma2025catching, kwon2023reward, fan2022minedojo, rocamonde2023vision, bhambri2024extracting, xie2024text2reward}. Another line leverages LLMs to generate high-level plans, policy sketches, or step-by-step guidance during training~\citep{du2023guiding, hu2023language, dasgupta2023collaborating, wang2023voyager, zhou2023large}. A third direction focuses on task-level guidance such as subgoal decomposition, curriculum design, or goal interpretation from natural language~\citep{wang2024dart, ma2023eureka, shinn2023reflexion}. Additional related approaches appear in Appendix~\ref{rw}.

{\textsc{Research challenges.}}
The existing approaches, although promising, typically require \textit{frequent (often per-step) LLM supervision}, making the agent's performance heavily reliant on LLM inference, which introduces several difficulties. 
First, it can interfere with the RL learning signal~\citep{zhou2023large}, \ch{limiting autonomous decision-making and reducing the agent’s ability to generalize or adapt if the LLM becomes unavailable.}
Second, since LLMs cannot interact directly with the environment or gather real-time feedback, full reliance on their instructions is suboptimal~\citep{qu2024choices, gao2024designing, cao2024survey} and dilutes environment-driven feedback. Indeed, LLMs carry fundamental risks such as hallucinated outputs, prompt sensitivity, and limited grounding in physical environments~\citep{ji2023towards, tonmoy2024comprehensive, bang2025hallulens}, making their outputs potentially unreliable. 
Frequent queries also raise scalability concerns due to computational cost and latency~\citep{zhou2024survey, wan2023efficient}. 
Still, relying solely on RL ignores the structured knowledge encoded in LLMs that could accelerate learning or shape behavior in meaningful ways.
\textit{Thus, the fundamental challenge \ch{lies in incorporating LLM guidance in a way that leverages its complementary benefits while preserving the optimization dynamics that make RL effective.}}

\begin{figure}
    \centering
    \includegraphics[width=1\linewidth]{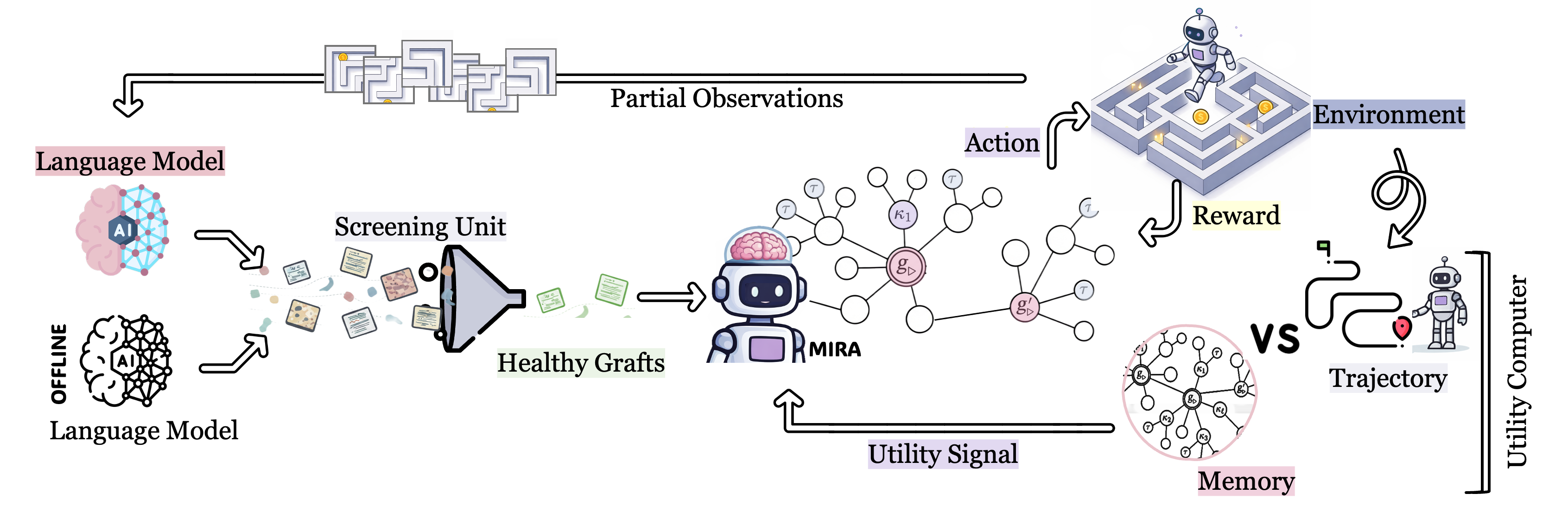}
    \caption{\textcolor{mgray}{Offline} priors and \textcolor{mpink}{online} LLM suggestions are filtered by a \textcolor{mgblue}{screening unit} before being incorporated into the memory graph as \textcolor{mgreen}{\emph{healthy grafts}}. \textcolor{mpurple}{MIRA} agent acts under partial observations, interacting with the environment. A utility module evaluates trajectory rollouts against the evolving \textcolor{mpink}{memory graph}, producing a utility signal that shapes advantage estimation and policy updates.
}
  \label{fig:mira}
\end{figure}
{\textsc{Our contributions.}} In this work, we propose MIRA (\underline{M}emory-\underline{I}ntegrated \underline{R}einforcement Learning \underline{A}gent), a method that integrates LLM-derived guidance into reinforcement learning through a structured \textit{memory graph}. 
The memory graph provides an evolving representation of task-relevant information, co-constructed from the agent’s experience and LLM outputs. 
% \begin{wrapfigure}{r}{0.5\textwidth}
% \vspace{-8pt}
%   \centering
%   \includegraphics[width=\linewidth]{figs/main_aura_sys.png}
%   \caption{Overview of MIRA. \textcolor{mgray}{Offline} priors and \textcolor{mpink}{Online} suggestions from LLMs pass through a \textcolor{mgreen}{Screening Unit} before populating the memory graph with \textit{healthy grafts}. \textcolor{mpurple}{MIRA} agent interacts with the environment, while a \textcolor{mblue}{Utility} module evaluates rollouts against the evolving \textcolor{mypurple2}{memory graph} to shape advantage estimates.}
%   \label{fig:mira}
%   \vspace{-10pt}
% \end{wrapfigure}
% Offline priors, pre-processed over environments or goals, initialize the structure, while infrequent online queries conditioned on batches of partial environment observations refine it during training.
\ch{Offline priors initialize the structure, while infrequent online queries on batches of partial observations can further refine it during training.}
Nodes represent decision-relevant context, such as trajectory segments, and edges encode the hierarchical decomposition linking goals to their subgoals.
% The graph is designed to remain compact, incurring minimal overhead compared to standard replay buffers~\citep{schaul2015prioritized}, yet it provides a persistent structure that can be reused across training. 
% This persistence means that LLM outputs \textit{need not be queried repeatedly}; instead, they are stored and organized as part of a long-term structured knowledge. 
The graph is designed to remain compact, adding minimal overhead relative to standard replay buffers~\citep{schaul2015prioritized}. 
The memory graph allows the agent to organize and reuse information \textit{without repeated LLM queries}, providing a persistent source of structured knowledge.

O\ch{ver time, the agent validates, revises, and extends the structure based on its own experience, improving beyond what LLM guidance alone can provide and filtering out mistaken online suggestions.}
The resulting graph limits dependence on real-time LLM access, alleviating concerns about latency, query cost, and scalability. 
To integrate the LLM-derived information into learning, we derive a \textit{utility signal} from the memory graph and use it to softly shape advantage estimates in each RL iteration. 
% This signal reinforces learning when aligned with reward-driven gradients and gradually corrects miscalibrated updates when the critic is poorly informed.
\ch{This signal guides early rollouts by reinforcing reward-driven gradients when aligned and moderating updates that arise from an inaccurate critic, helping the agent explore more effectively in sparse-reward settings without overriding the environment’s feedback.}
{Theoretically, we show that the utility term accelerates early learning. 
As the policy improves and surpasses the usefulness of LLM-derived guidance, the shaping influence fades, ensuring convergence in the long-horizon limit.
We empirically evaluate the effectiveness, sample efficiency, and overhead of incorporating LLM guidance across multiple benchmark environments. }
\\
Our contributions are summarized as follows:
% \vspace{-5mm}
{\begin{itemize}
    \ch{\item \textsc{\textbf{A memory-integrated framework for LLM guidance:}}}
    We propose \textbf{MIRA}, a reinforcement learning agent that integrates LLM-derived guidance through a memory graph co-constructed \ch{from agent experience and offline or infrequent online LLM outputs. The graph evolves throughout training, reducing reliance on real-time LLM queries.}

   \ch{ \item \textsc{\textbf{Utility-based advantage shaping:}}
    We introduce \textbf{utility-shaped advantage estimation}, which incorporates graph-derived utility into advantage computation} without architectural changes and is compatible with any advantage-based policy-gradient method.

    \ch{\item \textsc{\textbf{Convergence-compatible shaping:}}
    We provide \textbf{theoretical guarantees} showing that by decaying the shaping influence as the policy improves}, we preserve long-horizon convergence properties of Proximal Policy Optimization (PPO)~\citep{schulman2017ppo} \ch{and correct any inaccuracies in LLM outputs}.

   \ch{ \item \textsc{\textbf{Empirical validation across benchmarks:}}}
    We demonstrate \textbf{empirically} that MIRA improves sample efficiency over RL and hierarchical baselines and reaches final performance comparable to methods requiring continuous LLM  supervision~\citep{zhou2023large, bhambri2024extracting}, while using far fewer online queries.
\end{itemize}}

% \begin{itemize}
% \item We propose \textbf{MIRA}, a reinforcement learning agent that integrates LLM-derived guidance through a memory graph co-constructed from agent experience and LLM knowledge. This graph evolves throughout training, combining offline priors with infrequent online queries conditioned on batches of partial observations from the environment.

% \item We develop \textbf{adaptive advantage shaping}, which incorporates utility derived from the memory graph directly into advantage estimates. This mechanism requires no architectural changes and applies to any advantage-based policy-gradient algorithm.

% \item We provide \textbf{theoretical analysis} showing that the shaping mechanism preserves the convergence guarantees of Proximal Policy Optimization (PPO)~\citep{schulman2017ppo} in long horizon limit by augmenting, rather than overriding the optimization dynamics.

% \item We demonstrate \textbf{empirically} that MIRA improves sample efficiency over RL and HRL baselines, and achieves competitive final returns with far fewer LLM queries than methods based on continuous supervision.

% \end{itemize}

The remainder of this paper is organized as follows. Section~\ref{meth} details MIRA’s architecture; Sections~\ref{setup} and~\ref{res} present experimental setup and results across multiple benchmarks; and Section~\ref{conc} concludes with a discussion of our findings and possible directions for future work.

\section{Methodology} \label{meth}
We now describe the design of MIRA, whose development is guided by two desiderata: \textsc{(i)} improve early learning by incorporating task-relevant priors from an LLM,
\textsc{(ii)} minimize reliance on continuous real-time LLM supervision to ensure scalability and maintain autonomous policy learning.
MIRA is built on the standard policy-gradient formulation for reinforcement learning (Appendix~\ref{bck}).

\subsection{Memory Graph Design} \label{graph}
The agent maintains an evolving memory graph that organizes information drawn from both LLM suggestions and agent rollouts. Nodes of the graph represent decision-relevant context, 
\begin{wrapfigure}{r}{0.38\textwidth}
\vspace{-3mm}
   \centering
\includegraphics[width=.8\linewidth]{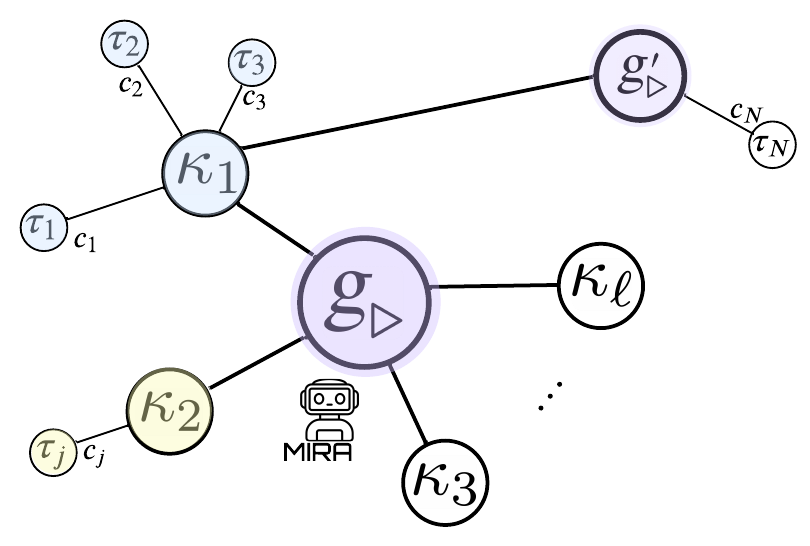}
\caption{\n{MIRA’s evolving memory graph. Trajectory segments $\tau_j$ are grouped under subgoal nodes $\kappa_\ell$. Subgoals can be shared across multiple final goals, enabling reuse of common behaviors.}}
    \label{fig:graph} 
    \vspace{-5mm}
\end{wrapfigure}
and edges encode the hierarchical decomposition of goals into subgoals as provided by the LLM \n{(Figure~\ref{fig:graph})}.
This structure can be expressed as
\begin{equation}
\mathcal{G}= 
\Bigl\{ \big((o, a)_{\tau_j}, \zeta_j, \hat r_j\big)_{c_{j}} \Bigr\}_{j=1}^N 
\;\cup\; \bigl\{ \kappa_\ell \bigr\}_{\ell=1}^L 
\;\cup\; \{ \textsl{g}_\triangleright \}. 
\end{equation}
Each trajectory node $j$ consists of a partial observation $o_{\tau_j}$ and an action $a_{\tau_j}$. 
It is also associated with a goal term $\zeta_j \in \{\textsl{g}_j, \kappa_{\ell}^{\textsl{g}_j}\}$ indicating either a final goal $(\textsl{g}_j)$ or an abstract subgoal $(\kappa_{\ell}^{\textsl{g}_j})$ that the trajectory is intended to complete. 
In addition, the node stores an estimated (sub)goal reward $\hat r_j$ for the action sequence, reflecting progress toward completing its associated subgoal, and a confidence score $c_j$ derived from the LLM-derived statistics.
The second set of nodes $\{\kappa_\ell\}_{\ell=1}^L$ represents subgoals $\kappa_\ell$ provided by the LLM from the environment description. 
The final term $\{\textsl{g}_\triangleright\}$ denotes the agent’s target goal(s).

The graph is initialized with offline LLM priors and evolves as training progresses. 
Nodes are added or updated when the agent \n{produces trajectory segments that either introduce a novel segment for a known (sub)goal or achieve a higher estimated return than the existing entry for the same (sub)goal.}
Existing nodes can also be updated when the agent’s experience strengthens entries that were initially derived from low-confidence LLM outputs.
Online LLM suggestions may also be added as new nodes when available, provided they pass screening; \n{an online query is triggered only when the rollout utility signal remains near zero for several consecutive episodes, indicating that the current graph offers no helpful guidance.  In such cases, the current policy is either exploring regions unsupported by the memory graph or that the graph itself is missing useful segments. } 
Nodes are pruned when they remain unused for a fixed horizon of episodes. \n{Each node maintains an access counter updated whenever it contributes to the utility; nodes whose counters do not change within this window are removed}, reflecting reduced relevance to recent rollouts.
Offline LLM nodes, though generally stable, may also be removed when rendered obsolete. 
\n{Additional details about graph construction and environment-specific examples are given in Appendix~\ref{graph_detail}.}
\n{This process keeps the graph compact and adaptive, while ensuring that any low-quality or misleading prior segments naturally fade as higher-return agent-generated trajectories replace them or lead to their pruning.}

\subsection{Offline and Online Guidance} \label{roles}
MIRA incorporates two complementary forms of LLM guidance, accessed either \textit{offline} prior to training or \textit{online} during training.
Offline outputs are \ch{generated using full access to the task description, providing trajectory segments and subgoal decompositions that initialize the memory graph with structured priors. }
Offline nodes accelerate early exploration and remain a persistent baseline source of guidance that complements the adaptive updates introduced by online LLM queries.

Online suggestions are incorporated during training when the agent fails to obtain useful guidance \ch{(i.e., when the rollout utility remains near zero)} from its memory graph for several consecutive episodes.
The LLM is constrained to the same partial observability as the agent and, when triggered, may return plans that correspond to short trajectories.
Alternatively, it may provide control signals that bias the action preferences over an extended horizon until the current task segment is completed.
All online outputs are passed through the \textit{Screening Unit}, which discards low-confidence suggestions.
Accepted plans are grafted into the memory graph as new trajectory segments, while accepted control signals bias the policy through soft logit injection, i.e., adding a bounded penalty to the logits of discouraged actions. \n{This penalty induces only a soft preference before the softmax and cannot collapse the action distribution. PPO’s clipped objective controls the update size, ensuring that the injected bias functions as lightweight guidance that the critic can override when it strongly disagrees with the value estimates}~\citep{biza2021action}.

\textbf{\textsc{Screening Unit.}}
To ensure reliability, online outputs are passed through a lightweight \textit{Screening Unit} designed to reduce hallucinations and reasoning failures~\citep{ji2023survey, bubeck2023sparks, wang2022self, zhao2021calibrate}. 
Confidence is estimated in two complementary ways.
When token-level likelihoods are available, \n{we compute the geometric mean of the per-token probabilities of the completion as a confidence measure}.
When such likelihoods are unavailable or incomplete (e.g., only top-k likelihoods are provided), we instead estimate confidence by sampling multiple independent completions \n{and measuring agreement across them. A majority-consistency test is then applied to retain only those outputs that appear reliably across the sampled completions.}
Suggestions that fail to meet a fixed threshold under either criterion are discarded. 
While this procedure does not eliminate all high-confidence errors, it serves as an effective filter that reduces the risk of hallucinated or low-quality outputs. 
The screened outputs, referred to as \textit{healthy grafts} in Figure~\ref{fig:mira}, are incorporated into the memory graph as new nodes to further help the policy learning.

% {Together, offline priors and online grafts allow MIRA to combine stable, precomputed knowledge with adaptive updates, reducing dependence on continuous supervision while maintaining the benefits of structured LLM guidance. }

\subsection{Utility Signal Computation} \label{utility}
\ch{Utility is defined at the level of individual state–action pairs and is computed from the same rollouts used for advantage estimation under the current policy.}
Each state-action in the trajectory $\tau = \{(o_t, a_t)\}_{t=1}^T$ is matched against the corresponding state–action pairs  $(o_{t'}, a_{t'})$ in the stored trajectory $\tau_m$. 
The appropriate memory node $m$ is selected based on the environment instance (e.g., the seed-specific layout) in that training iteration. We then compute the utility for each pair $t$ as:
\begin{equation} 
\label{utility_eq}
       U_t \doteq  c_m \cdot \hat{r}_{m} \cdot \rho(\textsl{g}_\triangleright, \zeta_{m}) \cdot  \mathcal{s}\big((o_t,a_t), (o_{t'},a_{t'})_{\tau_m}\big).
\end{equation}
\ch{Steps that do not match any stored trajectory segment receive zero utility.}
The similarity function $ \mathcal{s}(\cdot, \cdot)$ measures how closely the agent’s behavior aligns with the stored trajectory. 
It incorporates both action agreement and spatial consistency, such as overlap in grid positions or directional alignment in tabular settings \n{(Algorithm~\ref{alg:sim})}.  
To capture semantic structure, the raw similarity score is weighted by a goal-alignment factor $\rho(\cdot,\cdot)$. \n{Each subgoal description specifies a target object or region and a high-level action applied to it; simple rule-based parsing yields a paired entity and action-phase token. } The Jaccard similarity between the entity–phase token sets extracted from the agent’s target subgoal and each memory entry defines $\rho$. This weighting increases the influence of memory entries that share underlying entities or action phases with the target subgoal and downweights matches corresponding to unrelated parts of the task \n{(Algorithm~\ref{alg:jaccard})}.
\ch{Thus, a transition contributes to utility only when both its behavioral similarity and its semantic alignment with the relevant subgoal are high.}
Finally, the score is modulated by the confidence $c_m$ and estimated reward $\hat r_m$ attached to the memory node (Algorithm~\ref{alg:util}). 
\n{This formulation also helps limit the influence of incorrect LLM guidance that may have passed screening. When an LLM suggestion is inaccurate, its influence naturally diminishes: such segments typically contribute little utility since their estimated reward is low or their similarity score remains small, reflecting weak positional alignment or inconsistent action patterns between the agent’s rollout and the stored LLM-derived segment. Further details on the utility computation are presented in Appendix~\ref{utility_detail}.}

\subsection{Adaptive Advantage Shaping} \label{shaping}

We incorporate memory-derived utility into the policy update by augmenting the standard advantage term. 
Algorithm~\ref{alg:ppo_ours} outlines the shaped PPO update.  At iteration $k$, trajectories $\mathcal{D}_k = \{(s_t,a_t,r_t)\}$ are collected under the policy $\pi_{\theta_k}$.  The rollout batch is split into minibatches $\mathcal{B}$ for multiple gradient steps. The likelihood ratio $r_t$ compares new and old policies, and the clip parameter $\varepsilon_k$ constrains $r_t$ within $(1 \pm \varepsilon_k)$ as a soft trust region.

The advantage function in policy gradient methods, denoted by $A_t$ at a given time $t$, quantifies how favorable an action $a_t$ is relative to the average action at state $s_t$.  It drives learning by reinforcing actions that have higher-than-expected returns and suppressing those that fall short. 
\begin{wrapfigure}{r}{0.5\textwidth}  
\vspace{-22pt}  
\begin{minipage}{\linewidth}
\begin{algorithm}[H]
\caption{Shaped PPO actor 
(\textcolor{mpurple}{changes})}
\label{alg:ppo_ours}
\begin{algorithmic}
\FOR{$k = 0,1,\dots$}
\STATE Collect  $\mathcal{D}_k = \{(s_t, a_t, r_t)\}$ using $\pi_{\theta_k}$
\STATE Compute ${A}_t$ and \textcolor{mpurple}{$U_t$} from rollouts \label{line:adv_shaping}
\STATE ${\textcolor{mpurple}{\tilde{A}_t}}= \textcolor{mpurple}{\eta_t}{A}_t + \textcolor{mpurple}{\xi_t U_t}$
\FOR{$epoch = 1$ to $K$}
    \FOR{minibatch $\mathcal{B} \subset \mathcal{D}_k$}
        \STATE $r_t (\theta) = {\pi_\theta(a_t|s_t)}/{\pi_{\theta_k}(a_t|s_t)}$
        \STATE \textcolor{mpurple}{$\mathcal L^{\text{shaped}}(\pi_\theta)$} $= \mathbb{E}\left[\min(r_t,1\!\pm\!\varepsilon_k){\textcolor{mpurple}{\tilde{A}_t}} \right]$
        \STATE $\theta \leftarrow \theta + \alpha_\theta \nabla_\theta$ \textcolor{mpurple}{$\mathcal L^{\text{shaped}}(\pi_\theta)$}
    \ENDFOR
\ENDFOR
\ENDFOR
\end{algorithmic}
\end{algorithm}
\vspace{-25pt}
\end{minipage}
\end{wrapfigure}
However, during early training the critic is poorly calibrated due to limited exploration, often producing nearly uniform value estimates across actions~\citep{henderson2018deep}.
As a result, the estimated advantages $A_t$ provide weak learning signals, even when the agent is following behavior that is meaningfully directed toward the task.
This issue is particularly pronounced in sparse-reward settings or tasks with delayed feedback,  where the critic lacks sufficient signal to distinguish between promising and unproductive behaviors.  
In such cases, the estimated advantage tends to be near-zero or highly noisy for most timesteps, especially early in training (Figure~\ref{fig:early}).

To address this, we introduce a shaped advantage as:
\begin{equation}
\tilde{A}_t = \eta_t A_t + \xi_t U_t,  \quad
    0 < \eta_t \leq 1,\; \xi_t \leq \delta \eta_t,\; \delta \in [0,1),\; 
\lim_{t \to \infty} \eta_t = 1,\; \lim_{t \to \infty} \xi_t = 0.
\end{equation}
This formulation preserves the fundamental role of the advantage function, while refining it with utility-based guidance. 
\ch{It forms a cooperative process between critic predictions and the memory-derived utility.}
The critic provides an estimate based on learned reward prediction and bootstrapping, while the utility term injects an inductive bias derived from language-guided priors.
Together, they form a joint estimator in which each component compensates for the other’s limitations without distorting policy optimization.
When the critic signal is weak due to insufficient value discrimination, the resulting gradients are uninformative and impair the agent’s ability to bootstrap from sparse or delayed rewards.
The utility term provides additional directional guidance aligned with task objectives, accelerating learning by compensating for weak or flat gradients \n{(Figure~\ref{fig:early}, Appendix~\ref{ret_detail})}. 

As training progresses and the policy becomes more accurate, the critic’s advantage estimates $A_t$ become more reliable.
\n{Accordingly, the shaping weight $\xi_t$ is annealed to reduce the direct influence of the utility term and $\eta_t$ is ramped toward 1 over training.
Since the LLM-derived signals can be imperfect, annealing $\xi_t$ prevents inaccuracies in those signals from being preserved in the asymptotic regime. This ensures that the final policy is optimized with respect to the true reward function $\mathcal{R}$ and remains consistent with PPO’s stability guarantees. Early in training, however, $\xi_t$ remains large enough for the utility to accelerate exploration using the LLM’s possibly imperfect, but still useful, prior knowledge. 
As $\xi_t$ decays, any suboptimalities in these priors are naturally learned away, yielding the benefits of shaping during the sparse-reward phase, without biasing long-run behavior} or altering the policy or critic structure. See remark~\ref{bv} for more explanation.

Before turning to experiments,  we provide an interpretive perspective on how adaptive advantage shaping affects optimization in sparse-reward regimes and in Appendix~{} establish that the proposed shaping mechanism preserves the policy improvement property of PPO under standard boundedness and scaling assumptions, which we formally enumerate in Appendix~\ref{assu}.  More broadly, the method remains compatible with policy gradient algorithms that relies on advantage estimation, offering a general mechanism for integrating language-derived priors into RL.

\begin{thm}[Non-Vanishing Updates in Sparse-Reward Regimes]
~\label{thm:non_vanishing}
Define the shaped surrogate $\mathcal{L}^{{shaped}}(\theta)
\doteq
\mathbb{E}\!\left[
\nabla_\theta \log \pi_\theta(a_t|s_t)\, \tilde A_t
\right]$, and the PPO surrogate $\mathcal{L}^{{ppo}}(\pi) = 
\mathbb{E}\!\left[
\nabla_\theta \log \pi_\theta(a_t|s_t)\, A_t
\right].$ Consider a training iteration $k$ such that the expected magnitude of the PPO
advantage is small, i.e., $\mathbb{E}[|A_t|] \le \varepsilon_A$ for some
$\varepsilon_A \approx 0$.
Under Assumptions~\ref{ass:bounded}--\ref{ass:tr}, the expected norm of the
shaped PPO policy update satisfies
$$\bigl\| \mathcal{L}_k^{\mathrm{shaped}} \bigr\|
\;\ge\;
\xi_k \, \bigl\| \mathcal{L}_k^{U} \bigr\| - O(\varepsilon_A),$$
where
$\mathcal{L}_k^{U}
\doteq
\mathbb{E}\!\left[
\nabla_\theta \log \pi_\theta(a_t|s_t)\, U_t
\right].$

\end{thm}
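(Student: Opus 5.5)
The plan is to decompose the shaped surrogate linearly, using the definition $\tilde A_t = \eta_t A_t + \xi_t U_t$ from the adaptive advantage shaping rule. At iteration $k$ the schedules are treated as deterministic scalars $\eta_k,\xi_k$ (constant over the batch). Linearity of expectation then gives
\begin{equation*}
\mathcal{L}_k^{\mathrm{shaped}} = \eta_k\, \mathcal{L}_k^{\mathrm{ppo}} + \xi_k\, \mathcal{L}_k^{U}.
\end{equation*}
The reverse triangle inequality yields
\begin{equation*}
\bigl\|\mathcal{L}_k^{\mathrm{shaped}}\bigr\| \ge \xi_k \bigl\|\mathcal{L}_k^{U}\bigr\| - \eta_k \bigl\|\mathcal{L}_k^{\mathrm{ppo}}\bigr\|.
\end{equation*}
Since $0<\eta_k\le 1$, it suffices to show $\|\mathcal{L}_k^{\mathrm{ppo}}\| = O(\varepsilon_A)$.

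For that bound, I will use the boundedness assumption (Assumption~\ref{ass:bounded}), which I expect to provide a uniform bound on the score function, $\|\nabla_\theta \log \pi_\theta(a_t|s_t)\| \le G$, together with bounded utilities. Jensen's inequality for the norm then gives
\begin{equation*}
\bigl\|\mathcal{L}_k^{\mathrm{ppo}}\bigr\| \le \mathbb{E}\bigl[\|\nabla_\theta \log\pi_\theta(a_t|s_t)\|\,|A_t|\bigr] \le G\,\mathbb{E}[|A_t|] \le G\varepsilon_A .
\end{equation*}
Combining this with the triangle-inequality step gives the claim, with the constant hidden in $O(\varepsilon_A)$ equal to $\eta_k G \le G$.

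The main obstacle is reconciling the idealized expectation surrogate in the statement with the clipped PPO objective actually optimized in Algorithm~\ref{alg:ppo_ours}. For the clipped objective, the gradient is $\mathbb{E}[\mathbf{1}_t\, r_t \nabla\log\pi_\theta\, \tilde A_t]$, where $\mathbf{1}_t$ indicates the unclipped region and depends on the sign of $\tilde A_t$. I would first handle this by evaluating at $\theta=\theta_k$, where $r_t=1$ and no clipping is active, so the gradient coincides with the stated surrogate. If a bound at nearby $\theta$ is needed, I would invoke the trust-region assumption (Assumption~\ref{ass:tr}) to bound $r_t \in [1-\varepsilon_k, 1+\varepsilon_k]$. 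This changes only the constants: the $A_t$ term stays bounded by $(1+\varepsilon_k)G\varepsilon_A$, and the clipping indicators multiply both terms equally, so the decomposition survives.

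A secondary subtlety is that $\eta_t,\xi_t$ are indexed by $t$ in the shaping rule. I would state explicitly that, within iteration $k$, they are taken as the iteration-level values $\eta_k,\xi_k$, as Assumption~\ref{ass:tr} or the scaling assumptions presumably stipulate. Without that, they must at least be independent of the sample so that they can be pulled out of the expectation.
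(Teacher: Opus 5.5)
Your argument follows the paper's proof step for step. You split $\mathcal{L}_k^{\mathrm{shaped}}=\eta_k\mathcal{L}_k^{\mathrm{ppo}}+\xi_k\mathcal{L}_k^{U}$ with iteration-level scalars and apply the reverse triangle inequality. You then bound $\|\mathcal{L}_k^{\mathrm{ppo}}\|\le C\,\mathbb{E}[|A_t|]\le C\varepsilon_A$ and use $\eta_k\le 1$ to absorb the constant.

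The real gap is where $C$ comes from. Assumption~\ref{ass:bounded} bounds only $A_k$ and $U_k$. Nothing in Assumptions~\ref{ass:bounded}--\ref{ass:tr} controls $\nabla_\theta\log\pi_\theta$, so your $G$ has to be stated as an additional hypothesis. It holds, for instance, for a softmax policy whose logits have parameter gradients bounded by $B$, which gives $G=2B$.

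The paper's proof needs the same ingredient and justifies it less convincingly. It claims the trust region gives the score a bounded second moment $M$. But a KL bound between successive iterates says nothing about the size of the score. Even granting $M$, Cauchy--Schwarz with $\mathbb{E}[A_t^2]\le A_{\max}\varepsilon_A$ yields only $\|\mathcal{L}_k^{\mathrm{ppo}}\|\le\sqrt{M A_{\max}\varepsilon_A}=O(\sqrt{\varepsilon_A})$. Your almost-sure bound is exactly what delivers the linear $O(\varepsilon_A)$ term, so keep it, but as an explicit assumption rather than something Assumption~\ref{ass:bounded} is expected to provide.

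The clipping discussion is unnecessary. The statement's surrogate is unclipped, and it coincides with the gradient of the clipped objective at $\theta=\theta_k$, where $r_t=1$. Your extension to nearby $\theta$ would not go through as sketched, for two reasons. First, Assumption~\ref{ass:tr} is a KL bound and gives no pointwise control of $r_t$. Second, once clipping is active the second term becomes $\mathbb{E}[\mathbf{1}_t\, r_t\,\nabla_\theta\log\pi_\theta(a_t|s_t)\,U_t]$. Here the indicator $\mathbf{1}_t$ depends on the sign of $\tilde A_t$, so this term is no longer $\mathcal{L}_k^{U}$.
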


\noindent 

\textit{Proof.} Deferred to Appendix~\ref{thm}.

% \n{The result shows that each policy update made under the shaped surrogate yields a surrogate improvement that is at least as large as the improvement PPO would obtain using its standard surrogate at the same update step.
% Aggregated over K iterations, the total surrogate progress under MIRA dominates that of PPO, which explains the accelerated learning observed empirically.}

\section{Experimental Setup} \label{setup}
   \begin{wrapfigure}{r}{0.38\textwidth}
  \vspace{-12pt}
  \centering
  \includegraphics[width=\linewidth]{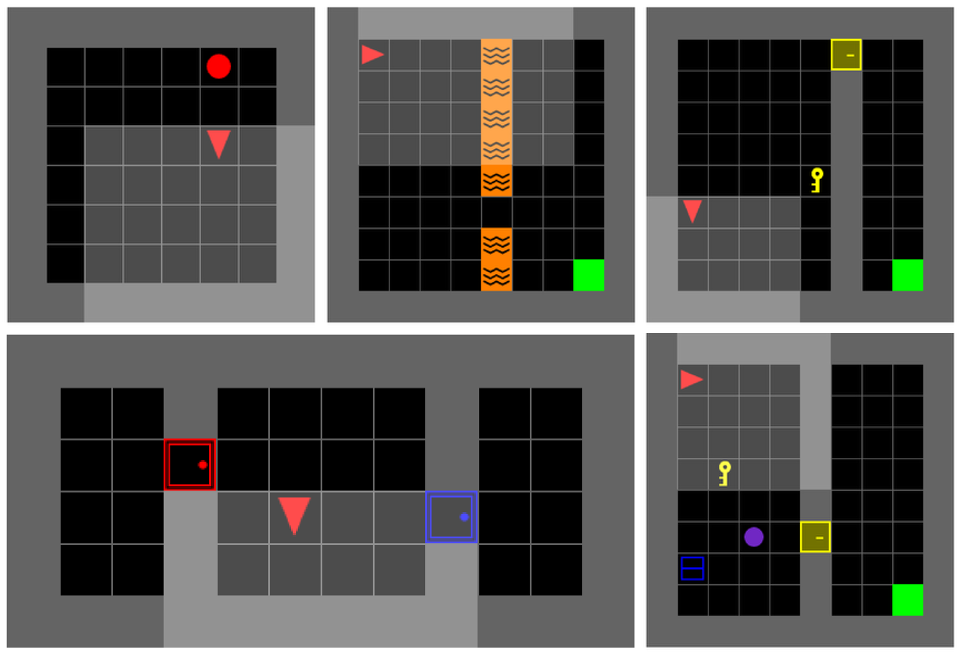}
  \caption{Evaluation environments. Top: \textsc{RedBall} (navigation to target), \textsc{LavaCrossing} (long-horizon navigation with irreversible hazards), \textsc{DoorKey} (sparse reward with key–goal dependency).
  \\
  Bottom: \textsc{RedBlueDoor} (sequence-sensitive toggling), \textsc{Distracted DoorKey} (distractor-rich variant with key-goal dependency).}
  \label{fig:envs}
  \vspace{-37pt}
\end{wrapfigure}
We validate our method through extensive experiments implemented using the RLlib~\citep{liang2018rllib}. Our evaluation focuses on performance gains, sample efficiency, and the computational overhead introduced by LLM integration. 
The objective is to characterize the benefits and trade-offs of incorporating LLM guidance in RL, including how different levels of LLM capabilities influence the policy learning dynamics and final policy quality.

\subsection{Simulation Platform}
 We consider six distinct environments, which are selected to span discrete vs. visual inputs, short- vs. long-horizon dependencies, reversible vs. irreversible dynamics, and with vs. without perceptual distractors, forming a compact yet representative benchmark for sparse-reward RL.

\textsc{{Gymnasium ToyText.}} Gymnasium~\citep{arnold2024gymnasium} provides simple tabular environments for controlled analysis of learning dynamics in low-dimensional settings.  Despite their simplicity, these environments feature sparse rewards and require strategic exploration, making them suitable for isolating the early-stage benefits of memory-guided utility shaping. We include \textsc{FrozenLake} as a minimal benchmark where PPO reliably converges to the optimal policy, enabling us to verify that MIRA preserves convergence while accelerating early learning.

\textsc{MiniGrid and BabyAI.} MiniGrid~\citep{chevalierboisvert2023minigrid} and BabyAI~\citep{chevalierboisvert2019babyai} are suites of lightweight, procedurally generated environments designed to evaluate exploration and planning in partially observable, sparse-reward settings. 
We use these tasks to assess the effectiveness of advantage shaping in long-horizon decision-making environments that require reasoning under uncertainty and robustness to irrelevant stimuli. We include five tasks, selected to cover diverse challenges involving planning, credit assignment, and distraction resilience (Figure~\ref{fig:envs}). 
We use pixel-based observations (RGB images) rendered from the environment as the policy inputs, to introduce perceptual complexity and evaluate agent performance under a more realistic observation setting.

% Appendix~\label{prompt} provides more details on our experiment environments and the LLM prompts used for each environment.

\subsection{Baseline Methods}

% Appendix~\ref{rep} provides detailed specifications of all hyperparameters for each method.

\textsc{PPO (RL baseline). } We train a tabula rasa PPO agent~\citep{schulman2017ppo} that learns purely from environment interaction and rewards. Network architecture, PPO hyperparameters, and rollout settings are held fixed across all methods for fair comparison.

\textsc{Hierarchical RL. } We include hierarchical reinforcement learning (HRL)~\citep{matthews2022hierarchical} as a baseline that uses pre-trained LLM option policies for temporal abstraction.

\textsc{LLM-RS. } We consider the method of \citep{bhambri2024extracting}, which we refer to as LLM-RS. This approach queries the LLM in real time to generate plans for potential-based reward shaping,  with a verifier refining them for valid action sequences. 
% \carlee{Does this use online LLM queries}

\textsc{LLM4Teach. } We include LLM4Teach~\citep{zhou2023large} as a representative teacher-based approach. It employs a pre-trained LLM as a policy teacher and guides the RL agent through policy distillation, and is among the state-of-the-art methods in this category.

\section{Experimental Results} \label{res}

\ch{To assess how effectively MIRA addresses the challenges raised in the introduction, we structure our core experiments around three guiding questions. These questions examine whether utility-shaped advantages compensate for PPO’s weak early gradients in sparse-reward, long-horizon, and partially observable settings, and whether amortizing limited LLM guidance into a persistent memory structure provides sufficient return while supporting scalability and avoiding latency. Formally:

\begin{itemize}
    \item \textbf{Q1}: How does MIRA improve early learning efficiency and convergence relative to PPO, even when PPO’s final performance is competitive?
    \item \textbf{Q2}: How well does MIRA perform in long-horizon environments that demand extended exploration and multi-step reasoning?
    \item \textbf{Q3}: How effectively does MIRA translate a limited number of LLM queries into performance gains compared to query-heavy methods?
\end{itemize}

To isolate the contribution of individual components in MIRA’s design, we further conduct ablations organized around three additional questions. These probe how online queries complement offline-initialized memory, and how sensitive MIRA is to degraded, incorrect, or stylistically varied LLM outputs. Formally:

\begin{itemize}
    \item \textbf{Q4}: How do online LLM queries improve learning, beyond what offline memory provides?
    \item \textbf{Q5}: How does MIRA handle late-stage exposure to degraded LLM guidance once its memory is well-formed?
    \item \textbf{Q6}: How do variations in LLM reasoning affect memory and downstream results?
\end{itemize}

}

Appendix~\ref{ret_detail} provides additional results, including evaluations on unseen seeds to assess generalization, \n{wall-clock analyses of LLM-query overhead (Figure~\ref{fig:wall}), and measurements of memory growth (Figure~\ref{fig:memory}). We also report ablations on the screening threshold (Figure~\ref{fig:threshold}) and prompt wording (Figure~\ref{fig:twoprompt}), each examining its effect on overall learning behavior and addressing aspects of Q5.} In addition, supplementary plots from sweeps over shaping weights (Figure~\ref{fig:early}) are included to analyze their impact on early-stage learning dynamics and reward progression.\n

\subsection{Tabular Benchmark  and Partially Observable Tasks}

\textbf{Q1:} We evaluate \n{two variants of MIRA on \textsc{FrozenLake-8x8}, an offline-only version and an online-only version,} and compare them to the PPO baseline, averaging results over four seeds. 
In the offline variant, three zero-shot GPT-o4-mini queries generate an initial memory graph. The LLM observes the grid layout (matching the agent’s full observability) but does not receive the slipperiness probability, which is hidden from both.
As shown in Figure~\ref{fig:frozen_results}, this initialization results in faster early learning, and the offline variant maintains a higher return than PPO throughout, and the online-only variant in the first 1K iterations. The online-only variant begins with an empty memory graph, without any global information about the map and issues LLM queries during training.

 \begin{wrapfigure}{r}{0.5\textwidth}
  % \vspace{-7pt}
  \centering
  \includegraphics[width=\linewidth]{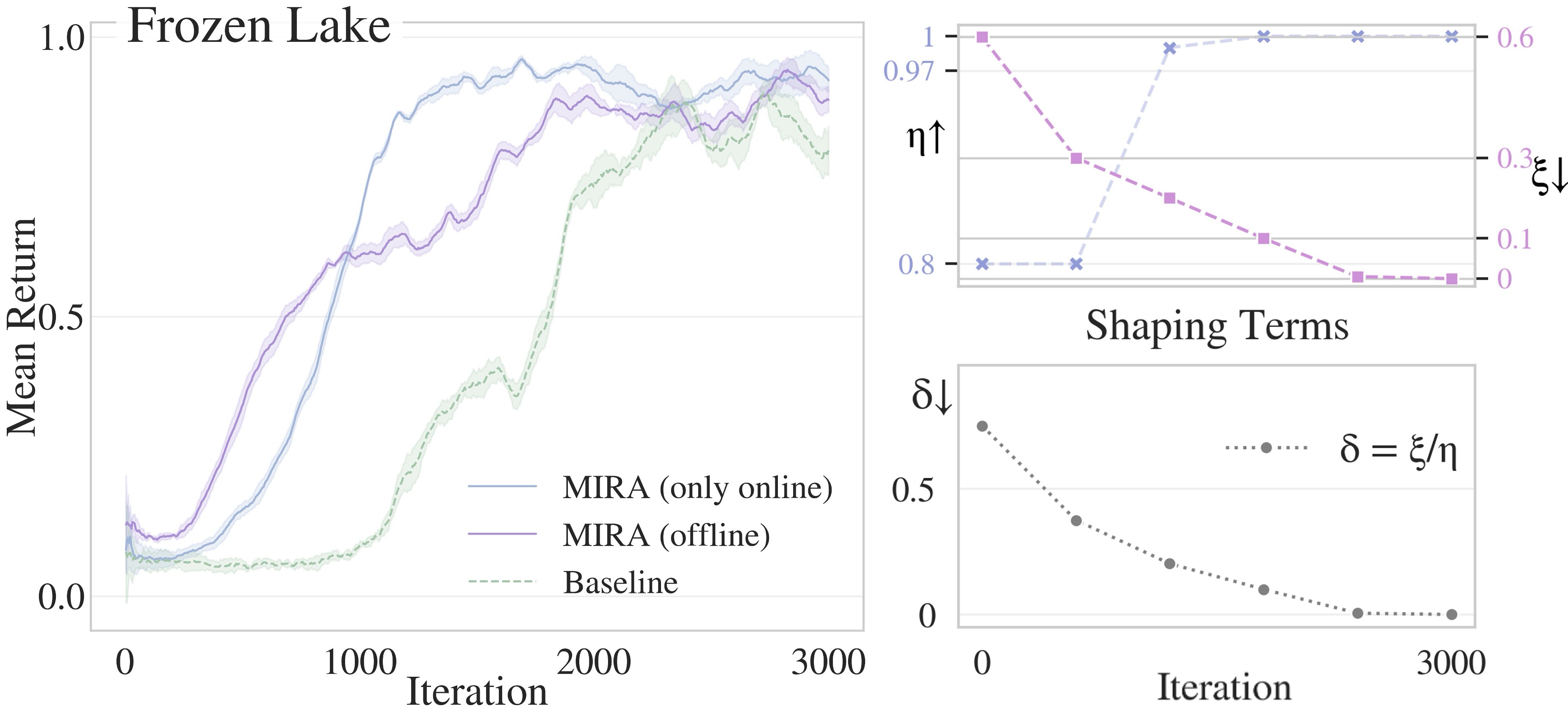}
  \caption{Mean return on \textsc{FrozenLake} (left): \n{Both MIRA variants improve early-stage learning relative to PPO, while PPO eventually attains a comparable asymptotic return.} Evolution of shaping terms $\eta_t$, $\xi_t$, and ratio $\delta_t$ (right): $\delta_t$ decays during training, ensuring convergence as $\delta_t \to 0$.}
  \label{fig:frozen_results}
  \vspace{-10 pt}
\end{wrapfigure}
S\n{ince the layout is deterministic and the action set is small, these online queries can infer short segments from the agent’s rollouts and populate the memory graph as training progresses.}
This provides a faster improvement rate than the offline variant, though it requires more LLM queries.
PPO eventually catches up, and by convergence, the asymptotic returns of all three methods become comparable.
During training, the shaping signal primarily affects the early iterations. As the policy improves, $\eta$ increases, $\xi_t$ decreases, \ch{i.e. the shaping ratio $\delta_t=\xi_t/\eta_t$ decays, reducing reliance on memory.} Under standard stochastic approximation conditions~\citep{kushner2003stochastic}, this decay keeps the critic error within an $O(\delta_t)$ neighborhood of the true value, which vanishes as $\delta_t \to 0$.

\textbf{Q2:} We next evaluate MIRA on five tasks designed to isolate distinct challenges in sparse and partially observable environments. 
Figure~\ref{fig:all_results} shows mean return and success rate across the four tasks, with performance averaged over four different seeds. 
In simpler tasks such as \textsc{RedBall}, PPO shows moderate early gains but plateaus well below optimal performance. Although hierarchical RL eventually catches up, MIRA reaches optimal returns in under half the training iterations. In \textsc{LavaCrossing}, PPO fails to improve beyond near-zero success, indicating ineffective exploration. Hierarchical RL improves steadily but converges more slowly than MIRA. In more complex tasks such as \textsc{DoorKey} and \textsc{RedBlueDoor}, MIRA achieves substantially higher success rates, approximately twice those of HRL, while also converging faster.
\begin{figure*}
\vspace{-9pt}
    \centering
    \includegraphics[width=\textwidth]{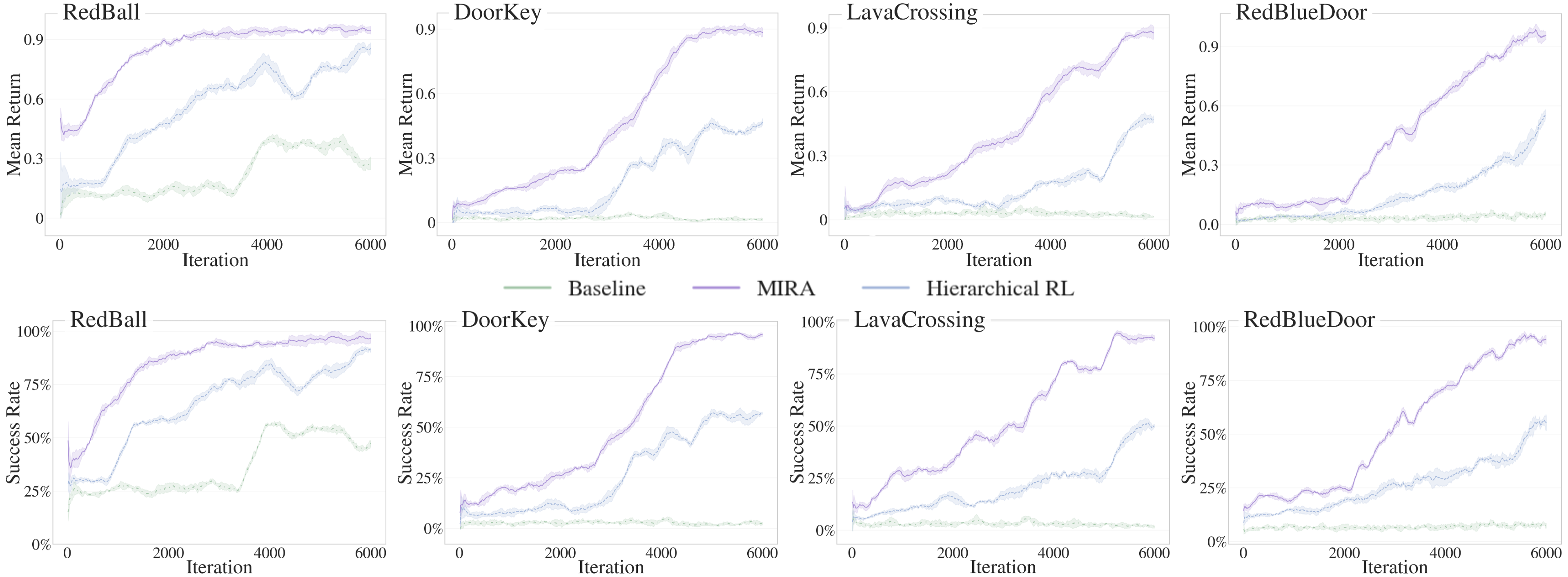}
    \caption{Mean return (top) and success rate (bottom) across four MiniGrid and BabyAI tasks. MIRA consistently outperforms both baselines, achieving faster learning, higher asymptotic return, and greater success rates. These results are obtained with a small LLM budget, using fewer than ten offline prompts to build memory graphs plus infrequent online queries to guide exploration.}
    \label{fig:all_results}
\vspace{-10pt}
\end{figure*}

These gains are achieved with a limited LLM query budget that combines offline and infrequent online access. Offline queries scale with task complexity. In \textsc{RedBall}, four zero-shot prompts to GPT-o4-mini are sufficient to build a useful memory graph, while \textsc{DoorKey} requires about seven queries that mix few-shot and zero-shot prompts. 
Online queries are budgeted separately and also vary with task complexity. In \textsc{RedBall}, about seven online queries suffice to suppress irrelevant actions throughout training. In \textsc{RedBlueDoor}, queries are triggered more frequently early in training to help interpret partial observations and suggest short sequences, such as turning, that align the agent with the door. Once the red door is discovered and toggled, the offline memory becomes sufficient. In this task, rooms behind the doors serve only as distractors; baseline agents, including hierarchical RL, often waste steps exploring them. As shown in Figure~\ref{fig:all_results} (lower right), HRL achieves higher success rates than PPO but yields similar average return in the beginning due to suboptimal trajectory use. By contrast, MIRA avoids such inefficiencies by focusing on goal-relevant behavior earlier in training.

\begin{wrapfigure}{r}{0.4\textwidth}
  % \vspace{-15pt}
  \centering
        \includegraphics[width=\linewidth]{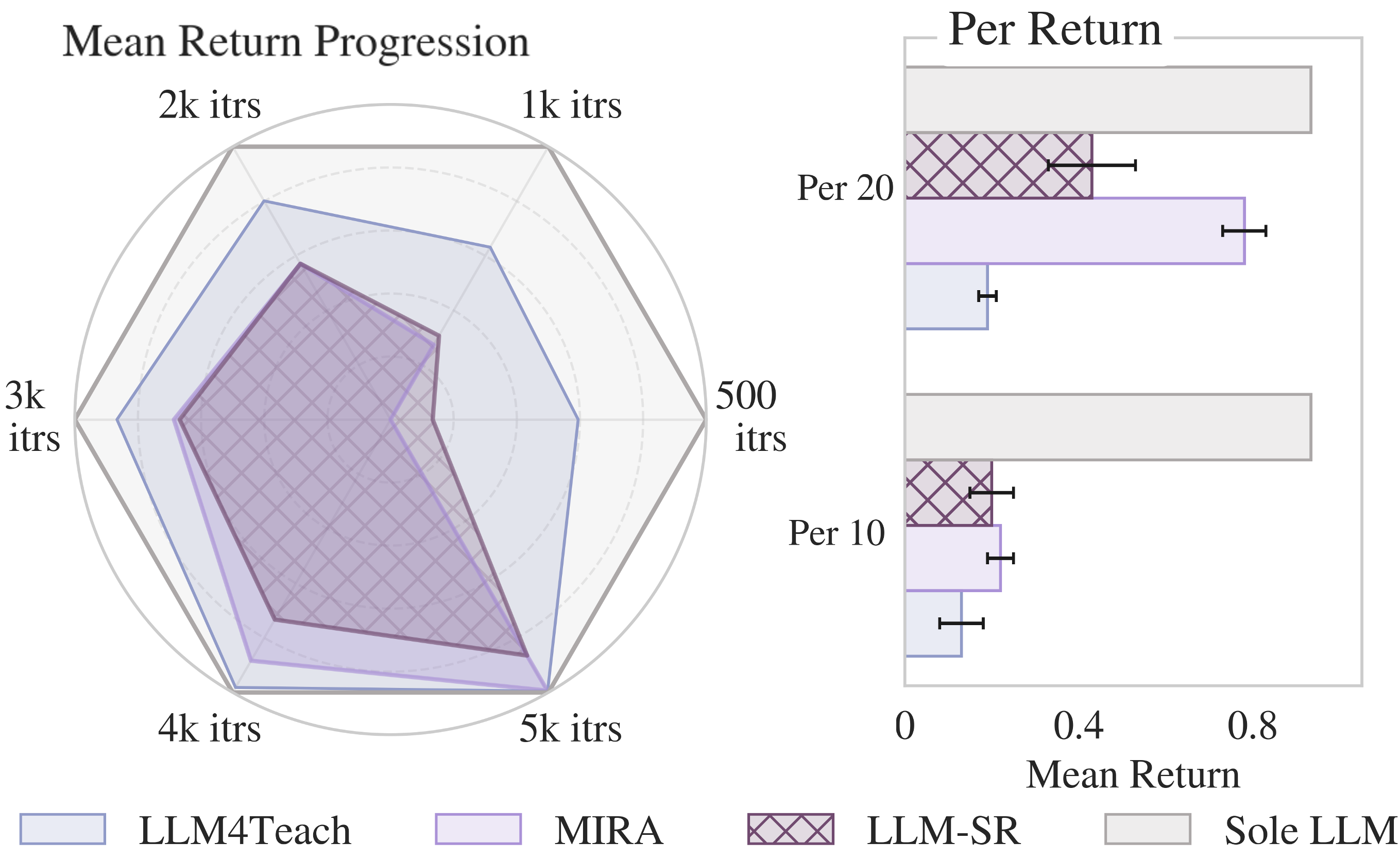}
\caption{Mean return (left): LLM4Teach shows faster early gains, while MIRA steadily improves and matches its final return with fewer queries. LLM-RS benefits early from reward shaping but plateaus lower.
Return per LLM query (right): Under two query budgets, MIRA achieves higher efficiency.}
\vspace{-10pt}
    \label{fig:llm4teach_comparison}
\end{wrapfigure}
\textbf{Q3:} To further evaluate MIRA, we compare it to LLM4Teach and LLM-RS in the custom variant environment \textsc{Distracted DoorKey}. 
We also include a Sole LLM baseline, where GPT-o4-mini executes plans under full observability without learning. 
Figure~\ref{fig:llm4teach_comparison} shows mean return progression at selected training checkpoints.  For Sole LLM, we report average return over 100 seeds to demonstrate that the task is LLM-solvable and that its outputs provide useful structural guidance. The accompanying bar chart reports amortized return per cumulative LLM query under two fixed budgets \n{(10 and 20 queries, both below the usage of any method)},  quantifying how each method translates queries into performance gains.
% \vspace{-5pt}

MIRA achieves higher query efficiency than both LLM4Teach and LLM-RS, \n{using about $30$ queries per run} (seven offline and $20 \pm 3$ online) to obtain higher return per query.
In contrast, LLM4Teach issues dense supervision, querying the LLM \n{once} on every state–action–reward triplet within training batches, \n{which in our experimental setup corresponds to  more than $500$ queries} before the policy stabilizes. 
LLM-RS, which uses LLM-generated reward code, queries once per layout, totaling over $50$ calls in our setup. While lighter than LLM4Teach, this still requires layout-level access throughout training. Despite its heavier budget, LLM4Teach achieves  comparable final performance to MIRA, while LLM-RS fails to match MIRA’s return. Notably, LLM-RS outpaces MIRA early due to reward shaping, but falls behind later. 
LLM4Teach shows an early advantage through front-loaded queries, but at a significantly higher cost. Tables in Appendix~\ref{ret_detail} report results on unseen evaluation seeds to assess generalization.

% \footnotetext{There is no statistically significant difference between MIRA and LLM4Teach at the 95\% confidence level.}

\subsection{Ablation Studies} \label{sub:ablation}
\textbf{Q4 }\textsc{(Online Query Frequency)}\textbf{:} \label{q_freq}
We vary the number of online LLM queries issued during training of \textsc{DoorKey}, to assess how constrained usage affects learning efficiency and final performance. Each agent begins with the same memory graph, initialized from identical offline queries, isolating the contribution of dynamic LLM input from that of static memory initialization. We  compare MIRA under three online budgets: zero, a mid budget of 10 queries, and a high budget of 20. As shown in Figure~\ref{fig:all_abs} (left), more frequent online access accelerates learning, with the large-budget variant achieving optimal return in fewer steps (Table~\ref{tab:freq_res_three}, \n{ Appendix~\ref{ret_detail}}). Even with just 10 online calls, MIRA substantially outperforms the offline-only variant. Nevertheless, MIRA (offline) still yields a notable boost over baseline PPO, indicating that static memory alone can provide meaningful guidance when dynamic access is unavailable.
\n{Given this noticeable improvement over PPO, the offline-only variant is a practical choice when run-time LLM access is restricted or when latency makes online queries impractical, since it relies entirely on the memory graph constructed offline. When limited run-time access is available, the online variant becomes preferable because dynamic queries update the memory during exploration and accelerate learning, at the cost of additional wall-clock time associated with LLM calls (Figure~\ref{fig:wall}, \n{Appendix~\ref{ret_detail}}).}

\begin{figure}
\vspace{-10pt}
    \centering
    \includegraphics[width=\linewidth]{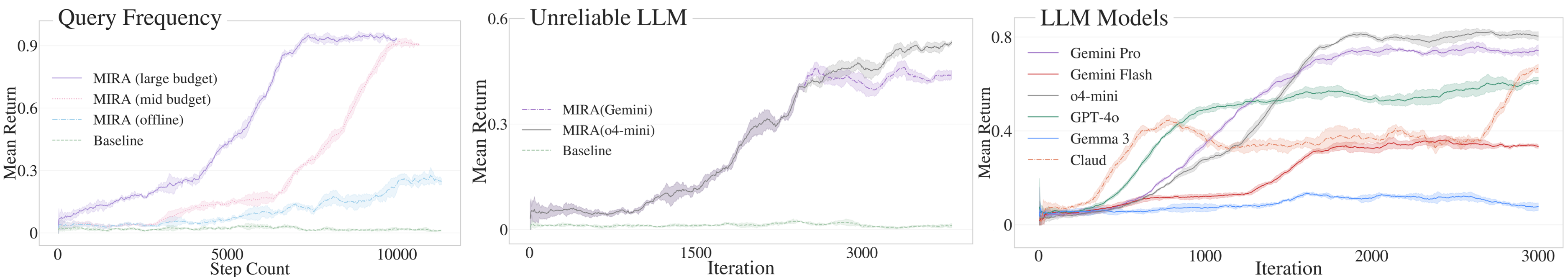}
    \caption{\textbf{Ablation Studies}.
Query frequency (left): Agents share the same offline memory but vary in online budgets. More queries accelerate learning, with high-budget agents achieving optimal returns more quickly.
Unreliable LLM (middle): With identical offline memory, screening is disabled and queries are swapped from GPT-o4-mini to Gemini Pro only in the late phase.  Performance remains stable in the late phase, indicating reduced dependence on online guidance once policy have matured. 
LLM models (right): Agents differ only in the LLM used for memory.  Performance differences reflect divergent reasoning styles: Gemma3 induces inefficient checking, Claude favors cautious exploration, while Gemini Pro and o4-mini enable faster learning.}
\vspace{-8pt}
    \label{fig:all_abs}
\end{figure}

\textbf{Q5 }\textsc{{(Unreliable LLM Outputs)}}\textbf{:} \label{unr}
We evaluate a scenario where the LLM is swapped at a later training stage and the screening unit is disabled only for this final online phase in \textsc{Distracted DoorKey} environment. All agents share the same offline-initialized memory graph and use GPT-o4-mini with screening during earlier online queries. In the final stage, we replace the LLM with Gemini Pro and omit screening. By this point, MIRA has accumulated sufficient experience and memory, allowing it to tolerate low-confidence or incorrect suggestions without collapsing performance. We prompted both LLMs with a scenario where the agent has already explored \textit{thoroughly} and confirmed no key is present (implying it was already collected, since inventory is \textit{unobserved}).
When asked which action to down-weight, GPT-o4-mini gave a consistent suppression, whereas Gemini returned a misaligned alternative. As shown in Figure~\ref{fig:all_abs} (middle), MIRA remains stable under degraded guidance, though convergence slows and final return drops slightly. Details of the LLM responses are shown in Figure~\ref{fig:gptgemm}, \n{Appendix~\ref{prompt}}.

\textbf{Q6 }\textsc{{(Reasoning and Performance)}}\textbf{:}
We assess MIRA's sensitivity to the choice of language model by replacing GPT-o4-mini with alternatives such as GPT-4o~\citep{openai2024gpt4o}, Claude Sonnet 4~\citep{anthropic2024claude4}, Gemma 3 27B~\citep{ananthanarayanan2024gemma}, Gemini 2.5 Flash and Pro~\citep{chen2024gemini}. All models go through the same process to ensure comparability. Unlike the ablation done before, the model swap is applied from the beginning of training.
As shown in Figure~\ref{fig:all_abs} (right), the reasoning style shaping the memory graph strongly impacts downstream RL performance. 
For instance, Gemma3 performs poorly because it recommends checking the door after every pickup, leading to wasteful steps. 
Claude adopts an exploratory policy that yields slow but eventual progress, showing early improvement followed by plateauing, but it eventually recovers as the memory is dynamic.
GeminiPro and GPT-o4-mini both enable fast early learning, but o4-mini’s memory includes \textit{detours} that prove beneficial later, ultimately reaching the highest asymptotic return. 
These differences highlight how the reasoning processes behind LLM outputs directly influence MIRA's long-term policy quality. Figure~\ref{fig:llms} in \n{Appendix~\ref{prompt}} presents the reasoning traces produced by different LLM models used.

\section{Conclusion and Future Work}~\label{conc}
We propose MIRA, a reinforcement learning (RL) framework that integrates large language model (LLM) guidance via a memory graph built from high-return trajectories and LLM-inferred information. By shaping advantage estimates with a utility signal derived from this memory, MIRA accelerates early learning without requiring continuous LLM supervision. Theoretical and empirical results on sparse-reward tasks confirm improved sample efficiency and preserved convergence.  Limitations of the current design are discussed in Appendix~\ref{lim}.
\n{Extending MIRA to continuous-control and vision-based domains is a natural next step. Prior LLM-guided robotics work shows that semantic subgoals can be grounded in continuous spaces through learned affordances or visual embeddings~\citep{brohan2023can, huang2022inner}. In such settings, MIRA’s discrete similarity function can be replaced with comparisons in a latent representation space, e.g., using encoder features or R3M-style embeddings~\citep{nair2022r3m}, while memory scalability can be maintained through clustered or hierarchical organization. These adaptations would allow MIRA’s memory graph and utility-based shaping to apply beyond grid worlds and support longer-horizon tasks with richer state spaces.} Extending MIRA to multi-goal domains such as Crafter~\citep{hafner2023mastering}, where reusable subgoal structure is prominent, is another promising direction.

% Future work includes extending MIRA to continuous action spaces and multi-goal domains like Crafter~\citep{hafner2023mastering}, where long-horizon dependencies and reusable subgoal structure are prominent. We expect that MIRA’s evolving memory and advantage shaping will be especially valuable in such settings, supporting both reuse and abstraction across episodes.

\section*{Acknowledgments}
This work was supported in part by the National Science Foundation (NSF) under grants CNS-2533813 and CNS-2312761 and the Office of Naval Research under grant N000142412073.

\section*{Reproducibility Statement}
 We have taken several steps to ensure reproducibility of our results. 
All theoretical assumptions and complete proofs are included in Appendix~\ref{thm}.
Appendix~\ref{prompt} details the environment specifications and the exact LLM prompts used for both offline and online queries. 
Appendix~\ref{rep} lists the full set of hyperparameters for MIRA across every evaluated environment. 
We also provide pseudocodes for all proposed algorithms in Algorithms~\ref{alg:ppo_ours} through ~\ref{alg:util}, ensuring clarity and transparency despite their straightforward implementation.  The code is available at paper's webpage \url{https://narjesno.github.io/MIRA/}.
Together, these materials supply all information necessary to reproduce our experiments and verify the claims of the paper.

% \pagebreak
\bibliography{iclr2026_conference}

@article{ji2023survey,
  title={A Survey of Hallucination in Natural Language Generation},
  author={Ji, Ziwei and Lee, Nayeon and Frieske, Richard and Yu, Tiezheng and Su, Dan and Xu, Yanlin and Ishii, Etsuko and Bang, Yeon Seon and Madotto, Andrea and Fung, Pascale},
  journal={ACM Computing Surveys (CSUR)},
  volume={55},
  number={12},
  pages={1--34},
  year={2023},
  publisher={ACM}
}

@article{zhao2021calibrate,
  title={Calibrate Before Use: Improving Few-Shot Performance of Language Models},
  author={Zhao, Eric and Wallace, Eric and Feng, Shi and Klein, Dan and Singh, Sameer},
  journal={arXiv preprint arXiv:2102.09690},
  year={2021}
}

@inproceedings{bubeck2023sparks,
  title={Sparks of Artificial General Intelligence: Early experiments with GPT-4},
  author={Bubeck, Sébastien and Chandrasekaran, Varun and Eldan, Ronen and Huster, Bruno and Raghunathan, Ananth and Ray, Jonathan and Zhang, Yang},
  booktitle={arXiv preprint arXiv:2303.12712},
  year={2023}
}

@book{sutton1998reinforcement,
  title={Reinforcement learning: An introduction},
  author={Sutton, Richard S and Barto, Andrew G and others},
  volume={1},
  year={1998},
  publisher={MIT press Cambridge}
}

@inproceedings{liu2024integrating,
  title={Integrating planning and deep reinforcement learning via automatic induction of task substructures},
  author={Liu, Jung-Chun and Chang, Chi-Hsien and Sun, Shao-Hua and Yu, Tian-Li},
  booktitle={The Twelfth International Conference on Learning Representations},
  year={2024}
}

@article{luo2024precise,
  title={Precise and dexterous robotic manipulation via human-in-the-loop reinforcement learning},
  author={Luo, Jianlan and Xu, Charles and Wu, Jeffrey and Levine, Sergey},
  journal={arXiv preprint arXiv:2410.21845},
  year={2024}
}

@inproceedings{nourzadactor,
  title={Actor-Twin Framework for Task Graph Scheduling},
  author={Nourzad, Narjes and Coleman, Jared and Zhao, Zhongyuan and Krishnamachari, Bhaskar and Verma, Gunjan and Segarra, Santiago},
  booktitle={The Seventeenth Workshop on Adaptive and Learning Agents},
year={2024}
}

@article{devidze2022exploration,
  title={Exploration-guided reward shaping for reinforcement learning under sparse rewards},
  author={Devidze, Rati and Kamalaruban, Parameswaran and Singla, Adish},
  journal={Advances in Neural Information Processing Systems},
  volume={35},
  pages={5829--5842},
  year={2022}
}

@article{schoepp2025evolving,
  title={The evolving landscape of llm-and vlm-integrated reinforcement learning},
  author={Schoepp, Sheila and Jafaripour, Masoud and Cao, Yingyue and Yang, Tianpei and Abdollahi, Fatemeh and Golestan, Shadan and Sufiyan, Zahin and Zaiane, Osmar R and Taylor, Matthew E},
  journal={arXiv preprint arXiv:2502.15214},
  year={2025}
}

@inproceedings{brohan2023can,
  title={Do as i can, not as i say: Grounding language in robotic affordances},
  author={Brohan, Anthony and Chebotar, Yevgen and Finn, Chelsea and Hausman, Karol and Herzog, Alexander and Ho, Daniel and Ibarz, Julian and Irpan, Alex and Jang, Eric and Julian, Ryan and others},
  booktitle={Conference on robot learning},
  pages={287--318},
  year={2023},
  organization={PMLR}
}

@article{kwon2023reward,
  title={Reward design with language models},
  author={Kwon, Minae and Xie, Sang Michael and Bullard, Kalesha and Sadigh, Dorsa},
  journal={arXiv preprint arXiv:2303.00001},
  year={2023}
}

@article{ma2023eureka,
  title={Eureka: Human-level reward design via coding large language models},
  author={Ma, Yecheng Jason and Liang, William and Wang, Guanzhi and Huang, De-An and Bastani, Osbert and Jayaraman, Dinesh and Zhu, Yuke and Fan, Linxi and Anandkumar, Anima},
  journal={arXiv preprint arXiv:2310.12931},
  year={2023}
}

@article{fan2022minedojo,
  title={Minedojo: Building open-ended embodied agents with internet-scale knowledge},
  author={Fan, Linxi and Wang, Guanzhi and Jiang, Yunfan and Mandlekar, Ajay and Yang, Yuncong and Zhu, Haoyi and Tang, Andrew and Huang, De-An and Zhu, Yuke and Anandkumar, Anima},
  journal={Advances in Neural Information Processing Systems},
  volume={35},
  pages={18343--18362},
  year={2022}
}

@article{rocamonde2023vision,
  title={Vision-language models are zero-shot reward models for reinforcement learning},
  author={Rocamonde, Juan and Montesinos, Victoriano and Nava, Elvis and Perez, Ethan and Lindner, David},
  journal={arXiv preprint arXiv:2310.12921},
  year={2023}
}

@article{jimenez2023swe,
  title={Swe-bench: Can language models resolve real-world github issues?},
  author={Jimenez, Carlos E and Yang, John and Wettig, Alexander and Yao, Shunyu and Pei, Kexin and Press, Ofir and Narasimhan, Karthik},
  journal={arXiv preprint arXiv:2310.06770},
  year={2023}
}

@article{xu2024theagentcompany,
  title={Theagentcompany: benchmarking llm agents on consequential real world tasks},
  author={Xu, Frank F and Song, Yufan and Li, Boxuan and Tang, Yuxuan and Jain, Kritanjali and Bao, Mengxue and Wang, Zora Z and Zhou, Xuhui and Guo, Zhitong and Cao, Murong and others},
  journal={arXiv preprint arXiv:2412.14161},
  year={2024}
}

@inproceedings{ji2023towards,
  title={Towards mitigating LLM hallucination via self reflection},
  author={Ji, Ziwei and Yu, Tiezheng and Xu, Yan and Lee, Nayeon and Ishii, Etsuko and Fung, Pascale},
  booktitle={Findings of the Association for Computational Linguistics: EMNLP 2023},
  pages={1827--1843},
  year={2023}
}

@inproceedings{xie2024text2reward,
  title={Text2reward: Automated dense reward function generation for reinforcement learning},
  author={Xie, Tianbao and Zhao, Siheng and Wu, Chen Henry and Liu, Yitao and Luo, Qian and Zhong, Victor and Yang, Yanchao and Yu, Tao},
  booktitle={International Conference on Learning Representations (ICLR), 2024 (07/05/2024-11/05/2024, Vienna, Austria)},
  year={2024}
}

@article{wang2024dart,
  title={Dart-llm: Dependency-aware multi-robot task decomposition and execution using large language models},
  author={Wang, Yongdong and Xiao, Runze and Kasahara, Jun Younes Louhi and Yajima, Ryosuke and Nagatani, Keiji and Yamashita, Atsushi and Asama, Hajime},
  journal={arXiv preprint arXiv:2411.09022},
  year={2024}
}

@inproceedings{du2023guiding,
  title={Guiding pretraining in reinforcement learning with large language models},
  author={Du, Yuqing and Watkins, Olivia and Wang, Zihan and Colas, C{\'e}dric and Darrell, Trevor and Abbeel, Pieter and Gupta, Abhishek and Andreas, Jacob},
  booktitle={International Conference on Machine Learning},
  pages={8657--8677},
  year={2023},
  organization={PMLR}
}

@inproceedings{hu2023language,
  title={Language instructed reinforcement learning for human-ai coordination},
  author={Hu, Hengyuan and Sadigh, Dorsa},
  booktitle={International Conference on Machine Learning},
  pages={13584--13598},
  year={2023},
  organization={PMLR}
}

@article{dasgupta2023collaborating,
  title={Collaborating with language models for embodied reasoning},
  author={Dasgupta, Ishita and Kaeser-Chen, Christine and Marino, Kenneth and Ahuja, Arun and Babayan, Sheila and Hill, Felix and Fergus, Rob},
  journal={arXiv preprint arXiv:2302.00763},
  year={2023}
}

@article{qu2024choices,
  title={Choices are more important than efforts: Llm enables efficient multi-agent exploration},
  author={Qu, Yun and Wang, Boyuan and Jiang, Yuhang and Shao, Jianzhun and Mao, Yixiu and Wang, Cheems and Liu, Chang and Ji, Xiangyang},
  journal={arXiv preprint arXiv:2410.02511},
  year={2024}
}

@article{gao2024designing,
  title={On designing effective rl reward at training time for llm reasoning},
  author={Gao, Jiaxuan and Xu, Shusheng and Ye, Wenjie and Liu, Weilin and He, Chuyi and Fu, Wei and Mei, Zhiyu and Wang, Guangju and Wu, Yi},
  journal={arXiv preprint arXiv:2410.15115},
  year={2024}
}

@article{cao2024survey,
  title={Survey on large language model-enhanced reinforcement learning: Concept, taxonomy, and methods},
  author={Cao, Yuji and Zhao, Huan and Cheng, Yuheng and Shu, Ting and Chen, Yue and Liu, Guolong and Liang, Gaoqi and Zhao, Junhua and Yan, Jinyue and Li, Yun},
  journal={IEEE Transactions on Neural Networks and Learning Systems},
  year={2024},
  publisher={IEEE}
}

@article{chen2024gemini,
  title={Gemini 1.5 and 2.5: Unlocking Multimodal Understanding and Reasoning in a Single Model},
  author={Chen, Zhenzhong and Purohit, Siddharth and Wang, William and Anil, Rohan and et al.},
  journal={arXiv preprint arXiv:2507.06261},
  year={2024},
  url={https://arxiv.org/abs/2507.06261}
}

@article{ananthanarayanan2024gemma,
  title={Gemma 2 and 3: Open Models Based on Gemini Research and Technology},
  author={Ananthanarayanan, Rajat and Bhosale, Jaya and Chowdhery, Aakanksha and Driess, Danny and et al.},
  journal={arXiv preprint arXiv:2503.19786},
  year={2024},
  url={https://arxiv.org/abs/2503.19786}
}

@misc{openai2024gpt4o,
  title={GPT-4o},
  author={OpenAI},
  howpublished={\url{https://openai.com/index/gpt-4o}},
  year={2024},
  note={Accessed July 2025}
}

@misc{anthropic2024claude4,
  title={Introducing the Claude 4 Model Family},
  author={Anthropic},
  howpublished={\url{https://www.anthropic.com/news/claude-4}},
  year={2024},
  note={Claude 3.5 Sonnet released June 2024}
}

@article{matthews2022hierarchical,
  title={Hierarchical kickstarting for skill transfer in reinforcement learning},
  author={Matthews, Michael and Samvelyan, Mikayel and Parker-Holder, Jack and Grefenstette, Edward and Rockt{\"a}schel, Tim},
  journal={arXiv preprint arXiv:2207.11584},
  year={2022}
}

@article{zhou2023large,
  title={Large language model as a policy teacher for training reinforcement learning agents},
  author={Zhou, Zihao and Hu, Bin and Zhao, Chenyang and Zhang, Pu and Liu, Bin},
  journal={arXiv preprint arXiv:2311.13373},
  year={2023}
}

@article{chevalierboisvert2023minigrid,
  title={Minigrid \& Miniworld: Modular \& Customizable Reinforcement Learning Environments for Goal-Oriented Tasks},
  author={Chevalier-Boisvert, Maxime and Dai, Bolun and Towers, Mark and de Lazcano, Rodrigo and Willems, Lucas and Lahlou, Salem and Pal, Suman and Castro, Pablo Samuel and Terry, Jordan},
  journal={CoRR},
  volume={abs/2306.13831},
  year={2023},
  url={https://arxiv.org/abs/2306.13831}
}

@inproceedings{chevalierboisvert2019babyai,
  title={{BabyAI}: First Steps Towards Grounded Language Learning With a Human in the Loop},
  author={Chevalier-Boisvert, Maxime and Bahdanau, Dzmitry and Lahlou, Salem and Willems, Lucas and Saharia, Chitwan and Nguyen, Thien Huu and Bengio, Yoshua},
  booktitle={International Conference on Learning Representations (ICLR)},
  year={2019},
  url={https://openreview.net/forum?id=rJeXCo0cYX}
}

@article{arnold2024gymnasium,
  title={A Standard Interface for Reinforcement Learning Environments},
  author={Arnoldo, Arjun KG and others},
  journal={arXiv preprint arXiv:2407.17032},
  year={2024},
  url={https://arxiv.org/abs/2407.17032}
}

@inproceedings{schulman2017ppo,
  title={Proximal Policy Optimization Algorithms},
  author={Schulman, John and Wolski, Filip and Dhariwal, Prafulla and Radford, Alec and Klimov, Oleg},
  booktitle={arXiv preprint arXiv:1707.06347},
  year={2017},
  url={https://arxiv.org/abs/1707.06347}
}

@inproceedings{shinn2023reflexion,
  title={Reflexion: Language agents with verbal reinforcement learning},
  author={Shinn, Noah and others},
  booktitle={NeurIPS},
  year={2023}
}

@article{wang2023voyager,
  title={Voyager: An Open-Ended Embodied Agent with Large Language Models},
  author={Wang, Guiding and Xie, Yuqi and Jiang, Kun and Lu, Xiangyu and Zhang, Weihong and Lu, Haowei and Xiong, Yitao and Li, Qian and Xu, Chuyuan and Huang, Minggang and et al.},
  journal={arXiv preprint arXiv:2305.16291},
  year={2023}
}

@inproceedings{hausknecht2015deep,
  title={Deep Recurrent {Q}-Learning for Partially Observable {MDP}s.},
  author={Hausknecht, Matthew J and Stone, Peter},
  booktitle={AAAI fall symposia},
  volume={45},
  pages={141},
  year={2015}
}

@article{kurniawati2022partially,
  title={Partially observable markov decision processes and robotics},
  author={Kurniawati, Hanna},
  journal={Annual Review of Control, Robotics, and Autonomous Systems},
  volume={5},
  number={1},
  pages={253--277},
  year={2022},
  publisher={Annual Reviews}
}

@inproceedings{liang2018rllib,
  title={RLlib: Abstractions for distributed reinforcement learning},
  author={Liang, Eric and Liaw, Richard and Nishihara, Robert and Moritz, Philipp and Fox, Roy and Goldberg, Ken and Gonzalez, Joseph and Jordan, Michael and Stoica, Ion},
  booktitle={International conference on machine learning},
  pages={3053--3062},
  year={2018},
  organization={PMLR}
}

@misc{silver2015,
    author = {David Silver},
    title = {Lectures on Reinforcement Learning},
    howpublished = {\textsc{url:}~\url{https://www.davidsilver.uk/teaching/}},
    year = {2015}
}

@article{bang2025hallulens,
  title={Hallulens: Llm hallucination benchmark},
  author={Bang, Yejin and Ji, Ziwei and Schelten, Alan and Hartshorn, Anthony and Fowler, Tara and Zhang, Cheng and Cancedda, Nicola and Fung, Pascale},
  journal={arXiv preprint arXiv:2504.17550},
  year={2025}
}

@article{tonmoy2024comprehensive,
  title={A comprehensive survey of hallucination mitigation techniques in large language models},
  author={Tonmoy, SM and Zaman, SM and Jain, Vinija and Rani, Anku and Rawte, Vipula and Chadha, Aman and Das, Amitava},
  journal={arXiv preprint arXiv:2401.01313},
  volume={6},
  year={2024}
}

@article{wang2022self,
  title={Self-consistency improves chain of thought reasoning in language models},
  author={Wang, Xuezhi and Wei, Jason and Schuurmans, Dale and Le, Quoc and Chi, Ed and Narang, Sharan and Chowdhery, Aakanksha and Zhou, Denny},
  journal={arXiv preprint arXiv:2203.11171},
  year={2022}
}

@article{schulman2017proximal,
  title={Proximal policy optimization algorithms},
  author={Schulman, John and Wolski, Filip and Dhariwal, Prafulla and Radford, Alec and Klimov, Oleg},
  journal={arXiv preprint arXiv:1707.06347},
  year={2017}
}

@inproceedings{stolle2002learning,
  title={Learning options in reinforcement learning},
  author={Stolle, Martin and Precup, Doina},
  booktitle={International Symposium on abstraction, reformulation, and approximation},
  pages={212--223},
  year={2002},
  organization={Springer}
}

@article{sutton1999between,
  title={Between {MDP}s and semi-{MDP}s: A framework for temporal abstraction in reinforcement learning},
  author={Sutton, Richard S and Precup, Doina and Singh, Satinder},
  journal={Artificial intelligence},
  volume={112},
  number={1-2},
  pages={181--211},
  year={1999},
  publisher={Elsevier}
}

@inproceedings{macglashan2017interactive,
  title={Interactive learning from policy-dependent human feedback},
  author={MacGlashan, James and Ho, Mark K and Loftin, Robert and Peng, Bei and Wang, Guan and Roberts, David L and Taylor, Matthew E and Littman, Michael L},
  booktitle={International conference on machine learning},
  pages={2285--2294},
  year={2017},
  organization={PMLR}
}

@inproceedings{shiarlis2018taco,
  title={Taco: Learning task decomposition via temporal alignment for control},
  author={Shiarlis, Kyriacos and Wulfmeier, Markus and Salter, Sasha and Whiteson, Shimon and Posner, Ingmar},
  booktitle={International Conference on Machine Learning},
  pages={4654--4663},
  year={2018},
  organization={PMLR}
}

@article{narvekar2020curriculum,
  title={Curriculum learning for reinforcement learning domains: A framework and survey},
  author={Narvekar, Sanmit and Peng, Bei and Leonetti, Matteo and Sinapov, Jivko and Taylor, Matthew E and Stone, Peter},
  journal={Journal of Machine Learning Research},
  volume={21},
  number={181},
  pages={1--50},
  year={2020}
}

@article{williams1992simple,
  title={Simple statistical gradient-following algorithms for connectionist reinforcement learning},
  author={Williams, Ronald J},
  journal={Machine learning},
  volume={8},
  number={3},
  pages={229--256},
  year={1992},
  publisher={Springer}
}

@inproceedings{mnih2016asynchronous,
  title={Asynchronous methods for deep reinforcement learning},
  author={Mnih, Volodymyr and Badia, Adria Puigdomenech and Mirza, Mehdi and Graves, Alex and Lillicrap, Timothy and Harley, Tim and Silver, David and Kavukcuoglu, Koray},
  booktitle={International conference on machine learning},
  pages={1928--1937},
  year={2016},
  organization={PmLR}
}

@inproceedings{henderson2018deep,
  title={Deep reinforcement learning that matters},
  author={Henderson, Peter and Islam, Riashat and Bachman, Philip and Pineau, Joelle and Precup, Doina and Meger, David},
  booktitle={Proceedings of the AAAI conference on artificial intelligence},
  volume={32},
  year={2018}
}

@inproceedings{paischer2022history,
  title={History compression via language models in reinforcement learning},
  author={Paischer, Fabian and Adler, Thomas and Patil, Vihang and Bitto-Nemling, Angela and Holzleitner, Markus and Lehner, Sebastian and Eghbal-Zadeh, Hamid and Hochreiter, Sepp},
  booktitle={International Conference on Machine Learning},
  pages={17156--17185},
  year={2022},
  organization={PMLR}
}

@article{paischer2023semantic,
  title={Semantic {HELM}: A human-readable memory for reinforcement learning},
  author={Paischer, Fabian and Adler, Thomas and Hofmarcher, Markus and Hochreiter, Sepp},
  journal={Advances in Neural Information Processing Systems},
  volume={36},
  pages={9837--9865},
  year={2023}
}

@inproceedings{radford2021learning,
  title={Learning transferable visual models from natural language supervision},
  author={Radford, Alec and Kim, Jong Wook and Hallacy, Chris and Ramesh, Aditya and Goh, Gabriel and Agarwal, Sandhini and Sastry, Girish and Askell, Amanda and Mishkin, Pamela and Clark, Jack and others},
  booktitle={International conference on machine learning},
  pages={8748--8763},
  year={2021},
  organization={PMLR}
}

@article{kim2023efficient,
  title={Efficient policy adaptation with contrastive prompt ensemble for embodied agents},
  author={Kim, Woo Kyung and Kim, SeungHyun and Woo, Honguk and others},
  journal={Advances in Neural Information Processing Systems},
  volume={36},
  pages={55442--55453},
  year={2023}
}

@inproceedings{poudel2024recore,
  title={Recore: Regularized contrastive representation learning of world model},
  author={Poudel, Rudra PK and Pandya, Harit and Liwicki, Stephan and Cipolla, Roberto},
  booktitle={Proceedings of the IEEE/CVF Conference on Computer Vision and Pattern Recognition},
  pages={22904--22913},
  year={2024}
}

@article{basavatia2024starling,
  title={Starling: Self-supervised training of text-based reinforcement learning agent with large language models},
  author={Basavatia, Shreyas and Murugesan, Keerthiram and Ratnakar, Shivam},
  journal={arXiv preprint arXiv:2406.05872},
  year={2024}
}

@inproceedings{sumers2021learning,
  title={Learning rewards from linguistic feedback},
  author={Sumers, Theodore R and Ho, Mark K and Hawkins, Robert D and Narasimhan, Karthik and Griffiths, Thomas L},
  booktitle={Proceedings of the AAAI Conference on Artificial Intelligence},
  volume={35},
  pages={6002--6010},
  year={2021}
}

@article{liang2022code,
  title={Code as policies: Language model programs for embodied control},
  author={Liang, Jacky and Huang, Wenlong and Xia, Fei and Xu, Peng and Hausman, Karol and Ichter, Brian and Florence, Pete and Zeng, Andy},
  journal={arXiv preprint arXiv:2209.07753},
  year={2022}
}

@inproceedings{song2023llm,
  title={{LLM}-planner: Few-shot grounded planning for embodied agents with large language models},
  author={Song, Chan Hee and Wu, Jiaman and Washington, Clayton and Sadler, Brian M and Chao, Wei-Lun and Su, Yu},
  booktitle={Proceedings of the IEEE/CVF international conference on computer vision},
  pages={2998--3009},
  year={2023}
}

@article{chu2023accelerating,
  title={Accelerating reinforcement learning of robotic manipulations via feedback from large language models},
  author={Chu, Kun and Zhao, Xufeng and Weber, Cornelius and Li, Mengdi and Wermter, Stefan},
  journal={arXiv preprint arXiv:2311.02379},
  year={2023}
}

@article{wu2023read,
  title={Read and reap the rewards: Learning to play atari with the help of instruction manuals},
  author={Wu, Yue and Fan, Yewen and Liang, Paul Pu and Azaria, Amos and Li, Yuanzhi and Mitchell, Tom M},
  journal={Advances in Neural Information Processing Systems},
  volume={36},
  pages={1009--1023},
  year={2023}
}

@article{adeniji2023language,
  title={Language reward modulation for pretraining reinforcement learning},
  author={Adeniji, Ademi and Xie, Amber and Sferrazza, Carmelo and Seo, Younggyo and James, Stephen and Abbeel, Pieter},
  journal={arXiv preprint arXiv:2308.12270},
  year={2023}
}

@inproceedings{seo2023masked,
  title={Masked world models for visual control},
  author={Seo, Younggyo and Hafner, Danijar and Liu, Hao and Liu, Fangchen and James, Stephen and Lee, Kimin and Abbeel, Pieter},
  booktitle={Conference on Robot Learning},
  pages={1332--1344},
  year={2023},
  organization={PMLR}
}

@article{wang2024rl,
  title={Rl-vlm-f: Reinforcement learning from vision language foundation model feedback},
  author={Wang, Yufei and Sun, Zhanyi and Zhang, Jesse and Xian, Zhou and Biyik, Erdem and Held, David and Erickson, Zackory},
  journal={arXiv preprint arXiv:2402.03681},
  year={2024}
}

@inproceedings{grauman2022ego4d,
  title={Ego4d: Around the world in 3,000 hours of egocentric video},
  author={Grauman, Kristen and Westbury, Andrew and Byrne, Eugene and Chavis, Zachary and Furnari, Antonino and Girdhar, Rohit and Hamburger, Jackson and Jiang, Hao and Liu, Miao and Liu, Xingyu and others},
  booktitle={Proceedings of the IEEE/CVF conference on computer vision and pattern recognition},
  pages={18995--19012},
  year={2022}
}

@article{yu2023language,
  title={Language to rewards for robotic skill synthesis},
  author={Yu, Wenhao and Gileadi, Nimrod and Fu, Chuyuan and Kirmani, Sean and Lee, Kuang-Huei and Arenas, Montse Gonzalez and Chiang, Hao-Tien Lewis and Erez, Tom and Hasenclever, Leonard and Humplik, Jan and others},
  journal={arXiv preprint arXiv:2306.08647},
  year={2023}
}

@article{madaan2023self,
  title={Self-refine: Iterative refinement with self-feedback},
  author={Madaan, Aman and Tandon, Niket and Gupta, Prakhar and Hallinan, Skyler and Gao, Luyu and Wiegreffe, Sarah and Alon, Uri and Dziri, Nouha and Prabhumoye, Shrimai and Yang, Yiming and others},
  journal={Advances in Neural Information Processing Systems},
  volume={36},
  pages={46534--46594},
  year={2023}
}

@article{song2023self,
  title={Self-refined large language model as automated reward function designer for deep reinforcement learning in robotics},
  author={Song, Jiayang and Zhou, Zhehua and Liu, Jiawei and Fang, Chunrong and Shu, Zhan and Ma, Lei},
  journal={arXiv preprint arXiv:2309.06687},
  year={2023}
}

@article{shi2023unleashing,
  title={Unleashing the power of pre-trained language models for offline reinforcement learning},
  author={Shi, Ruizhe and Liu, Yuyao and Ze, Yanjie and Du, Simon S and Xu, Huazhe},
  journal={arXiv preprint arXiv:2310.20587},
  year={2023}
}

@article{mezghani2023think,
  title={Think before you act: Unified policy for interleaving language reasoning with actions},
  author={Mezghani, Lina and Bojanowski, Piotr and Alahari, Karteek and Sukhbaatar, Sainbayar},
  journal={arXiv preprint arXiv:2304.11063},
  year={2023}
}

@inproceedings{zitkovich2023rt,
  title={Rt-2: Vision-language-action models transfer web knowledge to robotic control},
  author={Zitkovich, Brianna and Yu, Tianhe and Xu, Sichun and Xu, Peng and Xiao, Ted and Xia, Fei and Wu, Jialin and Wohlhart, Paul and Welker, Stefan and Wahid, Ayzaan and others},
  booktitle={Conference on Robot Learning},
  pages={2165--2183},
  year={2023},
  organization={PMLR}
}

@article{wan2025think,
  title={Think Twice, Act Once: A Co-Evolution Framework of LLM and RL for Large-Scale Decision Making},
  author={Wan, Xu and Xu, Wenyue and Yang, Chao and Sun, Mingyang},
  journal={arXiv preprint arXiv:2506.02522},
  year={2025}
}

@article{yao2020keep,
  title={Keep calm and explore: Language models for action generation in text-based games},
  author={Yao, Shunyu and Rao, Rohan and Hausknecht, Matthew and Narasimhan, Karthik},
  journal={arXiv preprint arXiv:2010.02903},
  year={2020}
}

@inproceedings{hausknecht2020interactive,
  title={Interactive fiction games: A colossal adventure},
  author={Hausknecht, Matthew and Ammanabrolu, Prithviraj and C{\^o}t{\'e}, Marc-Alexandre and Yuan, Xingdi},
  booktitle={Proceedings of the AAAI Conference on Artificial Intelligence},
  volume={34},
  pages={7903--7910},
  year={2020}
}

@inproceedings{ma2025catching,
  title={Catching Two Birds with One Stone: Reward Shaping with Dual Random Networks for Balancing Exploration and Exploitation},
  author={Ma, Haozhe and Li, Fangling and Lim, Jing Yu and Luo, Zhengding and Vo, Thanh Vinh and Leong, Tze-Yun},
  booktitle={Forty-second International Conference on Machine Learning},
  year={2025}
}

@article{dalal2024plan,
  title={Plan-seq-learn: Language model guided rl for solving long horizon robotics tasks},
  author={Dalal, Murtaza and Chiruvolu, Tarun and Chaplot, Devendra and Salakhutdinov, Ruslan},
  journal={arXiv preprint arXiv:2405.01534},
  year={2024}
}

@article{micheli2022transformers,
  title={Transformers are sample-efficient world models},
  author={Micheli, Vincent and Alonso, Eloi and Fleuret, Fran{\c{c}}ois},
  journal={arXiv preprint arXiv:2209.00588},
  year={2022}
}

@article{chen2022transdreamer,
  title={Transdreamer: Reinforcement learning with transformer world models},
  author={Chen, Chang and Wu, Yi-Fu and Yoon, Jaesik and Ahn, Sungjin},
  journal={arXiv preprint arXiv:2202.09481},
  year={2022}
}

@article{robine2023transformer,
  title={Transformer-based world models are happy with 100k interactions},
  author={Robine, Jan and H{\"o}ftmann, Marc and Uelwer, Tobias and Harmeling, Stefan},
  journal={arXiv preprint arXiv:2303.07109},
  year={2023}
}

@article{lin2023learning,
  title={Learning to model the world with language},
  author={Lin, Jessy and Du, Yuqing and Watkins, Olivia and Hafner, Danijar and Abbeel, Pieter and Klein, Dan and Dragan, Anca},
  journal={arXiv preprint arXiv:2308.01399},
  year={2023}
}

@inproceedings{lu2023closer,
  title={A closer look at reward decomposition for high-level robotic explanations},
  author={Lu, Wenhao and Zhao, Xufeng and Magg, Sven and Gromniak, Martin and Li, Mengdi and Wermter, Stefan},
  booktitle={2023 IEEE International Conference on Development and Learning (ICDL)},
  pages={429--436},
  year={2023},
  organization={IEEE}
}

@inproceedings{qiu2024memory,
  title={Memory-augmented deep deterministic policy gradient},
  author={Qiu, Qian and Zeng, Fanyu and Yang, Haigen and Xing, Guanyu and Ge, Shuzhi Sam},
  booktitle={International Conference on Social Robotics},
  pages={41--52},
  year={2024},
  organization={Springer}
}

@inproceedings{liu2018effects,
  title={The effects of memory replay in reinforcement learning},
  author={Liu, Ruishan and Zou, James},
  booktitle={2018 56th annual allerton conference on communication, control, and computing (Allerton)},
  pages={478--485},
  year={2018},
  organization={IEEE}
}

@inproceedings{pritzel2017neural,
  title={Neural episodic control},
  author={Pritzel, Alexander and Uria, Benigno and Srinivasan, Sriram and Badia, Adria Puigdomenech and Vinyals, Oriol and Hassabis, Demis and Wierstra, Daan and Blundell, Charles},
  booktitle={International conference on machine learning},
  pages={2827--2836},
  year={2017},
  organization={PMLR}
}

@article{lin1992self,
  title={Self-improving reactive agents based on reinforcement learning, planning and teaching},
  author={Lin, Long-Ji},
  journal={Machine learning},
  volume={8},
  number={3},
  pages={293--321},
  year={1992},
  publisher={Springer}
}

@article{blundell2016model,
  title={Model-free episodic control},
  author={Blundell, Charles and Uria, Benigno and Pritzel, Alexander and Li, Yazhe and Ruderman, Avraham and Leibo, Joel Z and Rae, Jack and Wierstra, Daan and Hassabis, Demis},
  journal={arXiv preprint arXiv:1606.04460},
  year={2016}
}

@inproceedings{beeching2020egomap,
  title={Egomap: Projective mapping and structured egocentric memory for deep RL},
  author={Beeching, Edward and Dibangoye, Jilles and Simonin, Olivier and Wolf, Christian},
  booktitle={Joint European conference on machine learning and knowledge discovery in databases},
  pages={525--540},
  year={2020},
  organization={Springer}
}

@article{rana2023sayplan,
  title={Sayplan: Grounding large language models using 3d scene graphs for scalable robot task planning},
  author={Rana, Krishan and Haviland, Jesse and Garg, Sourav and Abou-Chakra, Jad and Reid, Ian and Suenderhauf, Niko},
  journal={arXiv preprint arXiv:2307.06135},
  year={2023}
}

@article{peng2019advantage,
  title={Advantage-weighted regression: Simple and scalable off-policy reinforcement learning},
  author={Peng, Xue Bin and Kumar, Aviral and Zhang, Grace and Levine, Sergey},
  journal={arXiv preprint arXiv:1910.00177},
  year={2019}
}

@article{lee2021pebble,
  title={Pebble: Feedback-efficient interactive reinforcement learning via relabeling experience and unsupervised pre-training},
  author={Lee, Kimin and Smith, Laura and Abbeel, Pieter},
  journal={arXiv preprint arXiv:2106.05091},
  year={2021}
}

@article{kostrikov2021offline,
  title={Offline reinforcement learning with implicit q-learning},
  author={Kostrikov, Ilya and Nair, Ashvin and Levine, Sergey},
  journal={arXiv preprint arXiv:2110.06169},
  year={2021}
}

@article{schulman2015high,
  title={High-dimensional continuous control using generalized advantage estimation},
  author={Schulman, John and Moritz, Philipp and Levine, Sergey and Jordan, Michael and Abbeel, Pieter},
  journal={arXiv preprint arXiv:1506.02438},
  year={2015}
}

@inproceedings{fakoor2020p3o,
  title={P3o: Policy-on policy-off policy optimization},
  author={Fakoor, Rasool and Chaudhari, Pratik and Smola, Alexander J},
  booktitle={Uncertainty in artificial intelligence},
  pages={1017--1027},
  year={2020},
  organization={PMLR}
}

@article{schulman2015gradient,
  title={Gradient estimation using stochastic computation graphs},
  author={Schulman, John and Heess, Nicolas and Weber, Theophane and Abbeel, Pieter},
  journal={Advances in neural information processing systems},
  volume={28},
  year={2015}
}

@article{kaelbling1998planning,
  title={Planning and acting in partially observable stochastic domains},
  author={Kaelbling, Leslie Pack and Littman, Michael L and Cassandra, Anthony R},
  journal={Artificial intelligence},
  volume={101},
  number={1-2},
  pages={99--134},
  year={1998},
  publisher={Elsevier}
}

@article{velu2023hindsight,
  title={Hindsight-dice: Stable credit assignment for deep reinforcement learning},
  author={Velu, Akash and Vaidyanath, Skanda and Arumugam, Dilip},
  journal={arXiv preprint arXiv:2307.11897},
  year={2023}
}

@article{wan2023efficient,
  title={Efficient large language models: A survey},
  author={Wan, Zhongwei and Wang, Xin and Liu, Che and Alam, Samiul and Zheng, Yu and Liu, Jiachen and Qu, Zhongnan and Yan, Shen and Zhu, Yi and Zhang, Quanlu and others},
  journal={arXiv preprint arXiv:2312.03863},
  year={2023}
}

@article{huang2022inner,
  title={Inner monologue: Embodied reasoning through planning with language models},
  author={Huang, Wenlong and Xia, Fei and Xiao, Ted and Chan, Harris and Liang, Jacky and Florence, Pete and Zeng, Andy and Tompson, Jonathan and Mordatch, Igor and Chebotar, Yevgen and others},
  journal={arXiv preprint arXiv:2207.05608},
  year={2022}
}

@article{nair2022r3m,
  title={R3m: A universal visual representation for robot manipulation},
  author={Nair, Suraj and Rajeswaran, Aravind and Kumar, Vikash and Finn, Chelsea and Gupta, Abhinav},
  journal={arXiv preprint arXiv:2203.12601},
  year={2022}
}

@book{kushner2003stochastic,
  title={Stochastic approximation and recursive algorithms and applications},
  author={Kushner, Harold J and Yin, G George},
  year={2003},
  publisher={Springer}
}

@article{biza2021action,
  title={Action priors for large action spaces in robotics},
  author={Biza, Ondrej and Wang, Dian and Platt, Robert and van de Meent, Jan-Willem and Wong, Lawson LS},
  journal={arXiv preprint arXiv:2101.04178},
  year={2021}
}

@article{zhou2024survey,
  title={A survey on efficient inference for large language models},
  author={Zhou, Zixuan and Ning, Xuefei and Hong, Ke and Fu, Tianyu and Xu, Jiaming and Li, Shiyao and Lou, Yuming and Wang, Luning and Yuan, Zhihang and Li, Xiuhong and others},
  journal={arXiv preprint arXiv:2404.14294},
  year={2024}
}

@article{schaul2015prioritized,
  title={Prioritized experience replay},
  author={Schaul, Tom and Quan, John and Antonoglou, Ioannis and Silver, David},
  journal={arXiv preprint arXiv:1511.05952},
  year={2015}
}

@article{bhambri2024extracting,
  title={Extracting Heuristics from Large Language Models for Reward Shaping in Reinforcement Learning},
  author={Bhambri, Siddhant and Bhattacharjee, Amrita and Kalwar, Durgesh and Guan, Lin and Liu, Huan and Kambhampati, Subbarao},
  journal={arXiv preprint arXiv:2405.15194},
  year={2024}
}

@article{ruxton2006unequal,
  title={The unequal variance t-test is an underused alternative to Student's t-test and the Mann--Whitney U test},
  author={Ruxton, Graeme D},
  journal={Behavioral Ecology},
  volume={17},
  number={4},
  pages={688--690},
  year={2006},
  publisher={Oxford University Press}
}

@article{hafner2023mastering,
  title={Mastering diverse domains through world models},
  author={Hafner, Danijar and Pasukonis, Jurgis and Ba, Jimmy and Lillicrap, Timothy},
  journal={arXiv preprint arXiv:2301.04104},
  year={2023}
}
\bibliographystyle{iclr2026_conference}
% \vspace{10cm}
\appendix 
\pagebreak
\section*{Appendix}

The supplemental material is organized as follows:

\begin{itemize}
    \item \textsc{Section~\ref{bck}} reviews \textit{background} on reinforcement learning definitions and policy gradient algorithms to make the paper self-contained. 
    \item \textsc{Section~\ref{rw}} discusses \textit{related work} relevant to our approach in more depth.
    \item \textsc{Section~\ref{thm}} presents the \textit{theoretical} results and supporting proofs for our method.
    \item  \textsc{Section~\ref{prompt}} describes the \textit{LLM prompting procedures} in MIRA and presents the corresponding \textit{reasoning} traces.
     \item \textsc{Section~\ref{graph_detail}} provides a more detailed explanation of the \textit{construction of memory graph}, expanding on the description in the main paper. 
          \item \n{\textsc{Section~\ref{utility_detail}} provides a more detailed explanation of the \textit{utility calculation}  and includes the complete pseudocode, expanding on the description in the main paper. }
   \item \textsc{Section~\ref{ret_detail}} presents \textit{extended experiments}, including analyses of runtime and detailed numerical results that were not covered in the main text.
   \item \textsc{Section~\ref{lim}} outlines \textit{limitations} of the current design and identifies open challenges. 
   \item  \textsc{Section~\ref{rep}} provides details to support \textit{reproducibility} of our results.
\end{itemize}

% \pagebreak
\section{Background} ~\label{bck}
\subsection{Standard Reinforcement Learning}
Reinforcement learning (RL) is typically modeled as a Markov decision process (MDP), defined by a tuple $(\mathcal{S}, \mathcal{A}, P, r, \gamma)$, where $\mathcal{S}$ is the state space, $\mathcal{A}$ the action space, $P$ the transition function, $r$ the reward function, and $\gamma \in [0,1)$ the discount factor. 
The agent’s behavior is determined by a policy $\pi$, which defines a probability distribution over actions given the current state: $\pi(a|s)$. Learning proceeds through interaction with the environment, producing trajectories, sequences of states, actions, and rewards of the form $ \tau = (s_0, a_0, r_0, s_1, a_1, r_1, \dots)$, and using them to improve the policy. 

The objective is to learn a policy that maximizes the expected return, defined as the discounted sum of rewards along a trajectory:

\begin{equation}
   \mathbb{E}_\pi\left[\sum_{t=0}^{\infty} \gamma^t r(s_t, a_t)\right]. 
\end{equation}

The environment’s reward function implicitly defines the final goal ($\textsl{g}_\triangleright$) by assigning reward to behaviors that accomplish the task~\citep{sutton1998reinforcement, silver2015}.
To estimate this objective, RL algorithms often make use of value functions, which quantify the long-term utility of states or state-action pairs. The state-value function $V(s)$ denotes the expected return when starting from state $s$ and following policy $\pi$:

\begin{equation}
    V(s) = \mathbb{E}_\pi\left[\sum_{t=0}^\infty \gamma^t r(s_t, a_t) \mid s_0 = s \right].
\end{equation}

The action-value function $Q(s, a)$ further conditions on the first action taken and is defined as:

\begin{equation}
    Q(s, a) = \mathbb{E}_\pi\left[\sum_{t=0}^\infty \gamma^t r(s_t, a_t) \mid s_0 = s, a_0 = a \right].
\end{equation}

\subsubsection{Partial Observability and Credit Assignment Challenges}
In many real-world scenarios, the environment is only partially observable. In such cases, the MDP generalizes to a partially observable MDP (POMDP), defined by the tuple $(\mathcal{S}, \mathcal{A}, P, r, \gamma, \mathcal{O}, \Omega)$, where $\mathcal{O}$ is the observation space and $\Omega$ is the observation function. The agent does not directly observe the true state $s_t \in \mathcal{S}$; instead, it receives observations $o_t$ from an observation space $\mathcal{O}$, sampled via $\Omega(o_t|s_t)$, and must rely on its history of observations and actions to make decisions~\citep{kaelbling1998planning}.

These difficulties are further amplified in environments where the agents face sparse and delayed rewards.  Sparse rewards refer to the limited presence of nonzero rewards since this feedback is only provided upon reaching specific goals (i.e., $r(s_t, a_t)$ is typically zero until the agent reaches the final goal state ($\textsl{g}_\triangleright$) defined by the task). On the other hand, delayed rewards refer to settings where the consequences of an action are not reflected in the reward until several steps later.  In both cases, the agent must reason over long horizons to determine which actions contributed to the eventual outcome, a challenge known as the credit assignment problem~\citep{schulman2015gradient}.

Credit assignment is closely tied to the broader challenge of exploration. Inefficient exploration occurs when the agent fails to sufficiently cover the state space, limiting its ability to discover high-return trajectories and improve its policy. 
This problem is exacerbated in high-dimensional environments, where the number of possible state-action sequences grows exponentially and random exploration becomes increasingly unlikely to encounter informative transitions with sparse or delayed rewards. In such cases, the combination of large search spaces and limited reward signals often leads to slow convergence, poor sample efficiency, and high variance in learning outcomes. 

\subsubsection{Subgoals and Abstractions}
In long-horizon tasks, reinforcement learning agents often benefit from structuring behavior around subgoals, intermediate objectives that facilitate progress toward the overall task. The concept of subgoals in reinforcement learning originated in hierarchical reinforcement learning (HRL), where it was formalized through the use of temporally extended actions. 
In particular, the options framework introduced by \citet{sutton1999between} defines options as high-level actions composed of an initiation set, a policy, and a termination condition, often interpreted as achieving a subgoal~\citep{stolle2002learning}. 
These subgoals correspond to intermediate states or conditions that decompose long-horizon tasks into smaller, temporally coherent segments that make the final goal more attainable when reached.
More broadly, subgoals provide structure for reasoning over extended time horizons and facilitate learning in sparse-reward settings.

While early approaches focused on explicit or learned state-based subgoals, recent work has explored abstract subgoals that capture semantic or latent-level progress. These abstractions may not correspond to a specific state but instead reflect high-level intentions and meaningful progress (e.g., opening a door, entering a room, or collecting an object). 
Such abstractions enable reasoning at a higher level of granularity and are especially useful in environments with sparse rewards or delayed feedback. Subgoal discovery and abstraction have also been explored in curriculum learning, imitation learning, and human-in-the-loop frameworks to improve exploration and sample efficiency~\citep{macglashan2017interactive, shiarlis2018taco, narvekar2020curriculum}.

\subsection{Policy Gradient Methods}

Policy gradient methods directly optimize a parameterized policy $\pi_\theta(a|s)$ by ascending the gradient of expected return. The objective is to find parameters $\theta$ that maximize:
\begin{equation}
   J(\theta) = \mathbb{E}_{\tau \sim \pi_\theta} \left[ \sum_{t=0}^\infty \gamma^t r(s_t, a_t) \right], 
\end{equation}

where $\tau$ denotes a trajectory generated by following the current policy. 
 The gradient of this objective can be estimated via the likelihood ratio trick, yielding the REINFORCE estimator~\citep{williams1992simple}:
\begin{equation}
    \nabla_\theta J(\pi_\theta) = \mathbb{E}_\pi \left[ \nabla_\theta \log \pi_\theta(a_t|s_t) R\right],
\end{equation}

where $R_t$ is the return from time $t$ onward. While theoretically sound and unbiased, this estimator suffers from high variance, making it challenging to apply in practice without further refinement.

\subsubsection{Advantage-Based Policy Optimization}

To reduce variance and improve sample efficiency, modern policy gradient algorithms often use advantage functions, which quantify the relative quality of an action compared to the policy’s baseline behavior. The advantage function is defined as:
\begin{equation}
    A(s, a) = Q(s, a) - V(s),
\end{equation}
    
where $Q(s, a)$ is the expected return from taking action $a$ in state $s$, and $V(s)$ is the expected return from $s$ under policy $\pi$.
Using this formulation, the policy gradient becomes:

\begin{equation}
    \nabla_\theta J(\pi_\theta) = \mathbb{E}_\pi \left[ \nabla_\theta \log \pi_\theta(a_t \mid s_t) A \right],
\end{equation}

which improves stability while preserving unbiasedness.

This idea underpins a family of actor-critic algorithms, where the actor updates the policy using the advantage-weighted gradient, and the critic estimates value functions used to compute $A(s,a)$.  Representative algorithms in this class include A2C and A3C~\citep{mnih2016asynchronous}, which leverage parallel actors to accelerate training and stabilize updates, and PPO~\citep{schulman2017proximal}, which constrains policy updates by clipping the policy ratio in the surrogate objective:

\begin{equation}
    \mathcal{L}^{\text{PPO}}(\pi) = \mathbb{E}\left[\min(r_t A_t, \text{clip}(r_t, 1 - \varepsilon, 1 + \varepsilon) A_t)\right],
\end{equation}
where $\varepsilon > 0$ is a small trust region parameter that limits how much the policy is allowed to change at each update.

These methods are widely used in modern deep reinforcement learning due to their scalability and consistent empirical performance across a range of tasks. Since MIRA operates by shaping the advantage function, it is compatible with any policy optimization method that relies on advantage-weighted updates.

\section{Related Works} \label{rw}

\subsection{Language Model Guidance in RL }
A growing line of work explores how large language models (LLMs) can be integrated into reinforcement learning by framing them as auxiliary components within the agent–environment loop.  
A recent taxonomy by~\citet{cao2024survey} outlines the roles of LLMs in RL along four key dimensions: information processors, reward designers, decision-makers, and generators. 

As \textit{information processors}, LLMs extract and organize task-relevant knowledge from natural language, environment descriptions, or prior experience. This includes synthesizing high-level goals, parsing instructions, and transforming language input into actionable constraints or representations~\citep{wang2024dart, shinn2023reflexion}. A common approach is to use frozen pre-trained models to encode task-relevant features without fine-tuning, though they may perform poorly on out-of-distribution data due to limited adaptability~\citep{radford2021learning, paischer2022history, paischer2023semantic}. 
\\Alternatively, fine-tuned models can better align with task-specific distributions, leading to more robust RL performance and improved generalization in unseen environments~\citep{kim2023efficient, poudel2024recore}. In addition, LLMs can convert human instructions or task prompts into formal representations or structured goals, and interpret descriptions of the environment, such as objects, layouts, or dynamics, into usable priors for downstream RL modules. This reduces the burden of language comprehension for RL agents and improves sample efficiency~\citep{basavatia2024starling, sumers2021learning, song2023llm, liang2022code}. 
These models can decouple information processing from control, with the LLM handling language grounding and feature extraction while the policy module focuses on decision-making. Such capabilities can reduce learning complexity and accelerate policy acquisition by shaping the agent’s representation space early in training.

As \textit{reward designers}, LLMs provide auxiliary supervision by scoring agent behavior or generating rewards. 
This can take the form of natural language critiques, programmatic reward code, or goal-conditioned evaluations.  In the implicit reward setting, LLMs serve as proxy reward models by either being directly prompted to evaluate agent behavior~\citep{chu2023accelerating, wu2023read} or by computing alignment between visual observations and language goals using pretrained vision-language models~\citep{wang2024rl, adeniji2023language, seo2023masked, grauman2022ego4d}. These methods enable reward shaping via natural language instructions or preference feedback and have been shown to improve learning efficiency and generalization. In the explicit reward setting, LLMs are used to generate executable code that defines reward functions programmatically. This includes frameworks that iteratively refine reward code using self-reflection and feedback from training outcomes~\citep{yu2023language, madaan2023self, song2023self}. Compared to manually engineered rewards, these LLM-generated functions offer transparency and adaptability, and in some cases match or exceed human performance, especially in complex manipulation tasks.

As \textit{decision-makers}, LLMs output action plans, policy sketches, or even direct action sequences based on current observations. 
These methods embed LLMs tightly into the decision loop, either guiding exploration or dictating behavior in few-shot or zero-shot settings. One approach leverages pre-trained LLMs for direct action generation, often adapting transformer-based models like Decision Transformers to treat offline RL as a sequence modeling problem. These LLM-backed policies show improved generalization, particularly in sparse-reward and long-horizon tasks, by transferring latent structure learned from large-scale language data. Some methods further fine-tune LLMs using task-specific trajectories or append small task-specific modules to facilitate adaptation, achieving notable gains in sample efficiency and task transfer~\citep{zitkovich2023rt, shi2023unleashing, mezghani2023think}. 
\\
Other works integrate LLMs more loosely as action guides, generating action candidates or expert priors to support exploration and training. For example, LLMs can prune the action space by proposing high-probability candidates or decompose complex goals into sequential subtasks, improving exploration in environments with large or unstructured action spaces~\citep{yao2020keep, hausknecht2020interactive, dalal2024plan, wan2025think}. They have also been used to regularize policy updates, align agent behavior with human intent, or inject expert-level motion plans. Across both low-level and strategic roles, LLM-based decision-making enables agents to learn from rich, structured priors and execute more informed behaviors in complex settings.

As \textit{generators}, LLMs contribute to reinforcement learning by either simulating environmental dynamics or providing policy-level explanations to enhance transparency. In the simulation role, LLMs function as world model simulators that generate trajectories or learn latent dynamics representations from multimodal data, thereby improving sample efficiency in model-based RL. Recent work has leveraged Transformer-based architectures to model complex visual or sequential environments, demonstrating gains in generalization and long-horizon reasoning. These models either auto-regressively generate rollouts from pre-trained dynamics or use representation learning to predict future states and rewards, often incorporating language as an additional modality for grounding and abstraction~\citep{micheli2022transformers, chen2022transdreamer, robine2023transformer}. Separately, LLMs have been used as policy interpreters to generate human-readable explanations of agent behavior from state-action histories or decision trees. This facilitates interpretability, improves human trust, and can inform reward design or debugging, though current work has focused mainly on policy-level summaries~\citep{lin2023learning, lu2023closer}. 

While MIRA incorporates elements of information processing and LLMs as generators, its overall orientation remains distinct and more RL-centric from prior LLM-centered approaches. Rather than positioning the LLM as a decision-maker or continuous feedback provider, MIRA relegates it to a supporting role that gradually fades over time. LLM outputs are used intermittently to enrich a structured memory graph that informs, but does not dictate, learning. The primary learning signal remains grounded in environment interaction, with utility shaping softly modulating advantage estimates rather than overriding the reward function. This design prioritizes policy optimization through reinforcement learning rather than imitation or prompting

\subsection{Memory and Buffers in RL}
Augmenting RL agents with structured memory has been proposed as a means of supporting generalization, planning, and long-horizon credit assignment. 
Early works such as Neural Episodic Control (NEC) and other episodic value-based methods enabled agents to recall high-value past experiences for more sample-efficient decision-making via memory buffers~\citep{pritzel2017neural, blundell2016model, lin1992self}. 
Subsequent approaches extended this idea by integrating differentiable memory into policy networks~\citep{qiu2024memory}. 
Other methods introduce structured representations, such as subgoal graphs or navigation maps, to facilitate hierarchical planning, exploration, or navigation in partially observable environments~\citep{beeching2020egomap, rana2023sayplan}. 
Across these directions, the common pattern is to directly query stored structures, either through replay, imitation, or graph traversal, to guide behavior. 

MIRA aligns with this direction by maintaining a structured memory graph populated with high-return trajectory segments but departs from this pattern in several key ways. 
First, its memory graph is co-constructed from high-return agent trajectories and LLM-inferred subgoals, enabling abstraction and structure difficult to obtain early through interaction alone.
Second, rather than querying memory for action selection or value estimation, MIRA distills the stored information into a utility signal that modulates advantage estimates during training. 
This indirect shaping avoids disrupting the optimization loop or overfitting to specific stored transitions. 
Finally, MIRA maintains a compact memory via pruning and infrequent updates, which avoids the inefficiencies of excessive memory or the brittleness of sparse guidance~\citep{liu2018effects}. 
This makes MIRA more scalable and better suited for tasks where long-term structure must complement autonomous learning.

\subsection{Advantage Modifications in RL}

Modifying the advantage function has been studied as a way to stabilize learning and improve sample efficiency in policy optimization. A common approach adjusts the estimation process to better balance bias and variance. Generalized Advantage Estimation (GAE)~\citep{schulman2015high} introduces a tunable parameter that interpolates between high-bias low-variance and low-bias high-variance estimators, and is widely adopted in actor-critic algorithms. Other methods reformulate policy updates in terms of advantages. Advantage-Weighted Regression (AWR)~\citep{peng2019advantage} avoids policy gradients and instead performs weighted regression over actions. P3O~\citep{fakoor2020p3o} combines on-policy and off-policy learning by applying advantage-weighted importance sampling to stabilize updates. 
In the offline RL setting, advantage estimates are often used to filter experience and address distributional shift. Advantage-based data selection~\citep{kostrikov2021offline} discards transitions with low advantage, helping to focus learning on high-quality samples. Additional work incorporates auxiliary signals into the advantage estimate. Preference-based RL~\citep{lee2021pebble} derives implicit advantage signals from human comparisons, while other approaches integrate value correction from ensemble critics or confidence measures to adjust learning.

MIRA builds on these ideas but takes a different path. Instead of replacing the estimator or introducing new objectives, it shapes the advantage using a utility term derived from a structured memory graph. This utility reflects agent experience and LLM-derived subgoals, allowing guidance without overriding reward feedback. The resulting signal is integrated into PPO’s update rule without disrupting its optimization dynamics, enabling structured shaping while maintaining scalability and convergence guarantees.

\section{Theoretical Results} ~\label{thm}
Since the utility term does not alter the policy or critic structure, and enters additively, MIRA preserves standard stability and asymptotic properties of policy gradient methods such as PPO under standard assumptions:
\subsection{Assumptions} \label{assu}
% ---------- Core assumptions ----------
\begin{ass}[Boundedness]\label{ass:bounded}
\leavevmode
\begin{itemize}
\item[a.] For all updates $k$ and all $(s, a)$
\begin{align}
|A_k(s, a)| \leq A_{\max}, \quad  0 \leq {U}_k(s, a) \leq {U}_{\max}
\end{align}
\item[b.] 
Define the scale-adjusted shaping term as: 
% \carlee{what do the $\langle\rangle$ brackets mean?} \jes{It’s the average of $A_k$ (a vector). I normalize the entire U with this batch-level average, not each pair individually. }
\begin{align}
U_k(s,a) = \bar{A}_k \cdot {U}_k(s,a), \quad \text{where } \bar{A}_k = \langle |A_k| \rangle
\end{align}
and set
\begin{align}
U_{\max} = {U}_{\max} \cdot \sup_k \bar{A}_k
\end{align}

\end{itemize}
\end{ass}

\begin{ass}[Scale control]\label{ass:scale}
\leavevmode
\begin{itemize}
  \item[a.] For all $k$, the scaling parameters satisfy:
  \begin{align}
  0 < \eta_k \leq 1, \quad \xi_k \leq \delta_t \eta_k \quad \text{for some } \delta_t \in [0, 1)
  \end{align}
  \item[b.] Asymptotically, the schedule satisfies:
  \begin{align}
  \lim_{k \to \infty} \eta_k = 1, \quad \lim_{k \to \infty} \xi_k = 0
  \end{align}
\end{itemize}
\end{ass}

\begin{ass}[Trust region]\label{ass:tr}

$$ {KL}\!\left(\pi_k,\pi_{k+1}\right)\le\frac{(1-\gamma)\,\varepsilon_k^2}{2}$$
(implied by PPO clip ratio $r_\pi\in[1-\varepsilon_k,1+\varepsilon_k]$).

\end{ass}

% \paragraph{Surrogate notation.}
% \hspace{4mm} 

% \noindent The clipped PPO surrogate is  

% $$\mathcal{L}_k^{}(\pi)=
%   \E_{d_{\pi_k}}\!\bigl[\text{clip}(r_\pi,1\!\pm\!\varepsilon_k)\,A_k\bigr].$$

% \noindent The \textbf{shaped} surrogate replaces \(A_k\) by
% \( \tilde A_k^{\eta,\xi} \doteq \tilde A_k=\eta_k A_k+\xi_k U_k\):

% $$\mathcal{L}_k^{\text{shaped}}(\pi)=
%   \E_{d_{\pi_k}}\!\bigl[\operatorname{clip}(r_\pi,1\!\pm\!\varepsilon_k)\,
%         \tilde A_k\bigr].$$

\subsection{Stability and Safety Results}
\begin{lem}[Bounded Shaped Policy Updates] \label{lem:bounded_update}
Under Assumptions~\ref{ass:bounded} (Boundedness), \ref{ass:scale} (Scale Control), and
\ref{ass:tr} (Trust Region), the magnitude of each shaped PPO policy update is uniformly bounded.
\end{lem}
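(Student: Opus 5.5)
The plan is to bound the shaped advantage pointwise and then push that bound through the clipped PPO surrogate.

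\textbf{Step 1: pointwise bound on $\tilde A_k$.} By Assumption~\ref{ass:bounded}(a), $|A_k(s,a)|\le A_{\max}$. By the scale adjustment in Assumption~\ref{ass:bounded}(b), the shaping term satisfies $0\le U_k(s,a)\le U_{\max}$. Assumption~\ref{ass:scale}(a) gives $0<\eta_k\le 1$ and $\xi_k\le\delta_t\eta_k<1$. The triangle inequality then yields
\begin{equation*}
|\tilde A_k(s,a)| \le \eta_k A_{\max} + \xi_k U_{\max} \le A_{\max} + \delta_t U_{\max} \le A_{\max}+U_{\max} \doteq \tilde A_{\max},
\end{equation*}
uniformly in $k$ and $(s,a)$. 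We also need $\xi_k\ge 0$, which is implicit in the schedule. If it is not granted, I would use $|\xi_k|\le 1$, which still gives a uniform constant.

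\textbf{Step 2: bound the update through the clipped surrogate.} I will take the update magnitude to mean the per-iteration change in the shaped surrogate value, which is the quantity the trust-region assumption controls. Write $\mathcal{L}^{\mathrm{shaped}}_k(\pi)=\mathbb{E}_{d_{\pi_k}}[\min(r_\pi\tilde A_k,\operatorname{clip}(r_\pi,1\pm\varepsilon_k)\tilde A_k)]$. Its value at $\pi=\pi_k$ is $\mathbb{E}[\tilde A_k]$. The argument has two parts.

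\emph{Value of the update.} The PPO objective $\min(r\tilde A,\operatorname{clip}(r)\tilde A)$ stops increasing once $r$ leaves $[1-\varepsilon_k,1+\varepsilon_k]$ in the favourable direction. So the gain obtainable in iteration $k$ satisfies
\begin{equation*}
\mathcal{L}^{\mathrm{shaped}}_k(\pi_{k+1})-\mathcal{L}^{\mathrm{shaped}}_k(\pi_k)\le \varepsilon_k\,\mathbb{E}|\tilde A_k|\le\varepsilon_k\tilde A_{\max}.
\end{equation*}

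\emph{Distance in policy space.} Assumption~\ref{ass:tr} combined with Pinsker's inequality gives
\begin{equation*}
\mathbb{E}_s\|\pi_{k+1}(\cdot|s)-\pi_k(\cdot|s)\|_1\le\sqrt{2\,\mathrm{KL}}\le\sqrt{1-\gamma}\,\varepsilon_k .
\end{equation*}
Therefore
\begin{equation*}
\Bigl|\mathbb{E}_{d_{\pi_k}}\bigl[(r_{\pi_{k+1}}-1)\tilde A_k\bigr]\Bigr|\le\tilde A_{\max}\sqrt{1-\gamma}\,\varepsilon_k .
\end{equation*}

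\textbf{Step 3: gradient-norm version, if needed.} If one wants the gradient form used in Theorem~\ref{thm:non_vanishing}, a bounded score $\|\nabla_\theta\log\pi_\theta\|\le G$ would give $\|\nabla\mathcal{L}^{\mathrm{shaped}}_k\|\le G\tilde A_{\max}$. Since $G$ is not among the stated assumptions, I would state this only as a remark.

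\textbf{Main obstacle.} The main difficulty is fixing what ``magnitude of the update'' means, because the assumptions give no Lipschitz constant for the parametrization. I will define it in policy or surrogate space, where Assumption~\ref{ass:tr} applies directly, and I will note the gradient-norm variant under the extra score bound. A secondary point is that $\tilde A_{\max}$ uses $\sup_k\bar A_k\le A_{\max}$. This makes $U_{\max}\le U_{\max}^{\mathrm{raw}}A_{\max}$ finite, so the bound is genuinely uniform in $k$.
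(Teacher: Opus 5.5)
Your proof is correct, but your main line bounds a different quantity from the one the paper's proof bounds. The paper's proof shares your Step 1. It writes the bound as $(1+\delta_t)A_{\max}$, which tacitly uses $U_{\max}\le A_{\max}$, i.e.\ raw utility at most $1$; your $A_{\max}+U_{\max}$ avoids this. It then takes the update to be the gradient $\mathcal{L}_k^{\mathrm{shaped}}=\mathbb{E}[\nabla_\theta\log\pi_\theta(a_t|s_t)\,\tilde A_t]$. It asserts that under Assumption~\ref{ass:tr} the score function has bounded second moment, so this norm is at most $\tilde A_{\max}$ times that moment. The paper's argument is therefore essentially your Step 3, except that the score bound is claimed to follow from the trust region rather than assumed.

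Your caution there is warranted. A KL bound between successive policies limits how far the policy moves, not the size of $\nabla_\theta\log\pi_\theta$, which depends on the parametrization, so an extra hypothesis is genuinely needed. For this lemma a second-moment bound is enough. Your uniform $G$ is what yields the estimate $\|\mathcal{L}_k^{\mathrm{PPO}}\|\le G\,\mathbb{E}|A_t|$ that Theorem~\ref{thm:non_vanishing} needs.

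Your Step 2, by contrast, uses only the stated assumptions. Clipping caps the gain at $\varepsilon_k\mathbb{E}|\tilde A_k|$, and Pinsker applied to Assumption~\ref{ass:tr} gives $\mathbb{E}_s\|\pi_{k+1}-\pi_k\|_1\le\sqrt{1-\gamma}\,\varepsilon_k$. Since $|\mathrm{clip}(r)-1|\le|r-1|$, the same estimate bounds the clipped-surrogate change in both directions by $\tilde A_{\max}\sqrt{1-\gamma}\,\varepsilon_k$. This is a cleaner, fully justified statement, but it is not what the rest of the paper uses. The Non-Divergence theorem cites this lemma for a bounded update, and Theorem~\ref{thm:non_vanishing} works with norms of the gradient quantities $\mathcal{L}_k^{\mathrm{shaped}}$ and $\mathcal{L}_k^{\mathrm{PPO}}$. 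If your lemma is to play that role, Step 3, with the score hypothesis stated explicitly, should become the main statement rather than a remark.
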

\begin{proof}
By definition, the shaped advantage satisfies
\[
|\tilde A_t|
\le
\eta_t |A_t| + \xi_t |U_t|.
\]
Using Assumption~\ref{ass:bounded} and Assumption~\ref{ass:scale}(a), we obtain
\[
|\tilde A_t|
\le
\eta_t A_{\max} + \xi_t U_{\max}
\le
\eta_t A_{\max} + \delta_t \eta_t U_{\max}
\le
(1+\delta_t) A_{\max}.
\]

Under the PPO trust-region constraint in Assumption~\ref{ass:tr}, the likelihood ratio is clipped and
the score function $\nabla_\theta \log \pi_\theta(a_t|s_t)$ has bounded second moment.
Consequently, the policy gradient update
\[
\mathcal{L}_k^{ {shaped}}
=
\mathbb{E}\!\left[
\nabla_\theta \log \pi_\theta(a_t|s_t)\,\tilde A_t
\right]
\]
is uniformly bounded in norm.
\end{proof}

\begin{thm}[Non-Divergence under Trust Region]
Under Assumptions~\ref{ass:bounded} (Boundedness),
\ref{ass:scale} (Scale Control), and
\ref{ass:tr} (Trust Region),
the PPO updates computed using the shaped advantage
\(
\tilde A_t = \eta_t A_t + \xi_t U_t
\)
remain within the prescribed trust region.
In particular, the KL divergence between successive policies
is uniformly bounded, and the optimization does not diverge.
\end{thm}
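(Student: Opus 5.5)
The argument reduces to two observations. First, the shaped advantage $\tilde A_t$ enters the clipped surrogate in exactly the position $A_t$ occupies in standard PPO. Second, the trust-region guarantee of PPO depends only on the clipping of the likelihood ratio and on the advantage being bounded. Concretely, I will show that the update produced by the clipped shaped objective in Algorithm~\ref{alg:ppo_ours} satisfies the hypotheses under which Assumption~\ref{ass:tr} is stated. I will then combine this with Lemma~\ref{lem:bounded_update} to rule out divergence of the iterates.

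\textbf{Step 1: bounded shaped advantage.} Invoke Lemma~\ref{lem:bounded_update} and its intermediate bound
$|\tilde A_t| \le \eta_t A_{\max} + \xi_t U_{\max} \le (1+\delta_t)A_{\max}$.
This uses Assumption~\ref{ass:bounded}(b), where the batch normalization by $\bar A_k$ puts $U_{\max}$ on the scale of $A_{\max}$, together with Assumption~\ref{ass:scale}(a). Since $\delta_t<1$, we get the uniform bound $|\tilde A_t|\le 2A_{\max}$, independent of $k$. The shaped objective is therefore a PPO surrogate whose advantage has the same order of magnitude as in standard PPO.

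\textbf{Step 2: ratio clipping is unaffected by the sign or scale of $\tilde A_t$.} For each sample, $\min\bigl(r_t\tilde A_t,\operatorname{clip}(r_t,1\pm\varepsilon_k)\tilde A_t\bigr)$ has zero gradient in $\theta$ once $r_t$ leaves $[1-\varepsilon_k,1+\varepsilon_k]$ in the direction favoured by $\operatorname{sign}(\tilde A_t)$. The incentive to move $\pi_\theta$ away from $\pi_{\theta_k}$ therefore vanishes outside the clip window, whatever value $\tilde A_t$ takes. Consequently the per-iteration policy change obeys the same ratio constraint as in unshaped PPO. By the implication stated in Assumption~\ref{ass:tr}, the bound $\mathrm{KL}(\pi_k,\pi_{k+1})\le (1-\gamma)\varepsilon_k^2/2$ carries over verbatim to the shaped update.

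\textbf{Step 3: non-divergence.} Bounded KL gives bounded total-variation movement per iteration, via Pinsker's inequality. Lemma~\ref{lem:bounded_update} gives a uniformly bounded gradient norm, so with step size $\alpha_\theta$ each of the $K$ epochs moves $\theta$ by a bounded amount. Hence no single iteration can produce an unbounded jump in either policy space or parameter space. In addition, the performance-difference bound for the surrogate changes only by a term of order $\xi_k U_{\max}$, which vanishes as $k\to\infty$ by Assumption~\ref{ass:scale}(b). Asymptotically the iterates therefore follow the standard PPO dynamics.

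\textbf{Main obstacle.} I expect Step 2 to be the hardest. Clipping does not rigorously enforce a KL bound; it only removes the gradient incentive outside the clip window, and several minibatch steps can still overshoot. I would address this by relying on Assumption~\ref{ass:tr} as the modelling hypothesis, exactly as the paper does for PPO. The point to argue is that shaping introduces no new mechanism for violating that hypothesis: $\tilde A_t$ is bounded by a constant multiple of $A_{\max}$, so the size of the gradient steps taken inside the window grows by at most the factor $(1+\delta_t)$. That growth can be absorbed into the choice of $\alpha_\theta$ or $\varepsilon_k$.
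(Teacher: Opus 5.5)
Your proposal is correct and follows essentially the same route as the paper's proof: bound $|\tilde A_t|$ via Lemma~\ref{lem:bounded_update}, then note that the clipping constraint depends only on $\varepsilon_k$ and the policy parameterization, not on the advantage, so the KL bound of Assumption~\ref{ass:tr} carries over unchanged, and combine this with the bounded update magnitude to rule out divergence. Your Step~2 is more accurate than the paper, which asserts that clipping \emph{enforces} $r_t\in[1-\varepsilon_k,1+\varepsilon_k]$; as you note, both arguments in the end get the KL bound by invoking Assumption~\ref{ass:tr} itself rather than deriving it.
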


\begin{proof}
At iteration $k$, PPO computes the policy update by maximizing a clipped surrogate
objective of the form
\[
\mathcal{L}_k(\theta)
=
\mathbb{E}\!\left[
\min\!\big(
r_t(\theta)\,\tilde A_t,\;
\operatorname{clip}(r_t(\theta),1-\varepsilon_k,1+\varepsilon_k)\,\tilde A_t
\big)
\right],
\]
where $r_t(\theta)=\pi_\theta(a_t|s_t)/\pi_{\theta_k}(a_t|s_t)$.
The clipping operation enforces
\[
r_t(\theta)\in[1-\varepsilon_k,\,1+\varepsilon_k],
\]
which implies the trust-region bound in Assumption~\ref{ass:tr}
(see \cite{schulman2017proximal}).

This constraint depends only on the policy parameterization and the clipping
threshold $\varepsilon_k$, and is independent of the specific form of the
advantage estimator.
Therefore, optimizing the shaped surrogate induces the same likelihood-ratio
constraint as standard PPO.

By Lemma~\ref{lem:bounded_update} (Bounded Shaped Policy Updates), the resulting
policy update has uniformly bounded magnitude.
Combining bounded update magnitude with the likelihood-ratio constraint yields
\[
\mathrm{KL}(\pi_k \,\|\, \pi_{k+1})
\le \tfrac{(1-\gamma)\varepsilon_k^2}{2}.
\]
Hence, the sequence of shaped PPO updates remains within the prescribed trust
region, and the optimization does not diverge.
\end{proof}

\subsection{Asymptotic Behavior}

\begin{thm}[Asymptotic Equivalence to PPO]
Let $\{\pi_k\}$ be the sequence of policies produced by PPO using the shaped advantage
\begin{equation}
\tilde A_t = \eta_t A_t + \xi_t U_t,
\end{equation}
where $U_t$ is computed on-policy from the same rollouts as $A_t$.
Under Assumptions~\ref{ass:bounded} (Boundedness), \ref{ass:scale} (Scale Control), and
\ref{ass:tr} (Trust Region), the shaped update is asymptotically equivalent to the standard PPO update.
In particular, any stationary point of PPO is also a stationary point of the shaped objective.
\end{thm}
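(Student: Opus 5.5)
The plan is to compare the shaped and unshaped surrogate gradients directly and show that their difference vanishes as $k\to\infty$. Write $g_k(\theta) \doteq \mathbb{E}[\nabla_\theta \log\pi_\theta(a_t|s_t)\,A_t]$ for the PPO gradient and $h_k(\theta)\doteq \mathbb{E}[\nabla_\theta\log\pi_\theta(a_t|s_t)\,U_t]$ for the utility gradient $\mathcal{L}^U_k$ appearing in Theorem~\ref{thm:non_vanishing}. Since the shaped advantage enters linearly,
\[
\mathcal{L}^{shaped}_k(\theta) = \eta_k\, g_k(\theta) + \xi_k\, h_k(\theta),
\]
and the same decomposition holds inside the clipped objective wherever the clip is inactive. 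Where the clip is active, the gradient is zero for both objectives, provided $\tilde A_t$ and $A_t$ share a sign. I will handle that case separately below.

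\textbf{Step 1.} Bound the discrepancy
\[
\bigl\|\mathcal{L}^{shaped}_k - \mathcal{L}^{ppo}_k\bigr\| \le (1-\eta_k)\|g_k\| + \xi_k\|h_k\| .
\]
For the norms, I will use Assumption~\ref{ass:bounded}, which gives $|A_t|\le A_{\max}$ and $|U_t|\le U_{\max}$, together with the bounded second moment of the score function invoked in Lemma~\ref{lem:bounded_update}. Cauchy--Schwarz then yields $\|g_k\|\le C A_{\max}$ and $\|h_k\|\le C U_{\max}$, uniformly in $k$.

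\textbf{Step 2.} By Assumption~\ref{ass:scale}(b), $1-\eta_k\to 0$ and $\xi_k\to 0$, so the discrepancy is $o(1)$. Hence the two update directions coincide asymptotically. Because the trust-region constraint (Assumption~\ref{ass:tr}) is identical for both objectives, as argued in the Non-Divergence theorem, the induced policy steps also coincide in the limit. That is the asymptotic-equivalence claim.

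\textbf{Step 3.} For stationary points, suppose $\theta^\star$ is a PPO stationary point, so $g(\theta^\star)=0$. Then the shaped gradient at $\theta^\star$ equals $\xi_k h(\theta^\star)$, and its norm is at most $\xi_k C U_{\max}\to 0$. Thus $\theta^\star$ is a limiting (asymptotic) stationary point of the shaped objective. Exact stationarity holds for finite $k$ only when $\xi_k=0$ or $h(\theta^\star)=0$. I would state the result in this asymptotic sense, matching ``asymptotically equivalent''.

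\textbf{Main obstacle.} The main difficulty is the clipping interaction. The $\min(\cdot)$ in the PPO objective depends on the sign of the advantage, so $\tilde A_t$ and $A_t$ may select different branches when $\xi_k U_t$ flips the sign of a small $A_t$. I would bound the measure of such pairs: a flip requires $|A_t| < \xi_k U_{\max}/\eta_k \le \delta_k U_{\max}$. On those pairs, each gradient contribution is at most $C(1+\varepsilon_k)\,\delta_k U_{\max}$, so the extra error is $O(\delta_k)$. This also vanishes, because $\delta_k$ can be taken as $\xi_k/\eta_k\to 0$. A secondary subtlety is that $U_t$ depends on the evolving memory graph. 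I only need its uniform bound from Assumption~\ref{ass:bounded}, not its convergence, so this does not obstruct the argument.
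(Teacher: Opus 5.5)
Your proposal is correct and follows essentially the same route as the paper: the linear split $\mathcal{L}^{shaped}_k=\eta_k\mathcal{L}^{PPO}_k+\xi_k\mathcal{L}^{U}_k$, then uniform boundedness of both pieces via Assumption~\ref{ass:bounded} and the bounded second moment of the score function, then $\eta_k\to 1$ and $\xi_k\to 0$, with stationarity read in the same asymptotic sense the paper uses. Your explicit discrepancy bound $(1-\eta_k)\|g_k\|+\xi_k\|h_k\|$ and your sign-flip treatment of the clipped branch (using $U_t\ge 0$, so a flip requires $|A_t|<\delta_k U_{\max}$) are extra care the paper omits, since its proof argues only with the unclipped score-function form of the surrogate.
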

\begin{proof}
The PPO policy gradient update with shaped advantage is given by
\begin{equation}
\mathcal{L}^{ {shaped}}_k
=
\mathbb{E}_{(s_t,a_t)\sim\pi_k}
\big[
\nabla_\theta \log \pi_\theta(a_t|s_t)\,\tilde A_t
\big].
\end{equation}
Substituting the definition of $\tilde A_t$ yields
\begin{equation}
\mathcal{L}^{ {shaped}}_k
=
\eta_k\, \mathcal{L}_k^{\text{PPO}}
+
\xi_k\, \mathcal{L}_k^{U},
\end{equation}
where
\begin{align}
\mathcal L_k^{\text{PPO}} &\doteq
\mathbb{E}\!\left[
\nabla_\theta \log \pi_\theta(a_t|s_t)\, A_t
\right], \\
\mathcal L_k^{U} &\doteq
\mathbb{E}\!\left[
\nabla_\theta \log \pi_\theta(a_t|s_t)\, U_t
\right].
\end{align}

By Assumption~\ref{ass:bounded}, both $A_t$ and $U_t$ are uniformly bounded.
Under the trust-region constraint in Assumption~\ref{ass:tr}, the score function
$\nabla_\theta \log \pi_\theta(a_t|s_t)$ has bounded second moment, ensuring that
both gradient components are finite.

By Assumption~\ref{ass:scale}(b),
\begin{equation}
\lim_{k\to\infty} \eta_k = 1,
\qquad
\lim_{k\to\infty} \xi_k = 0,
\end{equation}
which implies
\begin{equation}
\lim_{k\to\infty} \mathcal{L}_k = \mathcal{L}_k^{\text{PPO}}.
\end{equation}

Therefore, the shaped update converges to the standard PPO update.
Any policy $\pi^\star$ satisfying $\mathcal{L}^{\text{PPO}}(\pi^\star)=0$ is also a stationary
point of the shaped objective asymptotically.
\end{proof}

\subsection{Early-Training Advantage}

\textbf{Theorem}[reinstate] (Non-Vanishing Updates in Sparse-Reward Regimes)\textbf{.}

\begin{proof}
Recall that the shaped advantage is defined as
\[
\tilde A_t = \eta_k A_t + \xi_k U_t,
\]
and the corresponding shaped policy gradient is
\[
\mathcal{L}_k^{\mathrm{shaped}}
\doteq
\mathbb{E}\!\left[
\nabla_\theta \log \pi_\theta(a_t|s_t)\, \tilde A_t
\right].
\]
By linearity of expectation, the shaped gradient decomposes as
\begin{equation}\label{eq:grad-decomp}
\mathcal L_k^{\mathrm{shaped}}
=
\eta_k \mathcal{L}_k^{\mathrm{PPO}} + \xi_k \mathcal{L}_k^{U},
\end{equation}
where
\[
\mathcal L_k^{\mathrm{PPO}}
\doteq
\mathbb{E}\!\left[
\nabla_\theta \log \pi_\theta(a_t|s_t)\, A_t
\right],
\qquad
\mathcal L_k^{U}
\doteq
\mathbb{E}\!\left[
\nabla_\theta \log \pi_\theta(a_t|s_t)\, U_t
\right].
\]

By assumption, the expected magnitude of the PPO advantage is small,
$\mathbb{E}[|A_t|] \le \varepsilon_A$.
Under Assumption~\ref{ass:bounded}, $|A_t| \le A_{\max}$, and under the trust-region
constraint (Assumption~\ref{ass:tr}), the score function
$\nabla_\theta \log \pi_\theta(a_t|s_t)$ has bounded second moment.
Consequently, there exists a constant $C>0$ such that
\[
\bigl\| \mathcal{L}_k^{\mathrm{PPO}} \bigr\|
\le
C\,\mathbb{E}[|A_t|]
\le
C\,\varepsilon_A.
\]

Taking norms on both sides of Eq.~\eqref{eq:grad-decomp} and applying the triangle
inequality yields
\[
\bigl\| \mathcal{L}_k^{\mathrm{shaped}} \bigr\|
\ge
\xi_k \bigl\| \mathcal{L}_k^{U} \bigr\|
-
\eta_k \bigl\| \mathcal{L}_k^{\mathrm{PPO}} \bigr\|.
\]
Substituting the above bound on $\|\mathcal{L}_k^{\mathrm{PPO}}\|$ gives
\[
\bigl\| \mathcal{L}_k^{\mathrm{shaped}} \bigr\|
\ge
\xi_k \bigl\| \mathcal{L}_k^{U} \bigr\|
-
\eta_k C \varepsilon_A.
\]

Since $0<\eta_k\le 1$ by Assumption~\ref{ass:scale}, the second term is
$O(\varepsilon_A)$, and we obtain
\[
\bigl\| \mathcal{L}_k^{\mathrm{shaped}} \bigr\|
\;\ge\;
\xi_k \bigl\| \mathcal{L}_k^{U} \bigr\|
-
O(\varepsilon_A),
\]
which completes the proof.
\end{proof}

\paragraph{Remark (Bias--Variance Trade-off).} ~\label{bv}
The shaping mechanism can also be interpreted through the lens of the
bias--variance trade-off in policy gradient estimation.
Standard PPO relies on advantage estimates (e.g., GAE or Monte Carlo returns)
that depend on long-horizon rollouts and typically exhibit high variance,
particularly in early training.
In contrast, the utility signal $U_t$ depends only on the current state--action
pair and its similarity to stored high-return trajectories, yielding a
lower-variance but biased learning signal.
Initializing $\xi_0>0$ stabilizes early optimization by injecting this bias,
while the decay schedule $\xi_k \to 0$ gradually restores reliance on
reward-based advantage estimates as the policy and critic become more reliable.

\section{LLM Prompting and Reasoning} 

\label{prompt}
\subsection{Gymnasium Toy text}
\begin{wrapfigure}{r}{0.25\textwidth}
\vspace{-8mm}
    \centering
    \includegraphics[width=.9\linewidth]{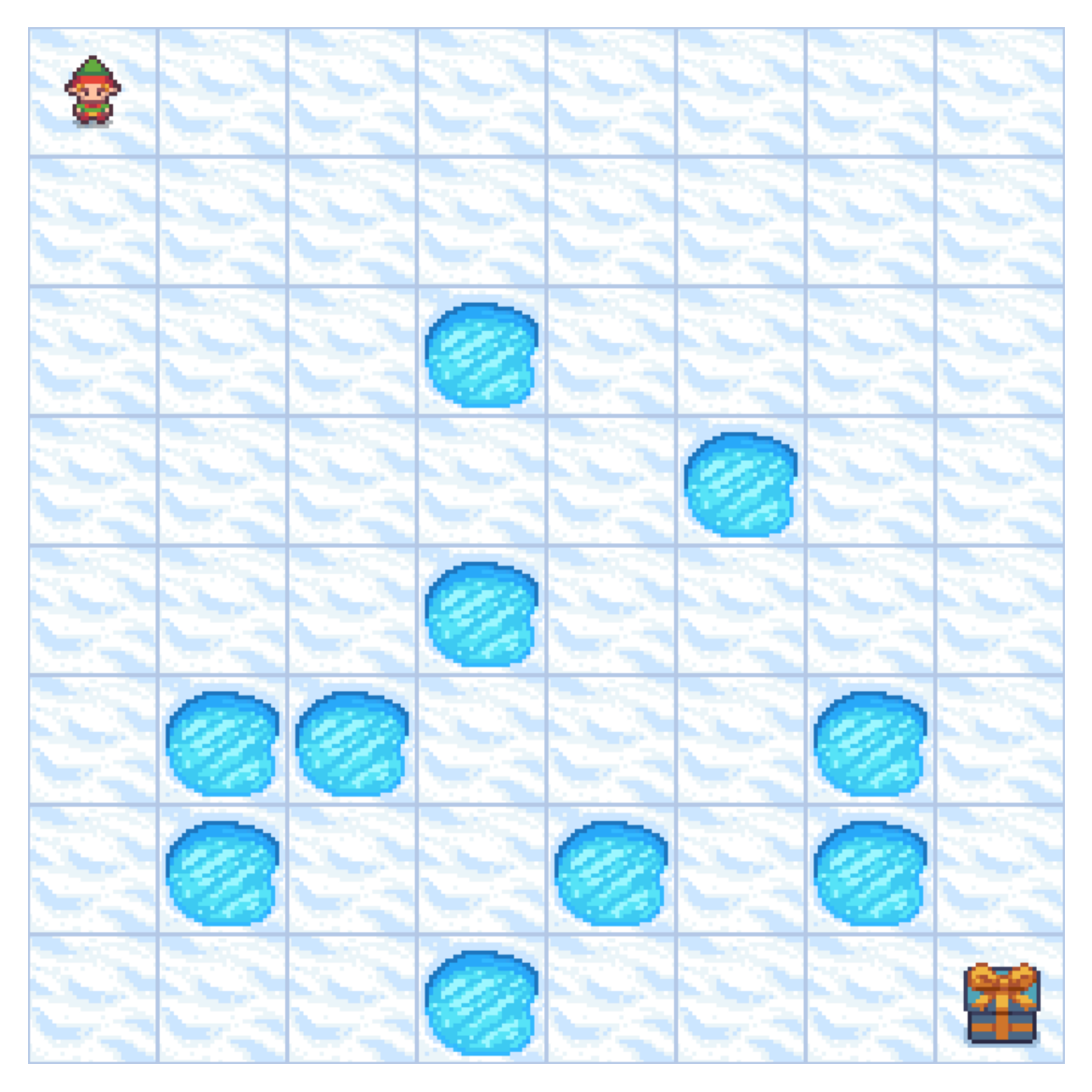}
    \caption{Frozen Lake (Gymnasium)}
    \label{fig:flakeenv}
    \vspace{-12mm}
\end{wrapfigure}
\textsc{FrozenLake} is a tabular RL environment where the agent starts in the top-left and must reach the bottom-right goal while avoiding holes. 
For \textsc{FrozenLake}, we provide the LLM with the complete map of the environment, either as an image (Figure~\ref{fig:flakeenv}) or as a serialized array representation such as \texttt{[`F’, `F’, …, `H’, `F’, …, `G']}. Though the environment is typically stochastic due to slipperiness, the LLM is instructed to assume deterministic transitions.

A\n{lthough much of the prompt is directly from the  official environment description, for clarity and reproducibility, we include the full version.} 
\begin{figure}
    \centering
    \includegraphics[width=\columnwidth]{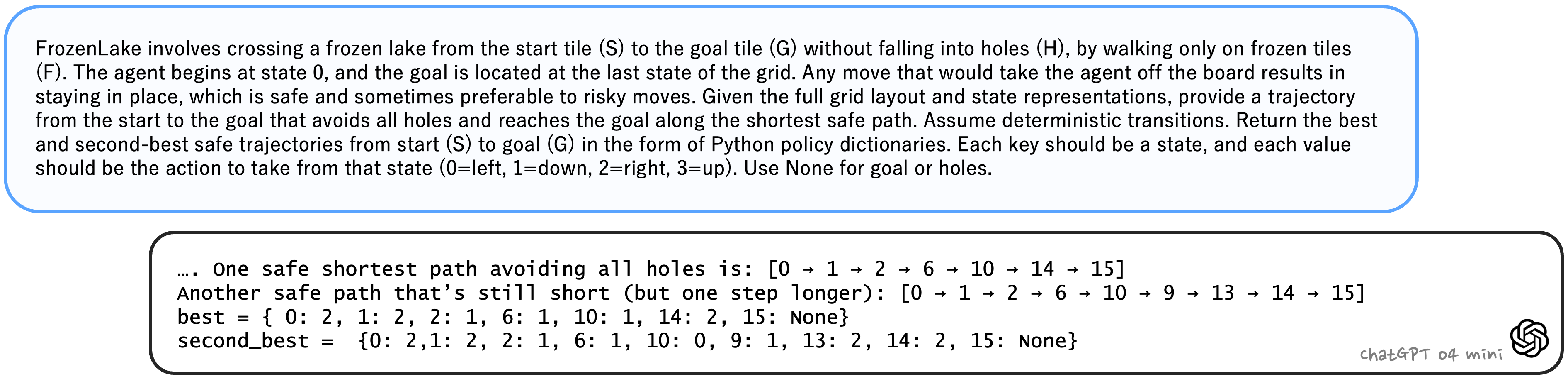}
    \caption{Prompt and response from GPT-4o-mini for the \textsc{FrozenLake} environment. The LLM receives a serialized map or a textual description and is asked to provide the best and second-best safe trajectories from start to goal, avoiding all holes under deterministic dynamics.}
    \label{fig:flake}
\end{figure}
The prompt and the LLM’s response are shown in Figure~\ref{fig:flake}.

% \n{\subsubsection{Prompt Robustness}}
% \n{To evaluate robustness to reasonable variations in prompt wording,  we repeated the  \textsc{FrozenLake} experiment using an alternative prompt with simpler phrasing but identical task information.}
% \n{In this variant, the prompt stated that ``\textit{FrozenLake is a grid where S is the start, G is the goal, F are safe tiles, and H are holes. The agent moves from state 0 using actions {0=left, 1=down, 2=right, 3=up}, and moves that go off the grid keep the agent in place and are safe. Using the grid and assuming deterministic transitions, provide a shortest safe path from S to G that avoids all holes, and return the best and second-best safe paths as Python dictionaries mapping each visited state to its action, using None for the goal or holes.'}'
%  Figure~\ref{fig:twoprompt} compares MIRA under the original and alternative prompts. The learning curves and final returns are closely aligned, showing that MIRA’s performance is stable under natural variations in how the environment description is presented.}

%   \begin{wrapfigure}{r}{0.44\textwidth}
% \vspace{-3mm}
%     \centering
%     \includegraphics[width=\linewidth]{figs/twoprompt_v2.png}
%     \caption{\n{FrozenLake robustness to prompt wording. MIRA achieves similar performance under the original and alternative prompts, showing stability to natural variations in task description.}}
%     \label{fig:twoprompt}
%     \vspace{-2mm}
% \end{wrapfigure}

\subsection{Standard and Custom MiniGrid and BabyAI Environments}
Each environment was chosen for a specific purpose: \textsc{RedBall} involves short-horizon navigation and fast spatial goal acquisition. \textsc{LavaCrossing} introduces irreversible transitions that require long-horizon planning to avoid dead ends.   \textsc{DoorKey} requires the agent to acquire a key, unlock a door,  and reach the goal, forming a delayed dependency chain that challenges temporal credit assignment. 
\textsc{RedBlueDoor} tests the agent’s ability to commit to a correct action sequence, as opening the blue door prematurely ends the episode. At last, \textsc{Distracted DoorKey} introduces BabyAI-style distractors (e.g., irrelevant balls and boxes) alongside the original multi-step dependencies of  \textsc{DoorKey}, allowing us to test whether the LLM can generalize across known task elements and maintain coherent subgoal proposals under added visual distraction.
For standard MiniGrid and BabyAI environments, we used the environment descriptions provided on the MiniGrid website. For our custom environment (\textsc{Distracted DoorKey}), we mimicked the phrasing and structure of the official MiniGrid descriptions (Figure~\ref{fig:distracted}). Unlike in \textsc{Frozen Lake}, obtaining useful trajectories here was not as straightforward. MiniGrid-style environments often required multi-round prompting to obtain meaningful and desired outputs. Moreover, instead of providing an image of the environment, we found it more effective to use a textual description.  This helped reduce confusion and encouraged the LLM to understand that object locations (e.g., the key, door, and agent in \textsc{DoorKey}) can vary across episodes.

\subsection{LLM Reasoning Patterns Across Models} ~\label{reasoning} 
\begin{figure}
% \vspace{-8mm}
    \centering
    \includegraphics[width=\linewidth]{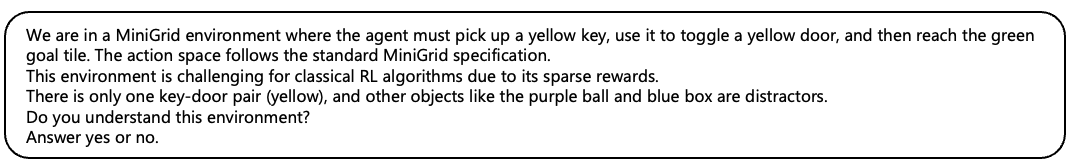}
    \caption{Prompt to Offline LLM for the custom MiniGrid variant \textsc{Distracted DoorKey}. The prompt describes the task setting, object roles, and challenges, and asks the LLM to confirm understanding before suggestions.}
    \label{fig:distracted}
    % \vspace{-6mm}
\end{figure}
We observed that different LLMs produced very different memory graphs. To better understand how different models reason about these environments, we recorded
 not only their output trajectories but also their internal reasoning processes.  For model that include system-level thinking (e.g., GPT-o4-mini), this was extracted directly from the response. For models that do not expose intermediate reasoning  (e.g., Claude 3), we followed up with an auxiliary prompt such as: ``Give me your reasoning as to why you chose this sequence of actions.''

\begin{figure}
    \centering
    \includegraphics[width=0.95\textwidth]{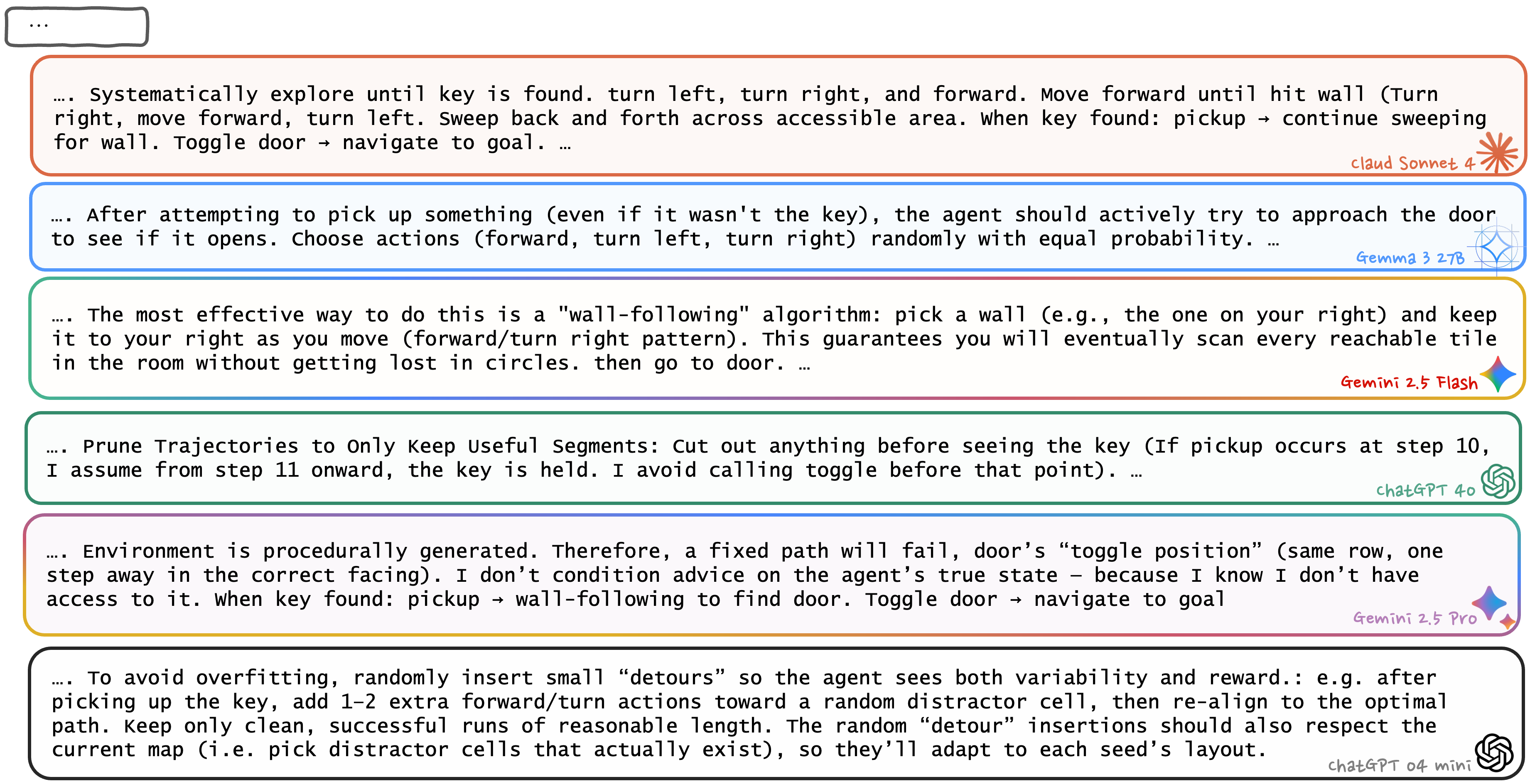}
    \caption{Reasoning traces produced by different LLMs in response to our custom environment prompt as part of ``Offline LLM" prompting. After confirming they understood the environment, each model was asked: ``If you were to give an RL agent useful trajectories to help solve this, what would you do?” For models that do not output internal reasoning (e.g., Claude), we issued a follow-up prompt requesting their thought process. We omit repeated environment restatements and show only the key parts where the model explains how it decided on the action sequence.}
    \label{fig:llms}
\end{figure}
These responses were not used in the MIRA framework, but we found them surprisingly revealing. Despite receiving identical prompts, the models relied on starkly different reasoning strategies. This divergence gave us unexpected insight into how various LLMs process spatial structure, interpret decision sequences, and reason about reinforcement learning dynamics and learning objectives. Differences that, in turn, shape the quality of their output trajectories.
In Figure~\ref{fig:llms}, we present reasoning snippets from the LLMs' outputs. We omit the initial sections where models repeat the prompt or restate the environment description, and instead highlight the specific reasoning steps that led each model to select a particular trajectory. The influence of these differing reasoning strategies on RL performance is reflected in the return curves shown in Figure~\ref{fig:all_abs}.

\subsection{Case Study: \textbf{Distracted DoorKey}} \label{r_unr}

In the ablation study presented in Subsections~\ref{sub:ablation}, GPT-o4-mini and Gemini return different outputs when presented with the same situation. Here, we provide the exact prompt and reasoning traces. As shown in Figure~\ref{fig:gptgemm}, both responses appear plausible at a surface level, but only one is consistent with the task dynamics: given that sufficient exploration has already occurred, the key is likely collected, making suppression of the corresponding action the correct response.  In this case, the divergence leads to a drop in performance under the misaligned output.

\begin{figure}[H]
    \centering
    \includegraphics[width=\linewidth]{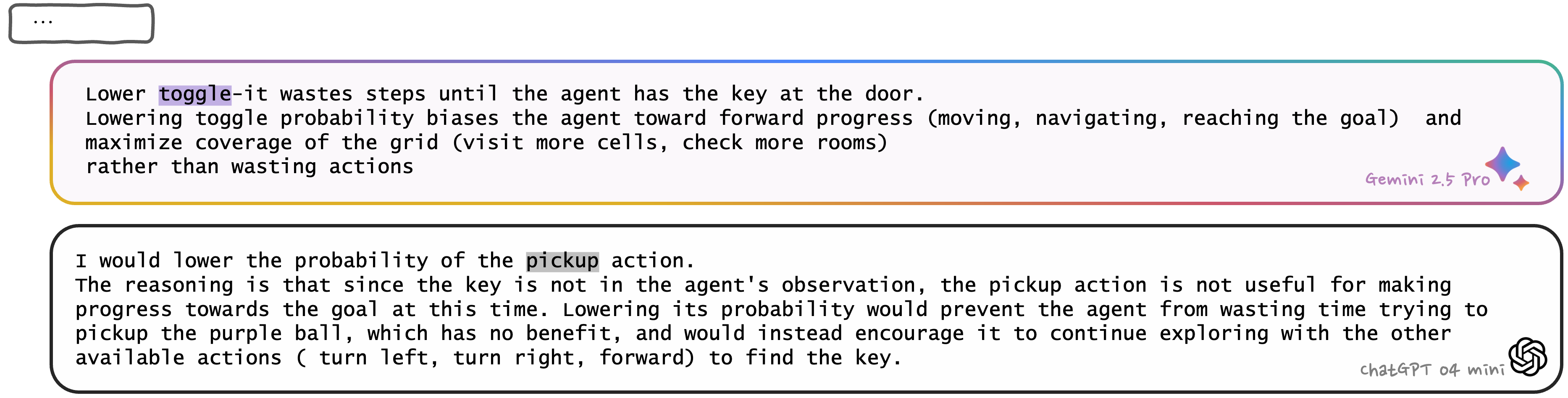}
    \caption{Reasoning traces produced by Gemini and ChatGPT under ``Online LLM'' prompting. The prompt emphasizes that sufficient exploration has already been performed and, from the partial observation, no key is visible. A (flawed but plausible) line of reasoning is that the agent must still be in the phase of searching for the key, so reducing the probability of toggle appears reasonable to prioritize movement actions for exploration.}
    \label{fig:gptgemm}
\end{figure}

\section{Memory Graph Construction Details} \label{graph_detail}
In this section, we further explain the procedure for initializing, updating, and pruning MIRA’s memory graph. As discussed in Section~\ref{meth}, the initial memory graph is constructed from offline LLM-generated suggestions. Once built for a specific environment, this graph can be reused across training episodes or even across agents within the same task.
Since MIRA is designed to generalize across diverse settings, figure~\ref{fig:graph} illustrates how the framework accommodates environments with a single terminal objective as well as tasks with multiple independent objectives

 Given that each task differs slightly, we largely focus our detailed explanation on \textsc{DoorKey} from the MiniGrid suite for the rest of the subsections, as it contains multiple subgoals and is sufficiently complex to show the dynamics of the graph clearly.

\subsection{Initialization}
As shown in figure~\ref{fig:llms}, GPT-o4-mini tends to generate trajectory segments that begin after the key is picked up, with the subgoal ``toggle the door''.  In contrast, models like Claude tend to produce longer, full trajectories from the beginning.
% \begin{wrapfigure}{r}{0.42\textwidth}
% \vspace{-2mm}
%    \centering
% \includegraphics[width=.95\linewidth]{figs/graph.png}
% \caption{Visualization of MIRA’s memory graph. Trajectory segments $\tau_j$ are grouped under subgoal nodes $\kappa_\ell$, which represent abstract intermediate objectives. Subgoals can be shared across multiple final goals (e.g., $\kappa_1$ connects to both $\textsl{g}_\triangleright$ and $\textsl{g}_\triangleright'$), enabling reuse of common behaviors. The graph evolves during training through agent discovery and LLM-guided grafts. }
%     \label{fig:graph} 
%     \vspace{-12mm}
% \end{wrapfigure}
 Interestingly, segmented trajectories are often more useful in this environment.
Since the environment is partially observable and reinforcement learning relies heavily on exploration, allowing the agent to figure out how to reach the key on its own helps it understand the overall layout of the environment better. 
Once the key is acquired, there is a higher chance that the door has already entered the agent’s observation window, making memory-guided navigation toward the door more effective. 

In addition to segments, the LLM also infers subgoals $(\kappa_\ell)$. While the obvious ones are ``Pick up key," ``Open door,'' and ``Reach goal,'' o4-mini returns more detailed versions like:
\begin{center}
\textit{$\kappa_1:$ Go to key → $\kappa_2:$ Pick up key → $\kappa_3:$ Go to door → $\kappa_4:$ Toggle door → $\textsl{g}_\triangleright$: Go to goal}.
\end{center}
This fine-grained subgoal sequence reflects the environment’s control logic: the ``open door" action is valid only if the agent is positioned one step away, properly aligned, and facing the door.

Moreover, for each memory segment, an estimated subgoal reward $\hat r_m$ is stored in the node that reflects geometric progress toward completing its associated subgoal. In discrete environments (e.g., MiniGrid), progress is computed using the normalized shortest-path distance between the states in the segment and the subgoal’s target location.

\subsection{Agent-Induced Updates}

During training, new nodes are added to the memory graph or existing ones are updated whenever the agent produces trajectory segments that improve upon what the graph already stores, either by providing a new segment for a (sub)goal or by achieving a higher estimated return than the current entry for that (sub)goal. For example, if the agent independently discovers a shorter path to the key and then follows a memory-guided trajectory to reach the door or goal, the resulting sequence is added as a new node in the graph. Likewise, if the agent successfully executes a trajectory that was initially stored with low confidence from the offline LLM, we treat this behavior as implicit validation and increase the confidence of the corresponding memory node.

The memory graph remains lightweight throughout training. Each node stores a trajectory segment and metadata, and the total graph size stays compact. Compared to experience replay buffers in standard off-policy RL methods, which retain large volumes of data, the memory graph introduces negligible computational and memory overhead.
To maintain compactness, unused nodes are periodically pruned based on access frequency.
Each memory node tracks an access counter, which is reset every time the node is used. Nodes that are not accessed for 100 episodes are pruned, except for those corresponding to final goal trajectories ($\textsl{g}_\triangleright$), which are retained since the agent might not have reached them early on, but they are essential for guiding successful completion later in training.

\n{Algorithm~\ref{alg:maintain-graph} summarizes the add, update, and prune operations that govern how the memory graph is maintained during training.}

\subsection{Online Grafting and Triggers}
Since the agent has a limited number of steps per episode, it may fail to reach any subgoal (e.g. ``Open Door'') with a matching trajectory in the memory graph early on, preventing utility shaping from activating.  To address this, MIRA includes a fallback mechanism: if the computed utility $U$ is entirely zero for $N$ consecutive episodes, the agent triggers an online LLM query. These online queries return short plans (e.g., ``turn left, move forward, toggle”) based on the agent’s partial observations to help the agent reorient. Once screened for quality, the new suggestion is grafted into MIRA.
Another way online LLM queries contribute is by influencing the agent’s policy preferences directly through soft logit injection. Importantly, the online LLM is constrained by the same partial observability as the agent. It does not receive access to the full environment state and therefore cannot, for example, determine the presence of a key elsewhere in the grid. Furthermore, since inventory status is not part of the agent’s observation space, the LLM is unaware of whether the agent has picked up the key. Instead, the LLM receives a batch of recent partial observations and must infer from them whether any meaningful guidance can be offered.

\n{\section{Utility Computation} \label{utility_detail}
In this section, we provide a detailed explanation of the utility computation introduced in the main text, clarifying how each component contributes to the shaping term. The utility measures how closely the agent’s trajectory aligns with high-return segments stored in the memory graph. When a reference trajectory is matched, utility values are assigned based on reverse-aligned similarity with reference trajectories; unmatched steps receive zero utility. Below, we describe the role of each factor in the computation (Equation~\ref{utility_eq}) and provide pseudocode for the full procedure.}
\n{
\subsection{Similarity Score}

The similarity function assigns a score based on the information extracted from the agent’s and the reference transition’s observations. Depending on the environment, these observations may include position, orientation, or action. For example, in \textsc{FrozenLake}, observations are discrete and include only position, so direction is omitted in the similarity check. High similarity indicates that the agent is reproducing a locally meaningful portion of a successful stored trajectory, whereas low or zero similarity reflects small or no meaningful match. }

\n{
\begin{algorithm}[H]
\caption{Evolving Memory Graph During Training}
\label{alg:maintain-graph}
\begin{algorithmic}

\REQUIRE Memory graph $\mathcal{G}$, 
new trajectory and metadata $(\tau', \zeta', \hat r', c')$,  
prune window $W$ \gap

\STATE $\mathcal{M}_\zeta \leftarrow \{ m \in \mathcal{G} : \zeta_m = \zeta' \}$

\IF{$\text{source} = \textsc{OnlineLLM}$}
    \IF{$\neg \textsc{screening}$}
        \STATE return $\mathcal{G}$ \COMMENT Discard online LLM suggestion
    \ENDIF
\ENDIF

\IF{$\mathcal{M}_\zeta = \emptyset$  \COMMENT No existing segment for this subgoal } 
    \STATE Create node $m' \leftarrow (\tau', \zeta' , \hat r)_{c'}$ 
    \STATE Initialize $\text{access}_{m'} \gets 0$
    \STATE Insert $m'$ into $\mathcal{G}$
\ELSE
    \STATE $m \leftarrow \arg\max_{m \in \mathcal{M}_\zeta} \hat r_{m}$
    \IF{$\hat r' > \hat r_m$}
        \STATE $(\tau_m, \hat r_m) \gets (\tau', \hat r')$
        \STATE $c_m \gets c'$
    \ENDIF
    \IF{$\text{source} = \textsc{Agent}$}
        \STATE $c_m \gets \min(1, c_m + \Delta_c)$ \COMMENT Agent validation increases confidence
    \ENDIF
\ENDIF

\FOR{each node $m \in \mathcal{G}$}
    \IF{$\text{access}_m$ = 0 for $W$ episodes}
        \STATE Remove $m$ from $\mathcal{G}$ \COMMENT Prune nodes unused within inactivity window
    \ENDIF
\ENDFOR

\STATE return $\mathcal{G}$

\end{algorithmic}
\end{algorithm}}

\begin{algorithm}
\caption{Similarity Score $ \mathcal{s}$}
\label{alg:sim}
\begin{algorithmic}
\REQUIRE Agent $x_a$ and Reference $x_m$ annotated transition (metadata)  \gap
\IF{(pos., dir.) $\in (o_a, o_m) $ match \& $a_a = a_m$}
    \STATE \textbf{return} ${\mathcal{s}} = $ high\_sim \COMMENT (1) \gap
\ELSIF{pos. $\in (o_a, o_m) $ match \& $a_a = a_m$}  
    \STATE \textbf{return} ${\mathcal{s}} = $ mod\_sim \COMMENT not align direction (0.7)\gap
\ELSIF{$(d_a \in o_a) \pm 1 \bmod 4 = d_m \in o_m $} 
    \STATE \textbf{return} ${\mathcal{s}} = $  low\_sim \COMMENT action aligned direction \gap(0.4)
\ELSE
    \STATE \textbf{return} ${\mathcal{s}} = $ no\_sim \COMMENT (0)
\ENDIF 
\end{algorithmic}
\end{algorithm}

\n{
\subsection{Goal Alignment}
The goal-alignment factor $\rho$ scales utility based on how closely the subgoal associated with a matched memory segment relates to the subgoal that the current transition corresponds to. Although the RL agent never observes subgoal labels, the \textit{environment state} uniquely identifies which subgoal phase the transition belongs to, for example, in \textsc{DoorKey} environment, whether the agent is still approaching the key or has already picked it up and is proceeding toward the door.
Each memory node carries a subgoal label generated by the LLM. These subgoal descriptions consistently identify (i) the object or region involved and (ii) the high-level action applied to it. We extract these two components through simple rule-based parsing over the LLM-generated subgoals, yielding an entity token (e.g., key, door, ball) and an action-phase tag (e.g., navigate). The alignment score $\rho$ is then calculated as the Jaccard similarity between the token pair of the memory node’s subgoal and the token pair associated with the transition under evaluation. 
As a result, locally similar transitions only contribute to utility when they align semantically with the relevant subgoal, preventing behaviorally similar but semantically unrelated memory segments from influencing the utility signal.

\begin{algorithm}[H]
\caption{Goal Alignment $\rho$}
\label{alg:jaccard}
\begin{algorithmic}

\REQUIRE Agent $\zeta_a$ and Reference $\zeta_m$ subgoal  \gap
\STATE $t_a \leftarrow  \textsc{tokens}(\zeta_a)$ \COMMENT Agent entity–phase token \gap
\STATE $t_m \leftarrow  \textsc{tokens}(\zeta_m)$ \COMMENT Memory entity–phase token 

\STATE $I_{\cap} \leftarrow t_{m} \cap t_a$
\STATE $I_{\cup} \leftarrow t_m \cup t_a$
\RETURN $\rho = {|I_{\cap}|}/{|I_{\cup}|}$
\end{algorithmic}
\end{algorithm}

With the similarity score $\mathcal{s}$ and alignment factor $\rho$ established, we combine them with the memory-stored quantities  $\hat r_m$  (estimated subgoal reward) and $c_m$ (LLM confidence) to construct the utility used in advantage shaping.}

\begin{algorithm}
\caption{Compute Utility Score}
\label{alg:util}
\begin{algorithmic}

\REQUIRE Agent $\tau_{\text{agent}}$ and Reference $\tau_{\text{m}}$ trajectory  \gap
\STATE $x \doteq (o, a, r, \text{meta})$ \COMMENT Denote a transition with metadata (e.g. subgoals)
\STATE Initialize $U \gets [0, \dots, 0]$ \gap
\STATE Align the tail of $\tau_{\text{agent}}$ to length of $\tau_{\text{m}}$\gap
\FOR{each $(x_a, x_m) \in (\tau_{\text{agent}}^{\text{tail}}, \tau_{\text{m}})$}\gap
    \STATE ${\mathcal{s}} \leftarrow \mathcal{s}((o_a, a_a) , (o_m, a_m))$ \COMMENT Compute similarity \gap
    \STATE $\rho \leftarrow \rho(\zeta_a, \zeta_{m})$ \COMMENT Compute goal aligment factor \gap
    \STATE $ u  \leftarrow c_m \cdot \hat{r}_m \cdot \rho \cdot \mathcal{s}$
    \STATE Assign $u$ to corresponding index in $U$
\ENDFOR
\STATE \textbf{return} ${U}$
\end{algorithmic}
\end{algorithm}

\section{Extended Experimental Studies} \label{ret_detail}

\begin{wrapfigure}{r}{0.44\textwidth}
\vspace{-18mm}
    \centering
    \includegraphics[width=\linewidth]{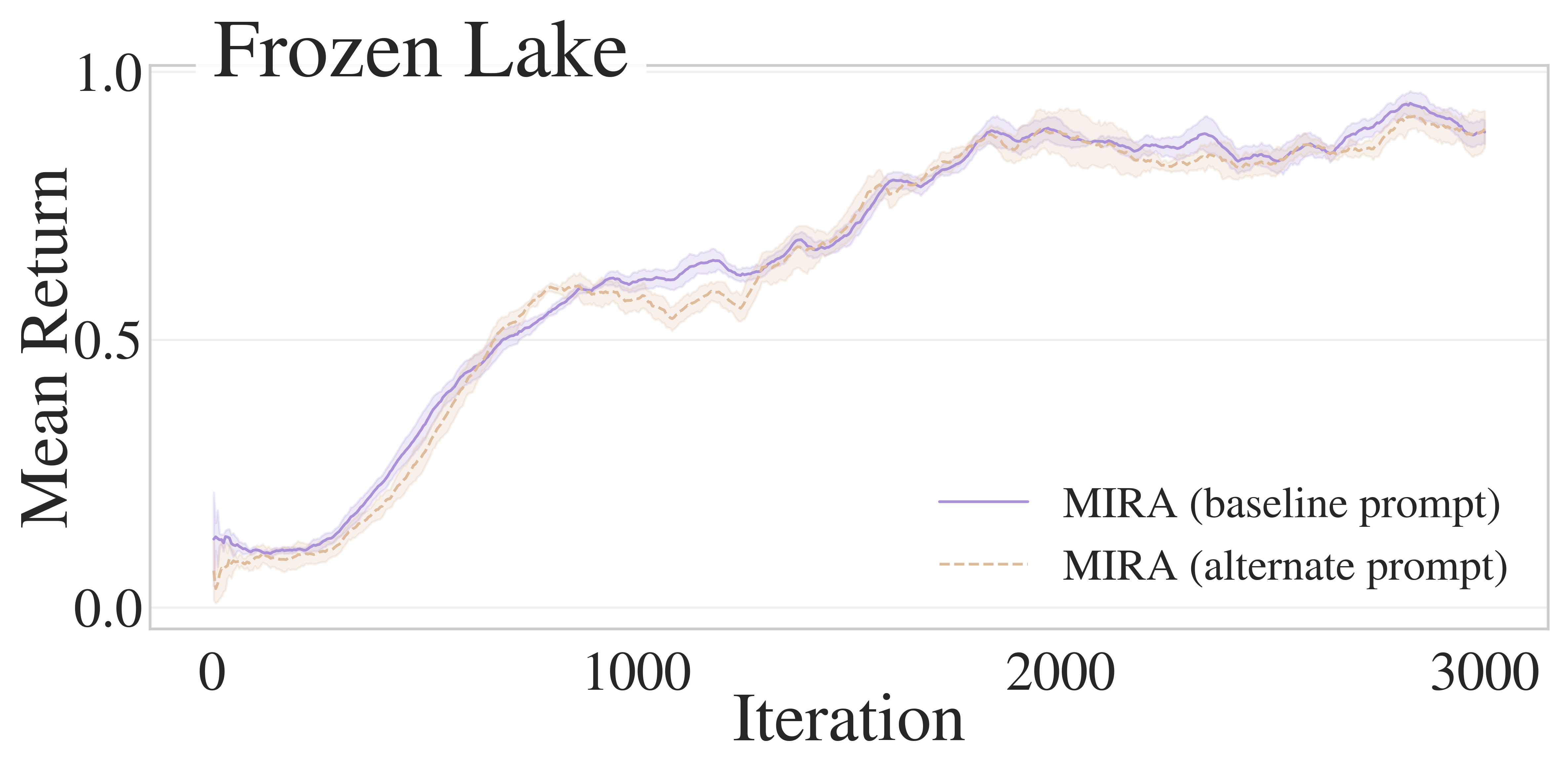}
    \caption{\n{FrozenLake robustness to prompt wording. MIRA achieves similar performance under the original and alternative prompts, showing stability to natural variations in task description.}}
    \label{fig:twoprompt}
    \vspace{-5mm}
\end{wrapfigure}

\n{\subsection{Sensitivity and Robustness Studies}}

\n{\subsubsection{Prompt Robustness}}
T\n{o evaluate robustness to reasonable variations in prompt wording,  we repeated the  \textsc{FrozenLake} experiment using an alternative prompt with simpler phrasing but identical task information.}
\n{In this variant, the prompt stated that ``\textit{FrozenLake is a grid where S is the start, G is the goal, F are safe tiles, and H are holes. The agent moves from state 0 using actions {0=left, 1=down, 2=right, 3=up}, and moves that go off the grid keep the agent in place and are safe. Using the grid and assuming deterministic transitions, provide a shortest safe path from S to G that avoids all holes, and return the best and second-best safe paths as Python dictionaries mapping each visited state to its action, using None for the goal or holes.'}'
 Figure~\ref{fig:twoprompt} compares MIRA under the original and alternative prompts. The learning curves and final returns are closely aligned, showing that MIRA’s performance is stable under natural variations in how the environment description is presented.}
 % \pagebreak

\n{\subsubsection{Threshold Sensitivity}}

  \begin{wrapfigure}{r}{0.44\textwidth}
\vspace{-5mm}
    \centering
    \includegraphics[width=\linewidth]{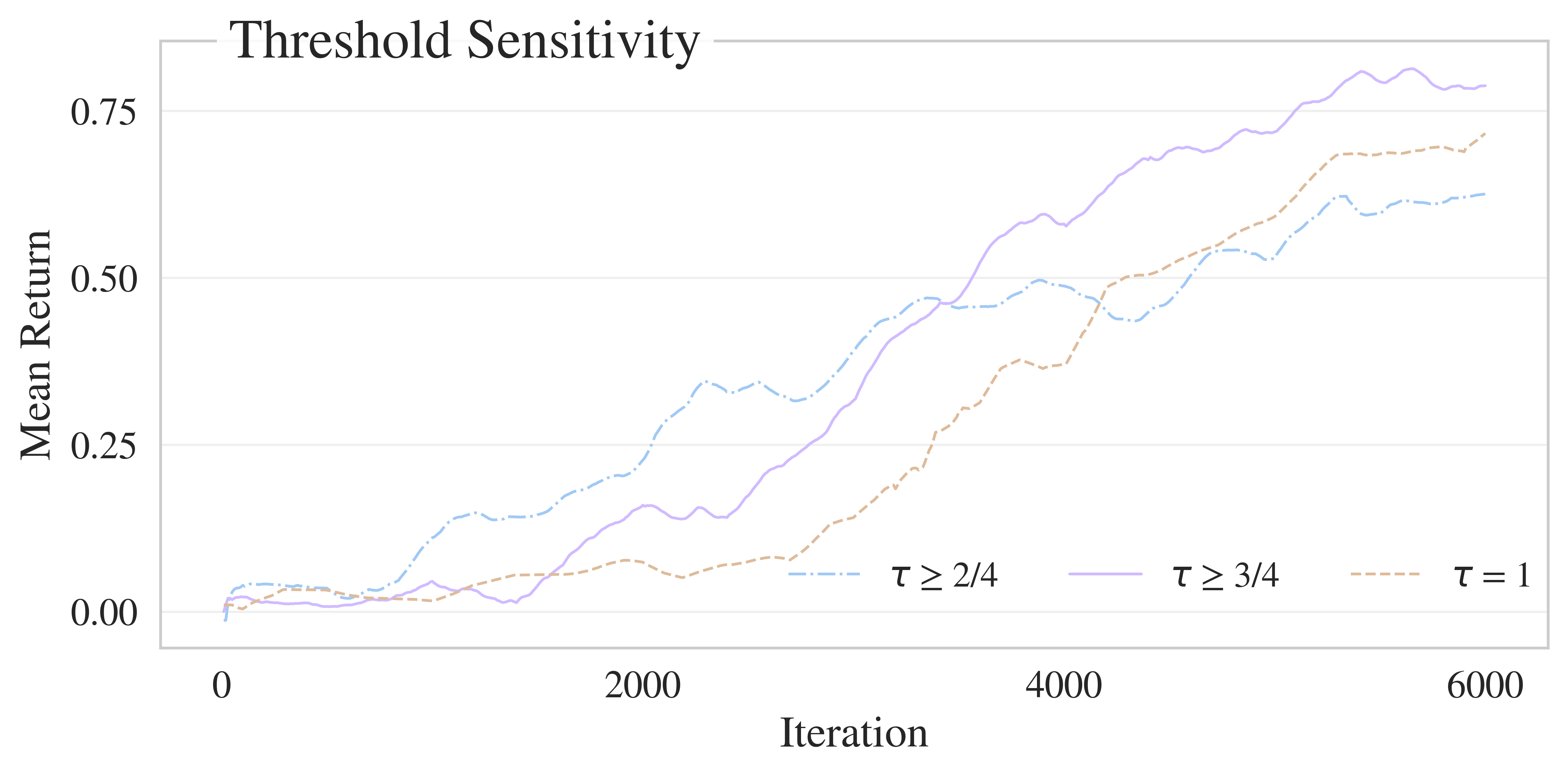}
    \caption{\n{Sensitivity to the screening threshold. Lenient thresholds graft more candidate nodes into memory early, producing broader shaping during exploration but a slower overall improvement rate. Stricter thresholds delay graph growth yet yield sharper mid-training gains once high-confidence nodes appear. All settings converge to a narrow performance band.}}
    \label{fig:threshold}
    \vspace{-5mm}
\end{wrapfigure}
T\n{o assess the sensitivity of the screening rule, we conduct a study that varies the acceptance threshold while keeping the overall method fixed.  The threshold, therefore, controls when and how densely the graph is populated, not whether shaping exists at all. 
The main results of the paper uses $k=3$ LLM completions, which was chosen for efficiency across all experiments. For this diagnostic test only, we set $k=4$ to obtain cleaner fractional thresholds corresponding to meaningful agreement levels on the \textsc{Doorkey} environment. We evaluate three settings: a lenient majority rule ($\tau \ge 2/4$), a stricter majority rule ($\tau \ge 3/4$), and unanimous agreement ($\tau = 1$). 
}

Figure~\ref{fig:threshold} shows that the threshold primarily affects the early phase of learning.
With a lenient threshold, more candidate suggestions are grafted as healthy graph nodes and added to the memory. This leads the agent to receive utility bonuses on more states while it is still exploring. As a result, even if some early nodes may be slightly misaligned, the overall trend still pushes the agent toward regions that have higher returns. It is then up to the agent to correct those nodes through experience or increase their confidence if they turn out to be helpful, which explains why the overall improvement can be slower in this setting.
In the stricter settings, fewer suggestions pass the screening, so graph growth is delayed and the shaping signal remains sparse. Early learning progresses more slowly, but once these high-confidence nodes enter the graph, they produce larger and more coherent utility bonuses along trajectories that already correlate with high return, which creates the steeper rise visible in the mid-training region. The near-final performance still lies in a narrow band, since the shaping term is bounded and eventually dominated by the learned value function.
This behavior is consistent with the design of our utility-based shaping, using the LLM’s possibly imperfect but often still useful knowledge to accelerate the initial learning phase while ensuring stable progress as more reliable evidence accumulates.

\pagebreak

\subsection{Early Advantage Dynamics}
\begin{wrapfigure}{r}{0.45\textwidth}
\vspace{-3mm}
   \centering
\includegraphics[width=\linewidth]{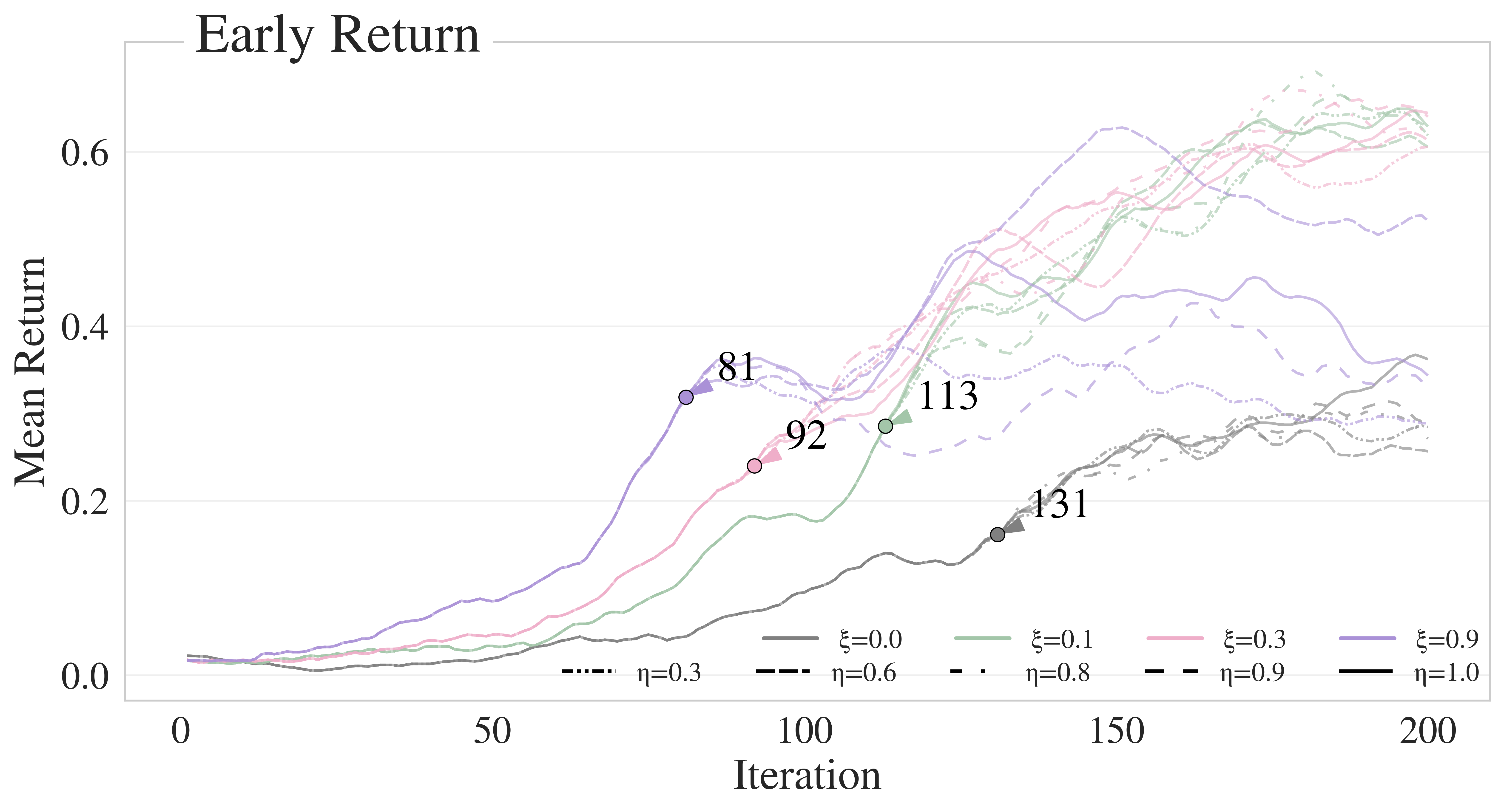}
    \caption{Return curves for different $\eta$ values under fixed $\xi$ settings. Markers indicate the first iteration where performance begins to diverge, signaling when $A_t$ starts to meaningfully affect learning. Early on, the critic signal is weak, and $\tilde{A}_t$ is driven mostly by the utility term. When $\xi$ is large enough, shaping accelerates the critic’s contribution by up to 50 iterations and leads to around 2.5× higher return compared to the unshaped case.}
    \label{fig:early}
    \vspace{-5mm}
\end{wrapfigure}
Figure~\ref{fig:early} provides empirical support for the central intuition behind our shaping formulation. 
We plot return curves for each $\xi$ group (color), across different $\eta$ values (line style). 
Early in training, return curves within each $\xi$ group remain tightly clustered, indicating that $A_t$, the critic’s estimate, provides little useful signal, regardless of how it is weighted. 
Divergence points, marked on the figure, denote the first iteration where the return spread across $\eta$ values exceeds a certain threshold, signaling that $A_t$ has begun contributing meaningfully to the shaped advantage $\tilde{A}_t = \eta_t A_t + \xi_t U_t$.

In the absence of shaping ($\xi = 0$, gray lines), this occurs relatively late (iteration 131), whereas with shaping ($\xi > 0$), it happens substantially earlier (iterations 81–113, depending on $\xi$). This shows that the utility term not only supports early learning but also accelerates the emergence of a reliable critic. These results validate our choice to softly shape advantages, and emphasize the importance of carefully tuning $\xi$ and $\eta$: insufficient shaping slows critic learning, which in turn leads to substantially lower mean returns.

\paragraph{Remark (Optimization Landscape in Sparse-Reward Regimes).}
In sparse-reward environments, standard policy gradient methods such as PPO may
exhibit near-zero expected gradients in early training, as reward-based
advantage estimates $A_t$ are often uninformative until a successful trajectory
is observed.
The non-vanishing update result (Theorem~\ref{thm:non_vanishing}) implies that the
proposed shaping objective induces a consistent gradient signal
$\xi_k \nabla_\theta \mathbb{E}[U]$ even when reward-based advantages are weak.
This additional structure modifies the local optimization landscape by providing
an informative descent direction derived from trajectory similarity, thereby
facilitating earlier and more stable optimization.

\subsection{Relative Wall Time}

\begin{wrapfigure}{r}{0.5\textwidth}
\vspace{-5mm}
   \centering
\includegraphics[width=\linewidth]{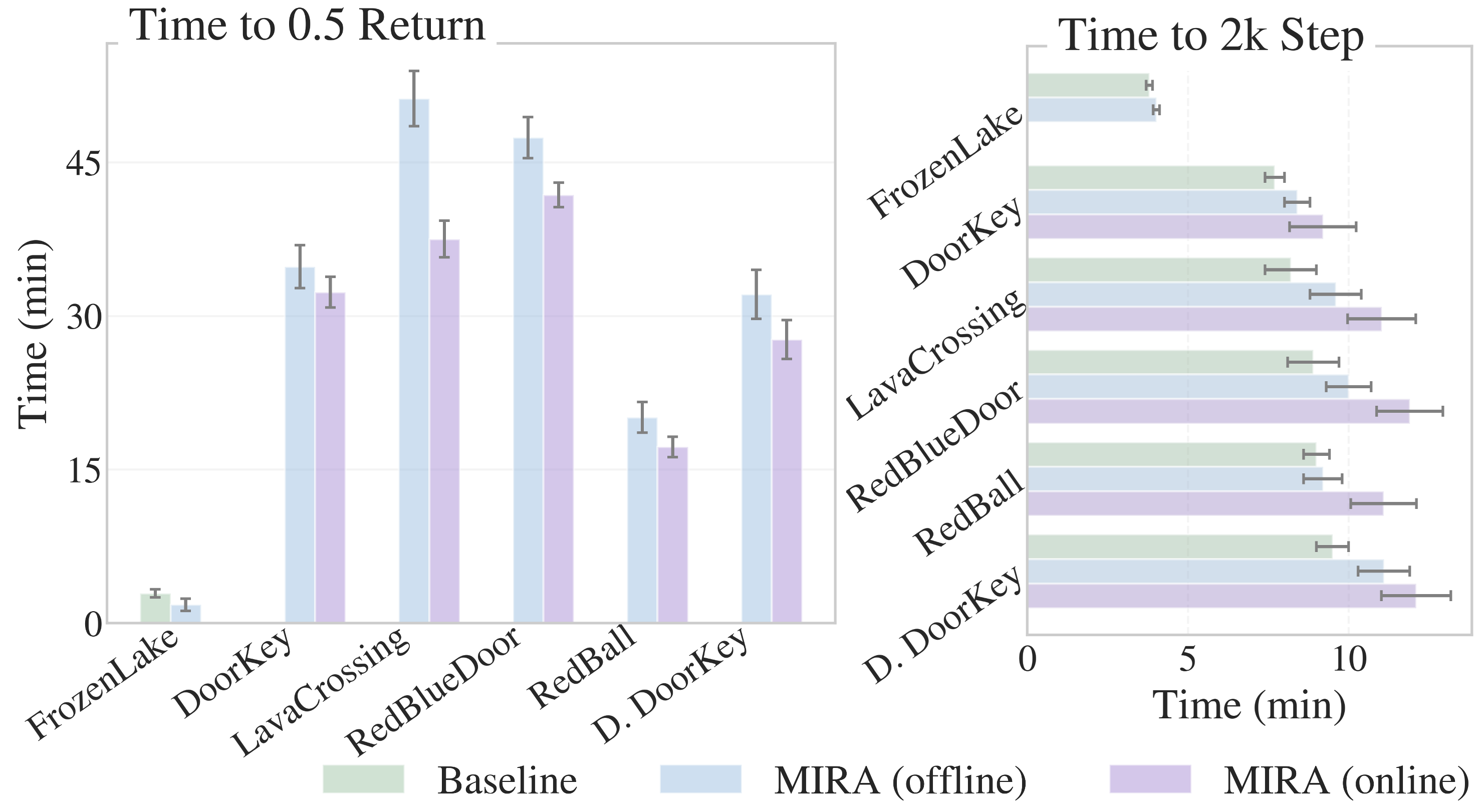}
    \caption{Wall-clock runtimes across environments. Time required to reach a 0.5 return (left): PPO reaches 0.5 only on \textsc{Frozen Lake}, while both MIRA variants converge across tasks.  Runtime for 2k training steps (right): Online MIRA incurs extra overhead from initial LLM queries, but this cost reduces wasted exploration and leads to faster convergence in terms of overall wall time.}
    \label{fig:wall}
    \vspace{-3mm}
\end{wrapfigure}
We measure relative wall-clock time as the end-to-end runtime per iteration to assess each method’s computational burden. Environments with a more complex step logic, such as \textsc{Distracted DoorKey}, which involves door toggling, key collection, and distractor dynamics, incur higher per-step simulation costs. 
Tasks like \textsc{RedBlueDoor} and \textsc{LavaCrossing} further increase runtime through frequent failures that trigger repeated episode resets and buffer re-initializations. In contrast, \textsc{Frozen Lake}’s tabular, low-dimensional transitions execute very quickly, so all methods complete rapidly (we do not run the online variant here since the offline approach suffices). Occasional LLM queries introduce network latency that further raises wall time in the slower domains. As a result, relative wall time grows with both the intrinsic simulation complexity of the environment and any additional algorithmic overhead (e.g., LLM calls).

Figure~\ref{fig:wall} reports wall-clock times for two measures: reaching a 0.5 return (left) and completing a 2k-step run (right). In the left panel, PPO reaches 0.5 only on \textsc{Frozen Lake}, while both MIRA variants converge across all environments.
In the right panel, PPO shows the lowest per-step runtime because online MIRA incurs some additional cost from its initial LLM queries. However, early queries reduce wasted exploration, allowing online MIRA to reach 0.5 return much faster overall, yielding a net gain in efficiency despite the upfront overhead.

A \n{complete view of these trade-offs comes from considering both Figure~\ref{fig:wall} and the right-hand panel of Figure~\ref{fig:all_abs}. Together, they show how higher return and wall-clock time interact when online LLM latency is present. Although online MIRA incurs additional latency from  occasional queries, a substantial portion of this cost is offset by faster policy improvement: the agent spends less time in unproductive exploration and reaches competent behavior sooner. This can be seen directly by comparing the left and right panels of Figure~\ref{fig:wall}: online MIRA has slightly higher per-step runtime, yet it reaches the 0.5 return threshold  earlier in wall-clock time. The right-hand panel of Figure~\ref{fig:all_abs} reinforces this result, showing that the higher cost of online queries is compensated by more rapid performance gains.}

\n{\subsection{Memory Growth Analysis}}

 \begin{wrapfigure}{r}{0.44\textwidth}
\vspace{-8.4mm}
    \centering
    \includegraphics[width=\linewidth]{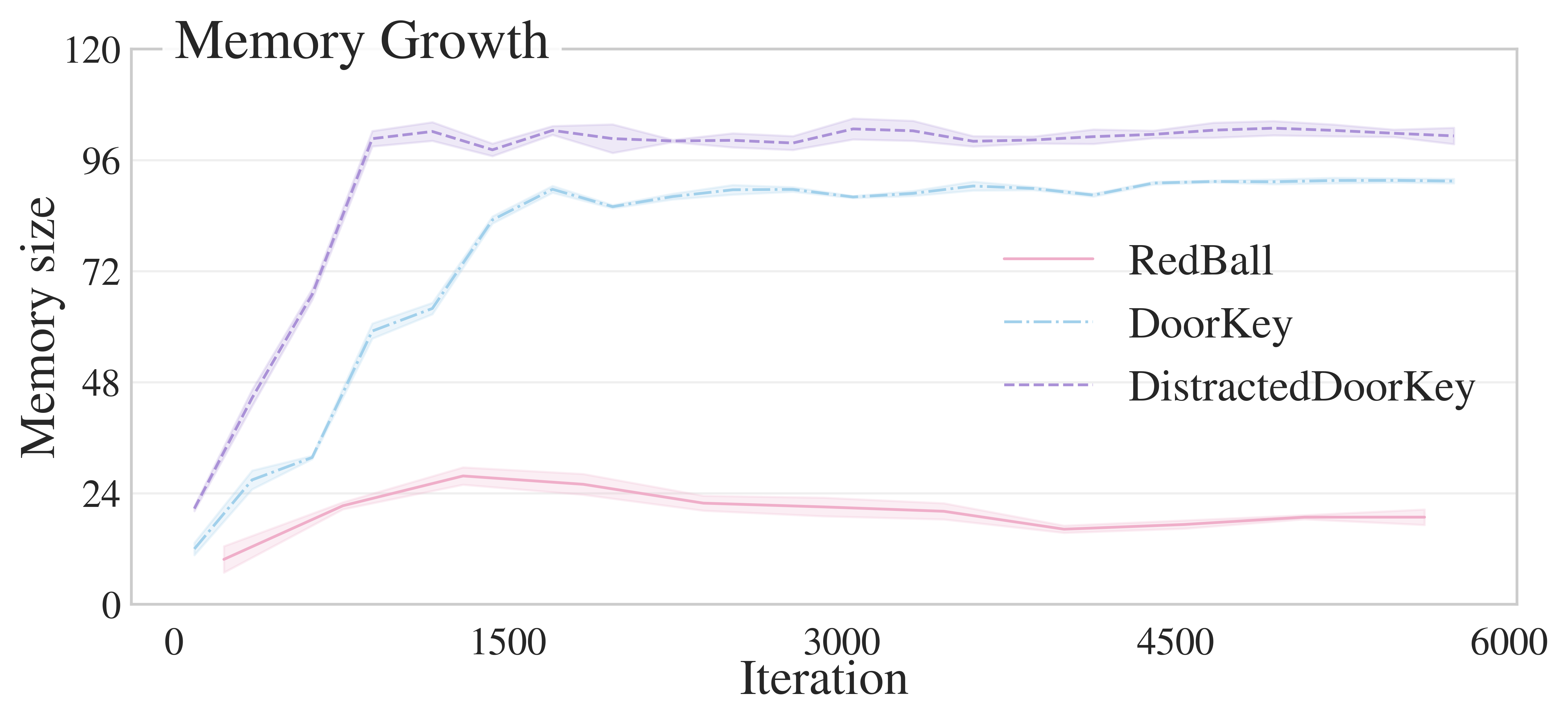}
    \caption{\n{Memory size growth over training. Early growth is followed by convergence once trajectories become consistent, with final sizes increasing from \textsc{RedBall} to \textsc{DoorKey} to \textsc{Distracted DoorKey} in line with task complexity.}}
    \label{fig:memory}
    \vspace{-5mm}
\end{wrapfigure}
W\n{e examine how the memory graph evolves during training. MIRA updates the graph only when computing the utility term during advantage estimation, so memory expansion occurs once per training batch rather than at every environment step. Because batches contain full trajectories, their lengths vary across environments. To obtain a consistent summary across tasks, we record the memory size every 100 training iterations.}

F\n{igure~\ref{fig:memory}  reports memory growth for \textsc{RedBall}, \textsc{DoorKey}, and \textsc{Distracted DoorKey}. In all three environments, memory grows quickly during early training, reflecting the period where the agent encounters diverse high-return segments and, when applicable, issues LLM queries that generate candidate memory additions. 
As trajectories converge to a consistent solution, memory expansion slows and eventually stabilizes. 
The plateau levels reflect the structural demands of each task: \textsc{RedBall} retains only a small set of nodes due to its simple subgoal structure, whereas \textsc{DoorKey} and \textsc{Distracted DoorKey} require a richer collection of region-anchored segments. \textsc{Distracted DoorKey} ends up with the largest memory, as the distractor objects create additional path variants that help guide the agent, while the total size remains bounded and does not grow throughout later training.}

\subsection{Query Frequency Performance Summary}
Table~\ref{tab:freq_res_three} expands on Figure~\ref{fig:all_abs} in Subsection~\ref{q_freq}. It shows how different online query budgets impact learning progress (SR90Return, indicating the mean return when success rate first exceeds 90\%), final return, and convergence speed (total steps to termination). The results reinforce that while all MIRA variants outperform PPO, higher online budgets further accelerate training and improve asymptotic performance.

\begin{table}[H]
\centering
\setlength{\tabcolsep}{4pt}
\small
\caption{Performance on \textsc{DoorKey}. SR90Return is the mean return when success rate first exceeds 90\%; Final Return is the return at the end of training; Final Step is the total environment steps. MIRA variants outperform the baseline in both early and final return, with {MIRA (large)} achieving the highest values while converging fastest.}
\begin{tabular}{@{}lccc@{}}
\toprule
\textbf{Method} & \text{SR90Return↑
} & \text{Final Return↑} & \text{Final Step↓} \\
\midrule
Baseline & 0 $ \pm $ 0.002 & 0.009 $ \pm $ 0.001 & 10362 \\
MIRA (offline) & 0.233 $ \pm $ 0.087 & 0.295 $ \pm $ 0.123 & 10351 \\
MIRA (mid) & 0.284 $ \pm $ 0.065 & 0.902 $ \pm $ 0.012 & 10257 \\
MIRA (large) & 0.851 $ \pm $ 0.060 & 0.91 $ \pm $ 0.013 & 9961 \\
\bottomrule
\end{tabular}
\label{tab:freq_res_three}
\end{table}

\subsection{MiniGrid Performance Summary}
Tables~\ref{tab:main_results_return} and~\ref{tab:success_results} report detailed numerical results for all four MiniGrid tasks, including  mean returns and success rates averaged over unseen seeds. MIRA consistently outperforms both PPO and the hierarchical baseline across all environments, including the more complex ones such as \textsc{DoorKey} and \textsc{RedBlueDoor}. Welch’s t-tests~\citep{ruxton2006unequal} show no statistically significant difference between MIRA and LLM4Teach at the 0.05 level across metrics and environments (Table~\ref{tab:llm4teach_vs_mira}). These results support the aggregate performance trends in the main text (Figure~\ref{fig:llm4teach_comparison}), demonstrating that MIRA improves both final return and task completion.

\begin{table}[h]
\centering
\small
\setlength{\tabcolsep}{3pt}
\caption{Mean return on unseen seeds across MiniGrid environments. MIRA achieves high and stable success, comparable to LLM4Teach, despite requiring substantially fewer LLM queries.}
\begin{tabular}{lcccc}
\toprule
\text{Method} & \textsc{DoorKey} & \textsc{LavaCrossing} & \textsc{RedBlueDoor} & \textsc{RedBall} \\
\midrule
\text{Baseline RL} & 0.018 $\pm$ 0.016 & 0.012 $\pm$ 0.027 & 0.044 $\pm$ 0.042 & 0.329 $\pm$ 0.205 \\
\n{\text{HRL}}   &
\n{0.852 $\pm$ 0.017} & 
\n{0.798 $\pm$ 0.090} & 
\n{0.830 $\pm$ 0.021 }& 
\n{0.939 $\pm$ 0.046}\\
\text{LLM4Teach}   & \textbf{0.912} $\pm$ 0.075 & \textbf{0.884} $\pm$ 0.100 & 0.901 $\pm$ 0.082 & \textbf{0.946} $\pm$ 0.051 \\
\rowcolor{mpurple!20}
\textbf{MIRA}      & 0.898 $\pm$ 0.093 & 0.855 $\pm$ 0.132 & \textbf{0.911} $\pm$ 0.077 & 0.942 $\pm$ 0.054 \\
\bottomrule
\end{tabular}
\vspace{2pt}
\label{tab:main_results_return}
\vspace{-5pt}
\end{table}

\begin{table*}[t]
\centering
\small
\setlength{\tabcolsep}{4pt}
\caption{Success rate on unseen seeds across MiniGrid environments.
MIRA achieves consistently high success rates, matching LLM4Teach while requiring fewer queries, and outperforming baseline and HRL methods.}
\begin{tabular}{lcccc}
\toprule
\text{Method} & \textsc{DoorKey} & \textsc{LavaCrossing} & \textsc{RedBlueDoor} & \textsc{RedBall} \\
\midrule
\text{Baseline RL} & 
0.023 $\pm$ 0.017 & 
0.017 $\pm$ 0.020 & 
0.036 $\pm$ 0.043 & 
0.539 $\pm$ 0.064 \\

\n{\text{HRL} }& 
\n{0.897 $\pm$ 0.013} & 
\n{0.841 $\pm$ 0.085} & 
\n{0.892 $\pm$ 0.012 }& 
\n{0.956 $\pm$ 0.025}\\

\text{LLM4Teach} & 
0.970 $\pm$ 0.004 & 
0.931 $\pm$ 0.011 & 
0.956 $\pm$ 0.003 & 
0.958 $\pm$ 0.021 \\
\rowcolor{mpurple!20}
\textbf{MIRA} & 
0.953 $\pm$ 0.043 & 
0.913 $\pm$ 0.077 & 
0.944 $\pm$ 0.020 & 
0.956 $\pm$ 0.036 \\
\bottomrule
\end{tabular}
\vspace{2pt}
\label{tab:success_results}
\end{table*}

\begin{table*}
\centering
\caption{Welch’s t-test comparing LLM4Teach and MIRA (MR: Mean Return - SR: Success Rate). None of the differences are statistically significant at $\alpha = 0.05$.}
\begin{tabular}{lccccc}
\toprule
\textbf{Metric} & \textbf{LLM4Teach} & \textbf{MIRA} & \textbf{t} & \textbf{p} & \textbf{95\% CI} \\
\midrule
\textsc{DoorKey} (MR)        & $0.912 \pm 0.075$ & $0.898 \pm 0.093$ & 0.203 & 0.8495 & [--0.181,\ 0.209] \\
\textsc{DoorKey} (SR)        & $0.970 \pm 0.004$ & $0.953 \pm 0.043$ & 0.682 & 0.5647 & [--0.0885,\ 0.1225] \\
\textsc{LavaCrossing} (MR)   & $0.884 \pm 0.100$ & $0.855 \pm 0.132$ & 0.303 & 0.7778 & [--0.2443,\ 0.3023] \\
\textsc{LavaCrossing} (SR)   & $0.931 \pm 0.011$ & $0.913 \pm 0.077$ & 0.401 & 0.7260 & [--0.1681,\ 0.2041] \\
\textsc{RedBlueDoor} (MR)    & $0.901 \pm 0.082$ & $0.911 \pm 0.077$ & --0.154 & 0.8851 & [--0.1906,\ 0.1706] \\
\textsc{RedBlueDoor} (SR)    & $0.956 \pm 0.003$ & $0.944 \pm 0.020$ & 1.028 & 0.4081 & [--0.0362,\ 0.0602] \\
\textsc{RedBall} (MR)        & $0.946 \pm 0.051$ & $0.942 \pm 0.054$ & 0.093 & 0.9302 & [--0.1152,\ 0.1232] \\
\textsc{RedBall} (SR)        & $0.958 \pm 0.021$ & $0.956 \pm 0.036$ & 0.083 & 0.9387 & [--0.0717,\ 0.0757] \\
\bottomrule
\end{tabular}
\label{tab:llm4teach_vs_mira}
\end{table*}
\subsubsection{t-test: MIRA vs. LLM4Teach}

To assess whether the performance differences between LLM4Teach and MIRA are statistically significant, we conduct Welch’s t-tests on the evaluation metrics across environments and seeds. Welch’s t-test is a two-sample statistical test that does not assume equal variance. As shown in Table~\ref{tab:llm4teach_vs_mira}, none of the differences reach significance at the $\alpha = 0.05$ level. This suggests that MIRA performs comparably to LLM4Teach across all reported metrics, despite MIRA having small lower final reward.

\section{Limitations} \label{lim}

While MIRA improves sample efficiency and reduces reliance on frequent LLM queries, it also comes with natural trade-offs. The method relies on offline LLM outputs to initialize its memory graph, which, if they include misleading information or are not well aligned with the environment dynamics, can slow convergence or increase the need for online queries. Our screening and pruning mechanisms reduce this risk, and in practice it is largely a limitation of current LLMs that is expected to diminish as models improve. MIRA also introduces shaping terms that require hyperparameter tuning to avoid instability between the actor and critic. We find, however, that they can be adjusted with standard procedures. Finally, our current study focuses on discrete action spaces; extending MIRA to continuous domains without discretization is a natural next step.

\section{Reproducibility} \label{rep}

Experiments were run on both a Linux server with Intel Xeon E5-2630 v4 CPUs (40 threads) and an Apple M2 (8-core CPU, 10-core GPU, 16GB unified memory). All LLM models used in our experiments correspond to the publicly available versions released in the first week of August 2025.

\subsection{Simulation Platforms}

\subsubsection{Gymnasium Toy text}

\textsc{Environment Details.}
Horizon indicates the maximum number of steps per episode before automatic termination (i.e., maxsteps in the environment configuration).
\begin{table}[H]
\centering
\label{tab:frozenlake_details}
\caption{FrozenLake environment details.}
\begin{tabular}{@{}lc@{}}
\toprule
\textbf{Property} & Value \\
\midrule
Observation Type & Discrete \\
Horizon & 200 \\
Reward Sparsity & Sparse \\
Action Space & 4 (tabular) \\
Dynamics & Slippery, irreversible \\
\bottomrule
\end{tabular}

\end{table}

\textsc{Hyperparameter.} Table~\ref{tab:froz} provides the main specifications of FrozenLake for \texttt{PPOConfig} in RLlib.
\begin{table}[h]
\centering
\caption{Hyperparameters of \textsc{FrozenLake}}
\begin{tabular}{@{}lccc@{}}
\toprule
\textbf{Parameter} & \text{Value
} \\
\midrule
Learning rate & $1 \times 10^{-4}$ \\
Batch size & 512 \\
Mini-batch size & 64 \\
Number of epochs & 4 \\
Entropy coefficient & 0.01 \\
Discount factor ($\gamma$) & 0.99 \\
GAE lambda ($\lambda$) & 0.95 \\
Utility ($\xi$) & [0.9]\\
Batch mode & ``complete episodes" \\
\bottomrule
\end{tabular}
\label{tab:froz}
\end{table}

\subsubsection{Minigrid and BabyAI}
\textsc{Environment Details.}
Horizon indicates the maximum number of steps per episode before automatic termination (i.e., maxsteps in the environment configuration).
\begin{table}[H]
\centering
\label{tab:mini}
\caption{MiniGrid suite details.}
\begin{tabular}{@{}lc@{}}
\toprule
\textbf{Property} & Value \\
\midrule
Observation Type & RGB \\
Reward Sparsity & Sparse and delayed \\
Action Space & 7 (tabular) \\
View Size & 7 \\
Horizon & 300 \\
\bottomrule
\end{tabular}
\end{table}
\begin{table}[H]
\centering

\label{tab:minigrid_dynamics}
\renewcommand{\arraystretch}{1}
\small
\caption{MiniGrid environments and their dynamics.}
\begin{tabular}{@{}ll@{}}
\toprule
\textbf{Environment} & \textbf{Dynamics} \\
\midrule
\setlength{\tabcolsep}{1pt}
\small
\textsc{RedBall}          & Reversible \\
\textsc{RedBlueDoor}      & Irreversible \\
\textsc{LavaCrossing}          & Irreversible \\ 
\textsc{DoorKey}            & Subgoal seq. \\
\textsc{Distracted DoorKey } & +Visual distractors  \\
\bottomrule
\end{tabular}
\end{table}

\textsc{Hyperparameter.}  Tables~\ref{tab:dk}-~\ref{tab:rb} provides the main specifications of all the MiniGrid environments for \texttt{PPOConfig} in RLlib.

\begin{table}[t]
\centering
\begin{minipage}{0.48\linewidth}
\centering
\small
\caption{Hyperparameters of \textsc{DoorKey}}
\begin{tabular}{@{}lc@{}}
\toprule
\textbf{Parameter} & \textbf{Value} \\
\midrule
Learning rate & $2.5 \times 10^{-4}$ \\
Batch size & 1024 \\
Mini-batch size & 64 \\
Number of epochs & 4 \\
Entropy coefficient & 0.01 \\
Discount factor ($\gamma$) & 0.99 \\
GAE lambda ($\lambda$) & 0.95 \\
Utility ($\xi$) & [0.25, 0.15] \\
Batch mode & ``complete episodes'' \\
\bottomrule
\end{tabular}
\label{tab:freq_res}
\end{minipage}\hfill
\begin{minipage}{0.48\linewidth}
\centering
\small
\caption{Hyperparameters of \textsc{LavaCrossing}}
\begin{tabular}{@{}lc@{}}
\toprule
\textbf{Parameter} & \textbf{Value} \\
\midrule
Learning rate & $2.5 \times 10^{-4}$ \\
Batch size & 1024 \\
Mini-batch size & 64 \\
Number of epochs & 4 \\
Entropy coefficient & 0.01 \\
Discount factor ($\gamma$) & 0.99 \\
GAE lambda ($\lambda$) & 0.95 \\
Utility ($\xi$) & [0.3] \\
Batch mode & ``complete episodes'' \\
\bottomrule
\end{tabular}

\label{tab:dk}
\end{minipage}
\end{table}

\begin{table}[t]
\centering
\begin{minipage}{0.48\linewidth}
\centering
\small
\caption{Hyperparameters of \textsc{RedBlueDoor}}
\begin{tabular}{@{}lc@{}}
\toprule
\textbf{Parameter} & \textbf{Value} \\
\midrule
Learning rate & $5 \times 10^{-5}$ \\
Batch size & 1024 \\
Mini-batch size & 64 \\
Number of epochs & 4 \\
Entropy coefficient & 0.01 \\
Discount factor ($\gamma$) & 0.99 \\
GAE lambda ($\lambda$) & 0.9 \\
Utility ($\xi$) & [0.25] \\
Batch mode & ``complete episodes'' \\
\bottomrule
\end{tabular}
\label{tab:rbd}
\end{minipage}\hfill
\begin{minipage}{0.48\linewidth}
\centering
\small
\caption{Hyperparameters of \textsc{RedBall}}
\begin{tabular}{@{}lc@{}}
\toprule
\textbf{Parameter} & \textbf{Value} \\
\midrule
Learning rate & $2 \times 10^{-4}$ \\
Batch size & 512 \\
Mini-batch size & 64 \\
Number of epochs & 4 \\
Entropy coefficient & 0.01 \\
Discount factor ($\gamma$) & 0.99 \\
GAE lambda ($\lambda$) & 0.95 \\
Utility ($\xi$) & [0.2] \\
Batch mode & ``complete episodes'' \\
\bottomrule
\end{tabular}
\label{tab:rb}
\end{minipage}
\end{table}

\textsc{Observation Space} In MiniGrid environments, the agent receives an RGB image of the grid, which is passed through a convolutional encoder~\ref{fig:cnn} to extract spatial features relevant for navigation and interaction. 
\begin{figure}[H]
  \centering
  % \vspace{-5mm}
    \includegraphics[width=.7\linewidth]{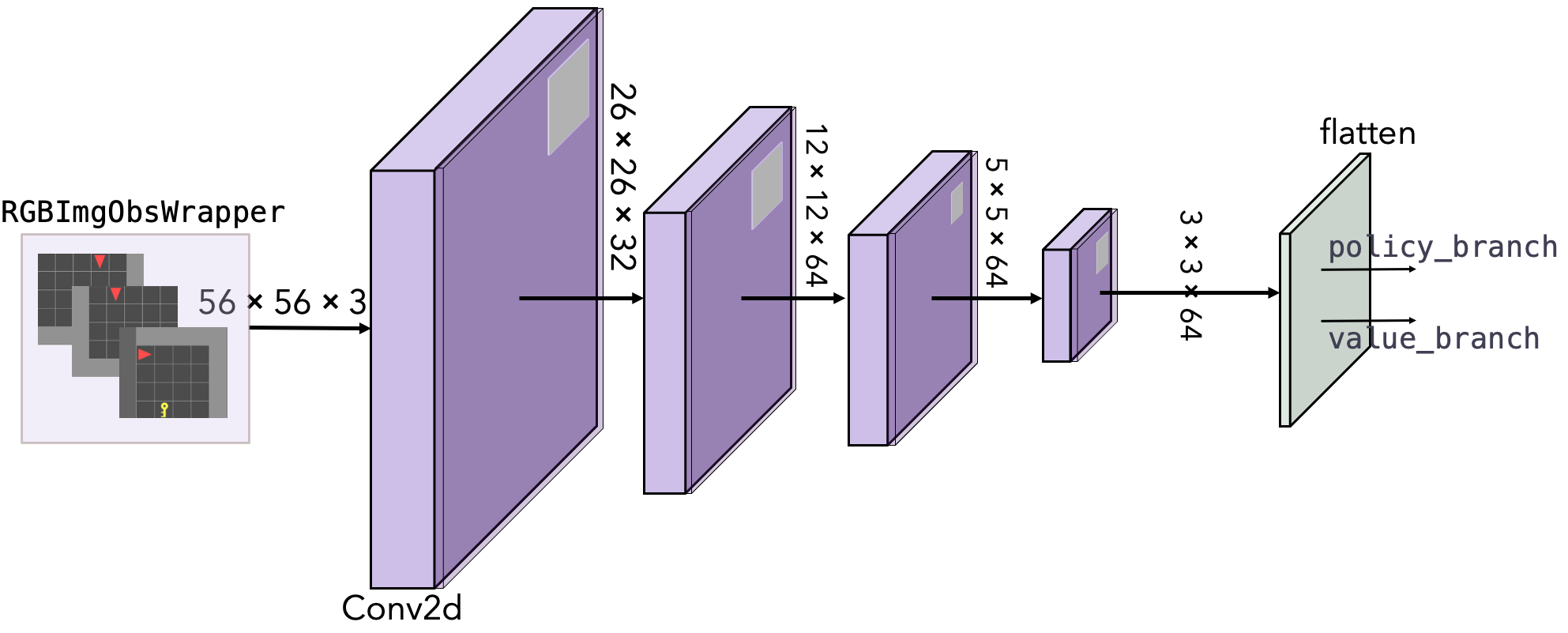}
    \caption{Convolutional encoder architecture used to process the agent’s $56 \times 56 \times 3$ RGB observation in MiniGrid environments. The input passes through a series of Conv2D layers, reducing spatial dimensions while increasing channel depth. The final activation is flattened and fed to both policy and value heads. This encoder captures spatial layout, object presence, and agent-centric context for decision-making.}
    \label{fig:cnn}
\end{figure}
This CNN processes the visual input into a compact feature vector, capturing object positions, colors, and layout structure. 
The resulting embedding is concatenated with a learned directional encoding and passed to the policy and value heads for action selection and value estimation.

\subsection{LLM Confidence Settings}

For completions where token-level likelihoods are available, confidence is computed using an exponential of the geometric-mean log-probability ($\exp\left({1}/{L} \sum \log p_i\right)$) with a fixed likelihood threshold of $\tau  \ge 0.65$. When likelihoods are unavailable, we obtain k independent completions ($k = 3$) and retain only outputs that pass a majority-consistency test with a fixed agreement threshold $\tau \ge 2/3$.

% \subsection{Utility Computer}

\section*{Use of Large Language Models (LLMs)} \label{disc}

During the preparation of this manuscript, the authors used OpenAI’s ChatGPT to assist with grammar and readability. No research ideas, technical content, or analysis were generated by the tool. All content was reviewed and verified by the authors, who take full responsibility for the final version.
\end{document}